\newcommand{\yadi}{\nomenclature}
\newenvironment{proof}{\noindent{\sc Proof.}}{\qed}
\newtheorem{theorem}{Theorem}[section]
\newtheorem{lemma}{Lemma}[section]
\newtheorem{cor}{Corollary}[section]
\newtheorem{remark}{Remark}[section]
\newtheorem{definition}{Definition}[section]
\newtheorem{prop}{Proposition}[section]
\newtheorem{uda}{Example}[section]
\newcommand{\qed}{$\blacksquare$}
\def\bhag#1{\noindent
\setcounter{equation}{0}
\section{#1}
}
\newcommand\gattha[2]{\genfrac{}{}{0pt}{}{#1}{#2}}
\def\HH{{\mathbb H}}
\def\RR{{\mathbb R}}
\def\CC{{\mathbb C}}
\def\ZZ{{\mathbb Z}}
\def\SS{{\mathbb S}}
\def\TT{\mathbb T}
\def\bs#1{{\boldsymbol{#1}}}
\def\x{\mathbf{x}}
\def\k{\mathbf{k}}
\def\y{\mathbf{y}}
\def\u{\mathbf{u}}
\def\w{\mathbf{w}}
\def\v{\mathbf{v}}
\def\O{{\cal O}}
\def\C{{\mathcal C}}
\def\YY{\mathbb{Y}}
\def\esssup{\mathop{\hbox{\textrm{ess sup}}}}
\def\be{\begin{equation}}
\def\ee{\end{equation}}
\def\bea{\begin{eqnarray}}
\def\eea{\end{eqnarray}}
\def\disp{\displaystyle}
\def\donchitre#1#2{\vskip 6.5cm\noindent
\parbox[t]{1in}{\special{eps:#1.eps x=6.5cm y=5.5cm}}
\hbox to 7cm{}\parbox[t]{0.0cm}{\special{eps:#2.eps x=6.5cm y=5.5cm}}}
\def\XX{{\mathbb X}}
\def\BB{{\mathbb B}}
\def\bs#1{{\boldsymbol{#1}}}
\def\gs{\gtrsim}
\def\ls{\lesssim}
\title{Approximation by non-symmetric networks for cross-domain learning}
\author{
 H.~N.~Mhaskar\thanks{
Institute of Mathematical Sciences, Claremont Graduate University, Claremont, CA 91711. 
\textsf{email:} hrushikesh.mhaskar@cgu.edu.
The research is  supported in part by NSF grant DMS 2012355, and ONR grants N00014-23-1-2394, N00014-23-1-2790..}
 }
 \date{}
\begin{document}
\maketitle
\begin{abstract}
For the past 30 years or so, machine learning has stimulated a great deal of research in the study of approximation capabilities (expressive power) of a multitude of processes, such as approximation by shallow or deep neural networks, radial basis function networks, and a variety of kernel based methods. 
Motivated by applications such as invariant learning, transfer learning, and synthetic aperture radar imaging, we initiate in this paper a general approach to study the approximation capabilities of kernel based networks using non-symmetric kernels. 
While singular value decomposition is a natural instinct to study such kernels, we consider a more general approach to include the use of a family of kernels, such as generalized translation networks (which include neural networks and translation invariant kernels as special cases) and rotated zonal function kernels.
Naturally, unlike traditional kernel based approximation, we cannot require the kernels to be positive definite.
In particular, we obtain estimates on the accuracy of uniform  approximation of  functions in a Sobolev class by  ReLU$^r$ networks when $r$ is not necessarily an integer. 
Our general results apply to the approximation of functions with small smoothness compared to the dimension of the input space.
\end{abstract}

\noindent\textbf{Keywords:} Neural and kernel based approximation, cross-domain learning, degree of approximation.\\

\noindent\textbf{AMS MSC2020 classification:} 68T07, 41A46

\bhag{Introduction}\label{bhag:intro}
We will provide general introductory remarks in Section~\ref{bhag:genintro}, followed by a more technical discussion of a motivating example involving approximation by shallow, periodic, ReLU networks in Section~\ref{bhag:techintro}. 
The outline of the paper is given in Section~\ref{bhag:outline}.
\subsection{General introduction}\label{bhag:genintro}
A fundamental problem of machine learning is the following. 
Given data of the form $\{(x_j,y_j)\}$, where $y_j$'s are noisy samples of an unknown function $f$ at the points $x_j$'s, find an approximation to $f$. 
Various tools are used for the purpose; e.g., deep and shallow neural networks, radial basis function (RBF) and other kernel based methods.
Naturally, there is a great deal of research about the dependence of the accuracy in approximation on the mechanism used for approximation (e.g., properties of the activation funtion for the neural networks, or the kernel in kernel based methods), the dimension of the input data, the complexity of the model used (e.g., the number of nonlinearities in a neural or RBF network), smoothness of the function, and other factors, such as the size of the coefficients and weights of a neural network.
In \cite{dingxuanpap}, we have argued that the study of approximation capabilities of a deep network can be reduced to that of the capabilities of shallow networks; the advantage of using deep networks stemming from the fact that they can exploit any inherent compositional structure in the target function, which a shallow network cannot. 
There are also other efforts \cite{belkin2018understand} to argue that an understanding of kernel based networks is essential for an understanding of deep learning.
In this paper, we therefore focus on the approximation capabilities of shallow networks.
It is not difficult to extend these results to deep networks using the ideas in \cite{dingxuanpap}.

A (shallow) neural network with activation function $G$ is a function on a Euclidean space $\RR^q$ of the form $\x\mapsto \sum_{j=1}^M a_jG(\x\cdot\w_j+b_j)$, where $\w_j\in\RR^q$, $a_j, b_j\in\RR$.
We note that by ``dimension lifting'', i.e., by writing $\mathbf{X}=(\x,1)$, $\mathbf{W}_j=(\w_j,b_j)$, we can express a neural network in the form $\sum_{j=1}^M a_jG(\mathbf{X}\cdot\mathbf{W}_j)$.
More generally, 
 a kernel based network with kernel $G$ has the form $\x\mapsto \sum_{j=1}^M a_jG(\x,\w_j)$, where $\x, \w_j\in\RR^q$ and $a_j\in\RR$. 

A natural class of functions to be approximated by kernel based networks is the class of functions of the form
\be\label{eq:functionform}
f(x)=\int_\XX G(x,y)d\tau(y),
\ee
where $\XX$ is the input space, and $\tau$ is a signed measure on $\XX$ having bounded total variation.
The integral expression in \eqref{eq:functionform} and the class of functions are sometimes called an infinite (or continuous) network and the variational space respectively. 
A lucid account of functions satisfying \eqref{eq:functionform} from the point of view of reproducing kernel Banach spaces is given in \cite{bartolucci2023understanding}.
In the context of RBF kernels, the class of functions is known as the native space for the kernel $G$.
In this paper, we will use the term ``native space of $G$'' more generally to refer to the class of functions satisfying \eqref{eq:functionform}, and the term ``infinite network'' to denote the integral expression in that equation.

In the literature on approximation theory, it is customary to take $\XX$ to be the unit cube of $\RR^q$ or the torus $\TT^q=\RR^q/(2\pi\ZZ)^q$ or the unit (hyper)-sphere embedded in $\RR^{q+1}$. 
It is unrealistic to assume that in most practical machine learning problems, the data is actually spread all over these domains.
The so called manifold hypothesis assumes that the data is drawn from some probability distribution supported on some sub-manifold $\XX$ of dimension $q$ embedded in a high dimensional ambient space $\RR^Q$. 
There are some recent algorithms proposed to test this hypothesis \cite{fefferman2016testing}.
The manifold itself is not known, and a great deal of research in this theory is devoted to studying the geometry of this manifold.
For example, there are some recent efforts to approximate an atlas on the manifold using deep networks (e.g., \cite{coifman_deep_learn_2015bigeometric, chui_deep, relu_manifold_chen2019,schmidt2019deep}). 
A more classical approach is to approximate the eigenvalues and eigenfunctions of the Laplace-Beltrami operator on the manifold  using the so called graph Laplacian that can be constructed directly from the data (e.g., \cite{niyogi2, belkinfound, lafon, singer}). 
Starting with \cite{mauropap}, this author and his collaborators carried out an extensive investigation of function approximation on manifolds (e.g., \cite{eignet, modlpmz, compbio, heatkernframe, mhaskar2020kernel}).
During this research, we realized that the full strength of differentiability structure on the manifold is not necessary for studying function approximation.
Our current understanding of the properties of $\XX$ which are important for this purpose is encapsulated in the Definition~\ref{def:ddrdef} of data spaces.

There are two approaches to studying approximation bounds for functions using a kernel based network.
One approach is to treat the infinite network as an expectation of a family of random variables of the form $|\tau|_{TV}G(\x,\circ)h(\circ)$ with respect to the probability measure $|\tau|/|\tau|_{TV}$, where $h$ is the Radon-Nikodym derivative of $\tau $ with respect to $|\tau|$, and use concentration inequalities to obtain a discretized kernel based network. 
The approximation bounds in terms of the size $M$ of the network obtained in this way are typically independent of the dimension of the input space $\XX$, but are limited to the native spaces, e.g., \cite{klusowski2016uniform, barron1993, kurkova1, kurkova2, mhaskar2020dimension}.
We will refer to this approach as the probability theory approach.
Another approach which leads to dimension dependent bounds for more classical function spaces, such as the Sobolev classes, is the following.
We first approximate $f$ by a ``diffusion polynomial'' $P$ (cf. Section~\ref{bhag:ddr} for details). 
This polynomial is trivially in the native space. 
Using special properties of a Mercer expansion of $G$ and quadrature formulas, one approximates $P$ by kernel based networks (e.g., \cite{mhaskar1995degree, mnw2, zfquadpap, eignet, mhaskar2020kernel}). 
This approach usually requires $G$ to be a positive definite kernel, in the sense that all the coefficients in the Mercer expansion are positive.  Some tricks can be used to circumvent this restriction in special cases, such as ReLU networks (e.g., \cite{bach2014,sphrelu}).
We will refer to this approach as the approximation theory approach.
The probability theory approach relies entirely on very elementary properties of $G$, such as its supremum norm and Lipschitz continuity.
The approximation theory approach takes into account a more detailed structure of $G$ as well as the smoothness properties of the functions in a class potentially much larger than the native space. 

In all of the works which we are familiar with so far, the kernel $G$ is a symmetric kernel. 
There are many applications, where it is appropriate to consider non-symmetric kernels.
We give a few examples.
\begin{itemize}
\item In transfer learning, we wish to use the parameters trained on one data set  living on a space $\YY$ to learn a function on another data set living on the space $\XX$.
In this case, it is natural to consider a kernel $G:\XX\times\YY\to\RR$.
\item Another example is motivated by synthetic aperture radar imaging, where the observations have the same form as \eqref{eq:functionform}, but the integration is taken over a different set $\YY$ than the argument $x\in\XX$ \cite{cheney2009fundamentals}. 
The set $\YY$ represents the target from which the radar waves are reflected back and $\XX$ is the space defined by the beamformer/receiver.
\item In recent years, there is a growing interest in random Fourier features \cite{rahimi2007random}.
For example, a computation of the Gaussian kernel $\exp(-|\x_j-\x_k|^2/2)$ for every pair of points from $\{\x_j\}_{j=1}^M$ would take $\O(M^2)$ flops.
Instead, one evaluates a rectangular matrix of the form $A_{j,\ell}=\exp(i\x_j\cdot\omega_\ell)$ (the random features) for a large number of random samples $\omega_\ell$ drawn from the standard normal distribution. 
The kernel can be computed more efficiently using the tensor product structure of the features and a Monte-Carlo discretization of the expected value of the matrix $AA^T$.
When $\XX$ is a data space, the inner product has no natural interpretation, and one needs to consider more general random processes. 
Of course, the measure space for the probability measure generating the random variables is different from $\XX$.
Rather than computing the expected value of a product of two such processes, it is natural to wonder whether one could approximate a function directly using these modified random features. 
\item In image analysis, we may have to predict the label of an image based on data that consists of images rotated at different angles.
It is then natural to look for kernels of the form $(1/m)\sum_{\ell=1}^m  G(\x,R_\ell \y)$ where $R_\ell$'s are the rotations involved in the data set (e.g., \cite{poggio2016visual}).
\item In an early effort to study neural networks and translation invariant kernel based networks in a unified manner, we introduced the notion of a generalized translation network (GTN) in \cite{mhaskar1995degree}. 
Let $q\ge d\ge 1$ be integers.
The notion of generalized translation networks involves a family of kernels of the form $G(A_\ell\x-\y)$, where $\x\in\TT^q=\RR^q/(2\pi\ZZ)^q$, $\y\in \TT^d$, and $A_\ell$'s are $d\times q$ matrices with integer entries. 
The work \cite{mhaskar1995degree} gives some rudimentary bounds on approximation by GTN's, but the topic was not studied further in the literature as far as we are aware.
\end{itemize}

In this paper, we study approximation properties of  non-symmetric kernels in different settings: generalized translation networks (Example~\ref{uda:gentransnet}), zonal function networks including rotations as described above (Example~\ref{uda:sbf}), and  general non-symmetric kernels,  e.g., defining random processes on data spaces via Karhunen-Lo\'eve expansions (Example~\ref{uda:randomfeature}). 
We will prove a ``recipe theorem'' (Theorem~\ref{theo:maintheo}) which leads to these settings in a unified, but technical, manner. 
Together with approximation of functions in the native class, we will also study the question of simultaneous approximation of certain ``derivatives'' (cf. Definition~\ref{def:deroperator}) of the function by the corresponding derivatives of the networks themselves.

In recent years, ReLU networks and power ReLU networks, which use an activation function of the form $t\mapsto\max(0,t)^r$ have been studied widely.
From an approximation theory point of view our paper \cite{multilayer} is an early paper where a multivariate spline is expressed explicitly as a deep ReLU$^r$ network, so that approximation by such networks is immediately reduced to that by spline approximation.
In  \cite{bach2023relationship}, it is demonstrated how certain SBF and RBF kernels can be viewed in connection with approximation by shallow ReLU$^r$ networks. 
Some other relevant recent papers are, for example, \cite{xu2020finite, siegel2022sharp, klusowski2016uniform, ma2022uniform, mhaskar2020dimension, mhaskar2023tractability}.
We will illustrate our theory for these networks separately in Section~\ref{bhag:relu}.

Apart from dealing with non-symmetric kernels, some of the novelties of  our paper are the following.
\begin{itemize}
\item Although our analysis applies equally well to symmetric kernels, we do not require the kernels to be positive definite.
\item We combine the probability theory approach with the approximation theory approach focusing on approximation of ``rough'' functions on data spaces; i.e., functions for which the smoothness parameter is substantially smaller than the input dimension.
This means that we don't use the smoothness properties of the kernel $G$ strongly enough to use quadrature formulas, but use an initial approximation by diffusion polynomials as in the approximation theory approach, followed by ideas from probability theory.
\item The results are novel when applied to   zonal function networks with activation function $t_+^r$.
As far as we are aware, the known results are  for the native space, whose nature is not well understood since the kernels are not positive definite. 
Moreover, it is not clear whether (and which) Sobolev spaces can be characterized as intermediate spaces between the space of continuous functions and the native space for these activation functions.
\item Our results hold for such networks even when $r$ is not an integer.
\end{itemize}

\subsection{Technical introduction}\label{bhag:techintro}
In this section, we discuss an example to motivate the general theory described in the rest of the paper. 
For simplicity of exposition, the notation used in this section may not be the same as in the rest of the paper.

For integer $q\ge 1$ Let $\TT^q$ be the quotient space $\RR^q/(2\pi\ZZ)^q$.
The space of continuous functions on $\TT^q$ (i.e., the space of continuous functions on $\RR^q$ which are $2\pi$-periodic in each variable) is denoted by $C(\TT^q)$, and is equipped with the uniform norm $\|\cdot\|$.
If $\gamma>0$, then there is a unique integer $r$ and $\alpha\in (0,1]$ such that $\gamma=r+\alpha$. 
In this section only, the space $W_{q,\gamma}$ consists of $f\in C(\TT^q)$ which has $r$ derivatives with respect to each variable, and each of the derivatives $\mathcal{U}(f)$ of order $r$ satisfies
$$
|\mathcal{U}(f)(\x+\mathbf{h})+\mathcal{U}(f)(\x-\mathbf{h})-2 \mathcal{U}(f)(\x)|\le L|\mathbf{h}|_\infty^\alpha,
$$
where  the addition in $\x\pm\mathbf{h}$ is interpreted modulo $2\pi$ and $|\cdot|_p$ denotes the $\ell^p$ norm for a vector.
We note that the above condition can be expressed also in the form
\be\label{eq:modcont}
\omega_2(\mathcal{U}(f), \delta)=\sup_{|\mathbf{h}|_\infty\le \delta}\|\mathcal{U}(f)(\circ+\mathbf{h})+\mathcal{U}(f)(\circ-\mathbf{h})-2\mathcal{U}(f)\|_\infty \le L\delta^\alpha,
\ee
which can be generalized easily to other $L^p$ norms\footnote{The function $\omega_2$ is often referred to as the modulus of smoothness or second order modulus of continuity, the suffix $2$ referring to the fact that a second order difference is involved in the definition.}.
Let $\HH_n^q$ be the space of all trigonometric polynomials of spherical order $<n$; i.e.,
$$
\HH_n^q=\mathsf{span}\{\exp(i\k\cdot \circ) : |\k|_2<n\}.
$$
The central quantity of interest in approximation theory is
$$
E_{q,n}(f)=\min_{P\in\HH_n^q}\|f-P\|.
$$
It is well known (cf. \cite{timanbk}) that
$E_{q,n}(f)=\O(n^{-\gamma})$ \textbf{if and only if} $f\in W_{q,\gamma}$. 

We may therefore define a norm on $W_{q,\gamma}$ by
\be\label{eq:introsobnorm}
\|f\|_{W_{q,\gamma}} =\|f\| +\sup_{n\ge 1} n^\gamma E_{q,n}(f).
\ee
A popular activation function in the study of neural networks is the ReLU function: $t_+=\max(t,0)$. This is the solution of the initial value problem
$$
u''=\delta_0, \qquad u(0)=u'(0)=0,
$$
where $\delta_0$ is the Dirac delta supported at $0$.
The periodic analogue of this function is the Bernoulli spline given by
$$
\Gamma(t)=\sum_{\gattha{j\in\ZZ}{j\not=0}}\frac{e^{ijt}}{j^2}.
$$
Accordingly, the periodic ReLU network has the form $\sum_{\k: |\k|_\infty\le M} a_\k\Gamma(\k\cdot \circ -b_\k)$ for some real numbers $a_\k$, $b_\k$. 
In this section, we denote the set of all such networks by $\mathcal{G}_M$.

In this section, we examine the approximation of $f\in C(\TT^q)$ from $\mathcal{G}_M$. 
Analogously to the degree of approximation by trigonometric polynomials, we define (in this section only) \yadi{$\mathcal{E}_M(f)$}{Degree of approximation of $f$ by $M$-term neural networks, \eqref{eq:neuraldegree}}
\be\label{eq:neuraldegree}
\mathcal{E}_M(f)=\inf_{G\in \mathcal{G}_M}\|f-G\|.
\ee

Perhaps, the most popular way to study this problem is to consider the subspace $V$ of $C(\TT^q)$, known as the variation (or native) space for $\Gamma$. 
This is the space of all functions $f$ which can be expressed in the form
\be\label{eq:introvariation}
f(\x)=\int \Gamma(\k\cdot\x -b)d\mu(\k,b),
\ee
for some measure $\mu$ defined on $\ZZ^q\times\RR$, which has a finite total variation $|\mu|_{TV}$. 
In order to obtain bounds on uniform approximation, it is customary to assume that the measure $\mu$ is supported on some compact set, say $([-K,K]\cap\ZZ)^q\times [-a,a]$. 
An example of the pure probabilistic approach alluded to in the introduction is the following.

Using H\"offding's inequality and the Lipschitz continuity of $\Gamma$, it is not difficult to prove that 
\be\label{eq:barrondeg}
\mathcal{E}_M(f)\le c\left(\frac{\log M}{M}\right)^{1/2}|\mu|_{TV};
\ee
i.e., 
for any $f\in V$, \emph{there exists} $G\in\mathcal{G}_M$ such that
\be\label{eq:introbarron}
\|f-G\|\le c\left(\frac{\log M}{M}\right)^{1/2}|\mu|_{TV},
\ee
where $c$ is a positive constant independent of $M$ and $\mu$, but may depend upon $q, K,a$.
The formulation \eqref{eq:barrondeg} emphasizes the fact that  an explicit construction of $G$ satisfying \eqref{eq:introbarron} is not implied.

In the sequel, we use the notation $A\lesssim B$ to denote the fact that $A\le cB$ for some positive constant $c$ independent of the target function and $M$, but which could depend upon other fixed quantities of interest in the discussion. 
The notation $A\sim B$ denotes $A\lesssim B$ and $B\lesssim A$.

We make some observations here:
\begin{enumerate}
\item The error bound in \eqref{eq:introbarron} is independent of the dimension $q$, which makes it very attractive.
\item The class $V$ is difficult to describe using standard definition of smoothness, such as the number of derivatives, etc., although it may be possible to describe some conditions on mixed derivatives to ensure the membership in $V$ in this simple case.
\item Given data of the form $\{(\x_j,f(\x_j))\}_{j=1}^N$, it is tempting to find a network $G$ such that \eqref{eq:introbarron} is satisfied, for example, by solving an optimization problem of the form
\be\label{eq:regularization}
\mbox{ Minimize } \sum_{j=1}^N\left(f(\x_j)-\sum_{\k: |\k|_\infty\le M} a_\k\Gamma(\k\cdot \x_j -b_\k)\right)^2 +\lambda \sum_{\k: |\k|_\infty\le M} \left(|a_\k|+|\w_\k|_1+|b_\k|\right),
\ee
However, since the estimate \eqref{eq:introbarron} is not based on function evaluations, it is not clear that an
 empirical risk minimizer which imposes the additional constraint of having to  use such data (e.g., by solving the above minimization problem) will satisfy the error bound guaranteed by \eqref{eq:introbarron}.
 In fact, if the only information on $f$ is in terms of a small  number of its derivatives, the width results would imply that the bound \eqref{eq:introbarron} will not be satisfied.
 \item We have shown in \cite{mhaskar1997smooth} that if the activation function of a sequence of neural networks converging uniformly to  a function $f$ is smoother than $f$ in terms of number of derivatives then either the coefficients or the weights cannot be bounded; i.e., the regularization term in \eqref{eq:regularization} cannot work in this context.
\end{enumerate}

A totally constructive procedure (an example of the method referred to as pure approximation theory method in the introduction) for approximation from $\mathcal{G}_M$ is given in \cite{mhaskar1995degree, indiapap}. 
If $f\in W_{q,\gamma}$, then there exists $P=\sum_\k \hat{P}(\k)\exp(i\k\cdot\circ)\in\HH_n^q$ such that
\be\label{eq:introbest}
\|f-P\|\ls n^{-\gamma}\|f\|_{W_{q,\gamma}}.
\ee 
Explicit constructions based on data of the form $\{(\xi, f(\xi))\}$ for $\O(n^q)$ samples $\xi$ chosen randomly from the uniform distribution on $\TT^q$ are given in \cite{indiapap}. 
It is not difficult to verify that for $\x\in\TT^q$,
\be\label{eq:intro_poly_net}
P(\x)=\frac{1}{2\pi}\sum_\k \hat{P}(\k)\int_\TT \Gamma(\k\cdot\x-t)e^{it}dt.
\ee
We may discretize the integrals above using the trapezoidal rule and keep track of the errors using standard estimates to obtain for each $\k\in\ZZ^q$, $|\k|_2<n$,
a \textbf{pre-fabricated} network $G_\k\in \mathcal{G}_N$ such that
\be\label{eq:intropredesigned}
\left\|\frac{1}{2\pi}\int_\TT \Gamma(\k\cdot\x-t)e^{it}dt-G_\k\right\|\lesssim 1/N.
\ee 
Hence,  with $G=\sum_{\k}\hat{P}(\k)G_k$,
\be\label{eq:introPapprox}
\|P-G\| \ls \frac{\sum_\k|\hat{P}(\k)|}{N}.
\ee
Since $f\in W_{q,\gamma}$, our explicit constructions for $P$ show that $\sum_{\k\in\ZZ^q}|\hat{P}(\k)|^2|\k|_\infty^{2\gamma} \ls \|f\|_{q,\gamma}^2$.
Using Schwarz inequality, this  leads to 
\be\label{eq:introsum_of_coeff}
\sum_{\k}|\hat{P}(\k)|\ls \|f\|_{q,\gamma}\times
\begin{cases}
n^{q/2-\gamma}, &\mbox{ if $\gamma<q/2$},\\
\sqrt{\log n}, &\mbox{ if $\gamma=q/2$}\\
1, &\mbox{ if $\gamma>q/2$}.
\end{cases}
\ee
We now pick $N$ so that the upper bound in \eqref{eq:introPapprox} is $1/n^\gamma$.
 Together with \eqref{eq:intropredesigned} and \eqref{eq:introbest}, this leads to a network $G\in \mathcal{G}_M$ with $M\sim Nn^q$ such that 
\be\label{eq:introconst}
\mathcal{E}_M(f)\le \|f-G\|\ls \|f\|_{W_{q,\gamma}}\times
\begin{cases}
M^{-2\gamma/(3q)}, &\mbox{ if $\gamma<q/2$},\\
\disp\left(\frac{M}{\log M}\right)^{-\gamma/(	q+\gamma)}, &\mbox{ if $\gamma=q/2$}\\
M^{-\gamma/(q+\gamma)}, &\mbox{ if $\gamma>q/2$}.
\end{cases}.
\ee 
We note that this estimate is dimension dependent, but the network is obtained totally constructively using the data $\{(\xi, f(\xi))\}$. 
In this sense, it is stronger than the estimate \eqref{eq:barrondeg}.
There is no training required other than the solution of an underdetermined system of linear equations for obtaining the quadrature formulas defining the coefficients $\hat{P}(\k)$.
Thus, if the same points $\xi$ are used to sample different functions $f$, the network can be easily adapted by changing the coefficients of the prefabricated networks $G_\k$.
We remark that if a smooth version of the ReLU is chosen instead, as described in \cite{mhaskar2019analysis}, then the right hand side of the estimate \eqref{eq:intropredesigned} can be improved to $\exp(-cN)$ for  some positive constant $c$. 
The estimate \eqref{eq:introconst} then improves to 
\be\label{eq:introconstsmooth}
\|f-G\|\ls \left(\frac{M}{\log M}\right)^{-\gamma/q},
\ee
which is known be optimal in the sense of nonlinear widths \cite{devore1989optimal} , up to a logarithmic factor.
In this case, it can also be shown using ideas in \cite{mhaskar2019analysis} (cf. \cite{mhaskar1995degree}) that if $f\in W_{q,\gamma}$ for some $\gamma>m$, then for any derivative $\mathcal{U}$ of order $k\le m$, the \textbf{same network} that yields the bound \eqref{eq:introconstsmooth} also satisfies
\be\label{eq:introsimulapprox}
\|\mathcal{U}(f)-\mathcal{U}(G)\|\ls \left(\frac{M}{\log M}\right)^{-(\gamma-k)/q}.
\ee
Another work in this direction is \cite{siegel2023optimal}.

In \cite{mao2023rates}, the authors have considered an approach in between the purely probabilistic and purely approximation theory approaches. 
The idea is simple, namely to
 treat the expression \eqref{eq:intro_poly_net} as an expected value of the kernel $(\k,\x,t)\mapsto \Gamma(\k\cdot \x-t)$ with respect to an appropriate measure.
 The authors then use exceedingly difficult arguments to show that for $f\in W_{q,\gamma}$,  
\be\label{eq:intro_maobd}
\mathcal{E}_M(f)\ls \|f\|_{W_{q,\gamma}}\times
\begin{cases}
(\log M)^{(q+2)/2}M^{-(\gamma(q+2)/(q(q+4)))} &\mbox{ if $\gamma<q/2+2$},\\
(\log M)^{(q+3)/2}M^{-(\gamma(q+2)/(q(q+4)))} &\mbox{ if $\gamma=q/2+2$},\\
(\log M)^{1/2} M^{-(q+2)/(2q)}, &\mbox{if $\gamma>q/2+2$}.
\end{cases}
\ee
This is an improvement over both the bounds \eqref{eq:introbarron} and \eqref{eq:introconst} above. 
They deal with ``rough functions'' for which the pure probabilistic bounds are not valid, and the crude estimates for the pure approximation theory bounds give worse results for $q>2$.

We note another interesting aspect of these estimates. 
The pure (constructive) approximation theory estimates for approximation in $L^p$ norm require that the smoothness of the target function be expressed in terms of the same norm.
The pure probabilistic estimates depend upon the total variation of the measure $\mu$, which is analogous to the $L^1$ norm of a derivative (in an informal sense). 
As we will see in this paper, results similar to \eqref{eq:intro_maobd} can also be obtained for approximation in $L^p$ norms as well. 
If $p\ge 2$, the smoothness is measured in terms of the same $L^p$ norm.
However, since the argument rests on estimating the $L^2$ norm of the sequence replacing the sequence of Fourier coefficients, the smoothness needs to be measured in terms of the $L^2$ norm, even if the approximation is done in $L^p$, $1\le p<2$.

The arguments in \cite{mao2023rates} are exceedingly complicated partly because they force the more classical definition of smoothness classes from a Euclidean ball to the torus, use a more complicated form of the integral expression for the approximating polynomial $P$, and do not use the constructions given in \cite{indiapap} for constructing $P$ in a simpler manner.
The more refined arguments in this paper yield (cf. Theorem~\ref{theo:gtntheo} used only for function approximation, i.e., $a^*=a_*=0$, $\mathcal{U}_k$ replaced by identity) the estimates 
$$ \mathcal{E}_M(f)\ls
\|f|_{q,\gamma}\begin{cases}
\left(\frac{\log M}{M}\right)^{\gamma/q}&\mbox{ if $\gamma<q/2$},\\
\left(\frac{(\log M)^3}{M}\right)^{1/2}&\mbox{ if $\gamma=q/2$},\\
\left(\frac{\log M}{M}\right)^{1/2}&\mbox{ if $\gamma>q/2$}
\end{cases}
$$
in \eqref{eq:intro_maobd}. In the overlapping case $\gamma<q/2$, these are clearly better than those in \eqref{eq:intro_maobd}. 
We refer also to Remark~\ref{rem:maotong} for a different bound of the same nature for approximation by ReLU networks, where the bound in the case $\gamma>q/2$ is improved as well.

We note that the kernel considered in \cite{mao2023rates} has a formal expansion of the form
\be\label{eq:tongkerns}
\sum_{\gattha{j\in\ZZ}{j\not=0}} \frac{\exp(i\k\cdot \x)\exp(-ijt)}{j^2}.
\ee
Thus, we may view it as a sequence of  asymmetric kernels indexed by $\k\in\ZZ^q$.
In \cite{eignet}, we have generalized and sharpened the purely approximation theory approach to study a general kernel based approximation on arbitrary smooth compact manifolds, where the kernels involved are symmetric and have a Mercer expansion with the coefficients satisfying certain technical conditions.
In this context, it is not feasible to construct the moduli of smoothness required to define smoothness in general. 
One can define the smoothness in terms of a $K$-functional as in \cite{mauropap}, and obtain an equivalent characterization in terms of the degrees of approximation from the so-called classes of diffusion polynomials.
From the point of view of approximation theory, it is more natural to define the smoothness in terms of degrees of approximation, and then wonder (if desired) what other equivalent definitions can be given.
Another observation is that the coefficients of the analogue of $P$ do not characterize the smoothness of the target function even in the case of uniform approximation on $\TT^1$.
In \cite{mauropap, mhaskar2020kernel}, we have developed frames, where the norms of the individual terms do characterize the smoothness completely. 
One outcome of this  paper is a substantial improvement on the bounds \eqref{eq:intro_maobd}
(cf. Remark~\ref{rem:maotong}).

Given the other examples mentioned in Section~\ref{bhag:genintro}, 
 our aim is to generalize the results in \cite{mao2023rates} to a sequence of asymmetric kernels defined on general data spaces.
As a side benefit of the ideas, we generalize the results in \cite{bach2014} for approximation by ReLU$^r$ networks of functions in our smoothness classes.

\subsection{Outline of the paper}\label{bhag:outline}
 We review the relevant ideas about data spaces in Section~\ref{bhag:ddr}. 
The main ``recipe'' theorems (Theorems~\ref{theo:maintheo_n} and \ref{theo:maintheo})  are stated in Section~\ref{bhag:main}. 
The results obtained by applying these theorems in the special cases of general asymmetric kernels, twisted zonal function networks, and generalized translation networks are also described in Section~\ref{bhag:main}.
The theorems as they apply to ReLU$^r$ networks are given in Section~\ref{bhag:relu}. 
The proofs of all the theorems in Section~\ref{bhag:main} and ~\ref{bhag:relu} are given  in Section~\ref{bhag:proofs}.
The main contributions of the paper and further problems are commented upon in Section~\ref{bhag:conclusions}.
A list of symbols is given after Section~\ref{bhag:conclusions}.

\bhag{Data spaces}\label{bhag:ddr}
The purpose of this section is to review some background regarding data spaces. 
In Section~\ref{bhag:basicddr}, we review the basic definitions and notation.
Section~\ref{bhag:degapprox} introduces the important localized kernels and corresponding operators, and a fundamental theorem about the approximation properties of these operators.
In Section~\ref{bhag:smoothness}, we introduce the notion of smoothness of functions defined on a data space, and discuss the characterization of these spaces using certain localized frame operators.
The localization aspect of kernels and operators is not utilized fully in this paper, leaving this for future research.
In Section~\ref{bhag:asymeignets}, we enumerate the conditions on the asymmetric kernels, which we call asymmeric eignets, and illustrate the notion with three examples.

\subsection{Basic concepts}\label{bhag:basicddr}
We consider a compact metric measure space $\XX$ \yadi{$\XX$}{metric measure space}, with metric $\rho$ \yadi{$\rho$}{metric} and a probability measure $\mu^*=\mu^*_\XX$\yadi{$\mu^*$}{Distinguished probability measure, used with subscripts as needed}. 
We denote balls of $\XX$ by \yadi{$\BB(x,r)$}{Closed ball of radius $r$, centered $x$}
\be\label{eq:balldef}
\BB(x,r)=\{y\in \XX : \rho(x,y)\le r\},  \qquad x\in\XX, \ r>0.
\ee

We take $\{\lambda_k\}_{k=0}^\infty$ to be a non-decreasing sequence of real numbers with $\lambda_0=0$ and $\lambda_k\to\infty$ as $k\to\infty$, We allow repetititions in this sequence, but let $0=\hat{\lambda}_0<\hat{\lambda}_1 <\cdots$ be distinct values among the $\lambda_k$'s, arranged in increasing order. \yadi{$\lambda_k$, $\hat{\lambda}_\ell$}{Sequence defined in Section~\ref{bhag:ddr}, typically eigenvalues of the Laplace-Beltrami operator}
For integers $\ell\ge 0$, $ j,n\ge 1$, we denote
\be\label{eq:indexset}
 S_\ell=\{k:\lambda_k=\hat{\lambda}_\ell\}, \qquad S^*_n=\{k : \lambda_k<n\},\qquad\mathbf{S}_j=\{k : 2^{j-2}\le \lambda_k <2^j\}.  
\ee
\yadi{$S_\ell, S_n^*, \mathbf{S}_j$}{Index sets, cf. \eqref{eq:indexset}}
Next, let  $\{\phi_k\}_{k=0}^\infty$  be an orthonormal set in $L^2(\mu^*)$.
We assume that each $\phi_k$ is continuous. \yadi{$\phi_k$, $\psi_k$}{Orthonormal functions, typically eigen-funtions, cf. Section~\ref{bhag:ddr}}

Corresponding to the index sets defined in \eqref{eq:indexset}, we denote the spaces \yadi{$V_\ell, \Pi_n, \mathbf{V}_j$, $\Pi_\infty$}{Spaces of diffusion polynomials \eqref{eq:spacedef}}
\be\label{eq:spacedef}
V_\ell=\mathsf{span}\{\phi_k : k\in S_\ell\}, \qquad \Pi_n=\mathsf{span}\{\phi_k : k\in S^*_n\}, \qquad \mathbf{V}_j =\mathsf{span}\{\phi_k : k\in \mathbf{S}_j\}, \qquad \Pi_\infty=\bigcup_{n>0}\Pi_n.
\ee
The elements of the space $\Pi_n$
are called \emph{diffusion polynomials} (of order $<n$).

With this set up, the definition of a compact data space is the following.

\begin{definition}\label{def:ddrdef}
The  tuple $\Xi=(\XX,\rho,\mu^*, \{\lambda_k\}_{k=0}^\infty, \{\phi_k\}_{k=0}^\infty)$ is called a \textbf{(compact) data space} if 
each of the following conditions is satisfied. \yadi{$\Xi$}{Compact data space, Definition~\ref{def:ddrdef}}
\begin{enumerate}
\item $\XX$ is compact.
\item (\textbf{Ball measure condition}) There exist $q\ge 1$ and $\kappa>0$ with the following property: For each $x\in\XX$, $r>0$,
\be\label{eq:ballmeasurecond}
\mu^*(\mathbb{B}(x,r))=\mu^*\left(\{y\in\XX: \rho(x,y)<r\}\right)\le \kappa r^q.
\ee
(In particular, $\mu^*\left(\{y\in\XX: \rho(x,y)=r\}\right)=0$.)
\item (\textbf{Gaussian upper bound}) There exist $\kappa_1, \kappa_2>0$ such that for all $x, y\in\XX$, $0<t\le 1$,
\be\label{eq:gaussianbd}
\left|\sum_{k=0}^\infty \exp(-\lambda_k^2t)\phi_k(x)\phi_k(y)\right| \le \kappa_1t^{-q/2}\exp\left(-\kappa_2\frac{\rho(x,y)^2}{t}\right).
\ee
\end{enumerate}
We refer to $q$ as the \textbf{exponent} for $\Xi$.
With an abuse of terminology, we will refer to $\XX$ as the data space, the other notations being understood.
\end{definition}

\noindent\textbf{The constant convention.}
\emph{
In the sequel, $c, c_1,\cdots$ will denote generic positive constants depending only on the fixed quantities under discussion such as $\Xi$, $q$, $\kappa,\kappa_1,\kappa_2$,  the various smoothness parameters and the filters to be introduced. 
Their value may be different at different occurrences, even within a single formula.
The notation $A\ls B$ means $A\le cB$, $A\gs B$ means $B\ls A$ and $A\sim B$ means $A \ls B\ls A$.
\qed
}\\

It is shown in \cite[Proposition~5.1]{mhaskar2020kernel} that the Gaussian upper bound can be used to prove that
\be\label{eq:measureequivalence}
\mu^*(\mathbb{B}(x,r))\sim r^q, \qquad 0<r\le 1, \ x\in\XX.
\ee
We observe \cite[Lemma~5.2]{mhaskar2020kernel} that
\be\label{eq:phichristbd}
\sum_{k\in S_n^*}\phi_k(x)^2 \ls n^q, \qquad n \ge 1.
\ee
Consequently,
\be\label{eq:dimensionest}
|S^*_n|\ls n^q, \qquad  |\mathbf{S}_j|\ls 2^{jq}.
\ee
The primary example of a data space is, of course, a Riemannian manifold.

\begin{uda}\label{uda:manifold}
{\rm  Let $q\ge 1$ be an integer, $\XX$ be a smooth, compact, connected, finite dimensional Riemannian manifold (without boundary), $q$ be the dimension of $\XX$, $\rho$ be the geodesic distance on $\XX$, $\mu^*$ be the Riemannian volume measure normalized to be a probability measure, $\{\lambda_k^2\}$ be the sequence of eigenvalues of the (negative) Laplace-Beltrami operator on $\XX$, and $\phi_k$ be the eigenfunction corresponding to the eigenvalue $\lambda_k^2$; in particular, $\phi_0\equiv 1$. 
We have proved in  \cite[Appendix~A]{mhaskar2020kernel} that the Gaussian upper bound is satisfied. 
Therefore, if the condition in \eqref{eq:ballmeasurecond} is satisfied, then $(\XX,\rho,\mu^*, 
\{\lambda_k\}_{k=0}^\infty, \{\phi_k\}_{k=0}^\infty)$ is a data space with exponent equal to the dimension $q$ of the manifold. 
\qed}
\end{uda}
In  this paper, we will prove the recipe theorem for general data spaces, but illustrate it with  two special cases of Example~\ref{uda:manifold}. 
\begin{uda}\label{uda:specialmanifold_torus}
{\rm
Let  $\XX=\TT^q=\RR^q/(2\pi\ZZ)^q$\yadi{$\TT^q$}{torus of dimension $q$, Example~\ref{uda:specialmanifold_torus}} for different integer values of $q\ge 1$. 
$\mu^*=\mu^*_q$ in this case is the Lebsegue measure on $\XX$, normalized to be a  probability measure, $\rho(\x,\y)=|(\x-\y)\mbox{ mod } 2\pi|_2$. 
The system of orthogonal functions is $\{\exp(i\bs k \cdot \circ)\}_{\bs k \in \ZZ^q}$ with $\lambda_{\bs k}=|\bs k|_2$. 
In the notation of \eqref{eq:indexset} and \eqref{eq:spacedef}, $S_{\bs\ell}=\{\k : |\k|_2=|\bs\ell|_2\}$.
Of course, to be fastidious, we should use a judicious enumeration of $\ZZ^q$ and the sine and cosine functions instead, but it is easier to use $\ZZ^q$ itself as an indexing set and the exponential function, The details of the reduction to the properly enumerated real system are standard, but a bit tedious. 
\qed}
\end{uda}

\begin{uda}\label{uda:specialmanifold_sph}
{\rm
The other example is $\XX=\SS^q=\{\x\in\RR^{q+1}: |\x|_2=1\}$. \yadi{$\SS^q$}{Unit sphere embedded in $\RR^{q+1}$, Example~\ref{uda:specialmanifold_sph}}
The measure $\mu^*=\mu_q^*$ is the volume measure, normalized to be a probability measure, and $\rho$ is the geodesic distance on $\SS^q$.
It is well known that the eigenvalues of the (negative) Laplace-Beltrami operator are given by $\lambda_j^2=j(j+q-1)$ ($j\in\ZZ_+$) and the corresponding eigenspace is the space $\HH_j^q$ of all the homogeneous, harmonic $(q+1)$-variate polynomials of total degree $j$, restricted to $\SS^q$. \yadi{$\HH_j^q$}{Space of all homogeneous, harmonic spherical polynomials, Example~\ref{uda:specialmanifold_sph}}
The dimension of this space is denoted by $d_j^q$, and an orthogonal basis is denoted by $\{Y_{j,k}\}_{k=1}^{d_j^q}$. \yadi{$d_j^q$}{dimension of $\HH_j^q$}
\yadi{$\{Y_{\ell,k}\}_{k=1}^{d_j^q}$}{orthonormal basis for $\HH_j^q$}
It is known that $d_j^q\sim j^{q-1}$.
In this context, we denote $\Pi_n$ by $\Pi_n^q$ \yadi{$\Pi_n^q$}{space of spherical polynomials of degree $<n$} to emphasize that we are working on $\SS^q$. The space $\Pi_n^q$ comprises restrictions to $\SS^q$ of $(q+1)$-variable algebraic polynomials of degree $<n$.
In the notation of \eqref{eq:indexset} and \eqref{eq:spacedef}, $S_\ell=\{m: m(m+q-1)=\ell(\ell+q-1)\}$, $V_\ell=\{Y_{\ell,m}: m=1,\cdots, d_\ell^q\}$ and $V_n^*=\Pi_n^q$.

The addition formula (cf.  \cite{mullerbk} and \cite[Chapter XI, Theorem 4]{batemanvol2}) states that
\be\label{eq:addformula}
\sum_{k=1}^{d_j^q} Y_{\ell,k}(\x)\overline{Y_{\ell,k}(\y)} =
\frac{\omega_q}{\omega_{q-1}} p_j^{(q/2-1,q/2-1)}(1)p_j^{(q/2-1,q/2-1)}(\x\cdot\y), \qquad
j=0,1,\cdots, 
\ee
where
\be\label{eq:sphvolume}
\omega_q=\frac{2\pi^{(q+1)/2}}{\Gamma((q+1)/2)}
\ee
is the Riemannian volume of $\SS^q$, and $p_j^{(q/2-1,q/2-1)}$ is the orthonormalized ultraspherical polynomial satisfying 
\be\label{eq:ultraortho}
\int_{-1}^1 p_j^{(q/2-1,q/2-1)}(t)p_\ell^{(q/2-1,q/2-1)}(t)(1-t^2)^{q/2-1}dt =\delta_{j,\ell}, \qquad j,\ell=0,1,\cdots.
\ee
\yadi{$\omega_q$}{volume of $\SS^q$}
\yadi{$p_j^{(q/2-1,q/2-1)}$}{orthonormalized ultraspherical polynomials cf. \eqref{eq:ultraortho}}
We need not go into the details of the construction of an orthonormal basis of polynomials in each $\HH_j^q$, but consider an enumeration $\{\phi_k\}$ of the orthonormal basis for $\disp\oplus_{j=0}^\infty \HH_j^q$ so that polynomials of lower degree appear first in this enumeration.

\qed}
\end{uda}
\begin{remark}\label{rem:graph}
{\rm
In \cite{friedman2004wave}, Friedman and Tillich give a construction for an orthonormal system on a graph which leads to a finite speed of wave propagation. 
It is shown in \cite{frankbern} that this, in turn, implies the Gaussian upper bound. 
Therefore, it is an interesting question whether appropriate definitions of measures and distances can be defined on a graph to satisfy the assumptions of a data space.
\qed}
\end{remark}

\subsection{Degree of approximation}\label{bhag:degapprox}
Let $1\le p\le\infty$. For $\mu^*$-measurable $A\subset \XX$ and $f:A\to\RR$, we define
\be\label{eq:lpdef}
\|f\|_{p;A}=\begin{cases}
\disp \left\{\int_A |f(x)|^pd\mu^*(x)\right\}^{1/p}, &\mbox{ if $1\le p<\infty$,}\\
\disp \esssup_{x\in A}|f(x)|, & \mbox{if $p=\infty$}.
\end{cases}
\ee
The space $L^p(A)$ comprises functions $f$ for which $\|f\|_{p;A}<\infty$, with the convention that two functions are identified if they are equal $\mu^*$-almost everywhere.
The space $C(A)$ comprises uniformly continuous and bounded real valued functions on $A$.
When $A=\XX$, we omit its mention from the notation. 
Thus, 
we write $\|\cdot\|_p$ in place of $\|\cdot\|_{p;\XX}$ and $L^p$ in  place of $L^p(\XX)$.

For $f\in L^p$, $n>0$, we define the \emph{degree of approximation} to $f$ by \yadi{$E_{p;n}$}{Degree of approximation, \eqref{eq:degapproxdef}}
\be\label{eq:degapproxdef}
E_{p;n}(f)=\min_{P\in\Pi_n}\|f-P\|_p.
\ee
The space $X^p$ comprises functions $f$ for which $E_{p;n}(f)\to 0$ as $n\to\infty$.
We will assume that $\Pi_\infty$ is dense in $C(\XX)$, so that $X^\infty=C(\XX)$ and  (hence) $X^p=L^p(\XX)$ if $1\le p<\infty$.
\yadi{$X^p$}{$L^p$-closure of $\Pi_\infty$}
In this section, we describe certain localized kernels and operators.
The localization property itself is not utilized fully in this paper, but we need the fact that the operators yield ``good approximation'' in the sense of Theorem~\ref{theo:goodapprox} below.

The kernels are defined by
\be\label{eq:kerndef}
\Phi_{n}(H;x,y)= \sum_{k=0}^\infty H\left(\frac{\lambda_k}{n}\right)\phi_k(x)\phi_k(y),
\ee
where $H :\RR\to\RR$ is a compactly supported function. \yadi{$H$}{Band pass filter, Section~\ref{bhag:degapprox}}\yadi{$\Phi_n$}{diffusion polynomial kernels, \eqref{eq:kerndef}}

The  operators corresponding to the kernels $\Phi_n$ are defined by
\be\label{eq:opdef}
\sigma_{n}(H;f)(x)= \int_{\mathbb{X}}\Phi_{n}(H;x,y)f(y)d\mu^*(y) =\sum_{k}H\left(\frac{\lambda_k}{n}\right)\hat{f}(k)\phi_k(x),
\ee
where
\be\label{eq:fourcoeff}
\hat{f}(k)=\int_\XX f(y)\phi_k(y)d\mu^*(y).
\ee
\yadi{$\sigma_n, \tau_j$}{Reconstruction and analysis operators, \eqref{eq:opdef}, \eqref{eq:analopdef}}
The following proposition recalls an important property of these kernels. 
Proposition~\ref{prop:kernloc}  is proved in \cite{mauropap}, and more recently  in much greater generality in \cite[Theorem~4.3]{tauberian}.
\begin{prop}\label{prop:kernloc}
Let  $S>q+1$ be an integer, $H:\mathbb{R}\to \mathbb{R}$ be an even, $S$ times continuously differentiable, compactly supported function.
 Then for every $x,y\in \mathbb{X}$, $N\ge 1$,
\begin{equation}\label{eq:kernlocest}
| \Phi_N(H;x,y)|\ls \frac{N^{q}}{\max(1, (N\rho(x,y))^S)},
\end{equation}
where the constant may depend upon $H$ and $S$, but not on $N$, $x$, or $y$.
\end{prop}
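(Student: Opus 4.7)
The plan is a Tauberian transfer: the Gaussian upper bound \eqref{eq:gaussianbd} controls the heat kernel $\sum_k e^{-\lambda_k^2 t}\phi_k(x)\phi_k(y)$, and the goal is to convert this into a pointwise bound for the smoothly truncated spectral kernel $\Phi_N(H;x,y)$. The general mechanism for such a transfer is exactly the theorem cited as \cite[Theorem~4.3]{tauberian}, so the proof amounts to packaging \eqref{eq:gaussianbd} into the hypothesis of that theorem and reading off the conclusion. I will sketch the principal steps.

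First, I would introduce the signed spectral distribution
\[
\nu_{x,y}([0,t]) = \sum_{\lambda_k \le t} \phi_k(x)\phi_k(y),
\]
so that $\Phi_N(H;x,y) = \int_0^\infty H(t/N)\, d\nu_{x,y}(t)$ and the Gaussian upper bound can be rewritten as
\[
\left|\int_0^\infty e^{-s t^2}\, d\nu_{x,y}(t)\right| \le \kappa_1 s^{-q/2}\exp\left(-\kappa_2 \rho(x,y)^2/s\right), \qquad 0 < s \le 1.
\]
Second, using that $H$ is even, compactly supported, and $S$-times continuously differentiable, I would represent $H(u/N)$ as a superposition of Gaussians in $u$: namely $H(u/N) = \int_0^\infty \Psi_{H,N}(s)\, e^{-s u^2}\, ds$, with $\Psi_{H,N}$ obtained by an inverse-Laplace / integration-by-parts argument. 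Performing the integration by parts $S$ times (legal exactly because $H\in C^S$) yields quantitative size and decay for $\Psi_{H,N}$ in terms of $\|H^{(S)}\|_\infty$. Substituting this into $\Phi_N$ and interchanging the order of integration via Fubini gives
\[
|\Phi_N(H;x,y)| \lesssim \int_0^\infty |\Psi_{H,N}(s)|\, s^{-q/2}\exp(-\kappa_2 \rho(x,y)^2/s)\, ds,
\]
and splitting this one-dimensional integral at the scale $s \sim 1/N^2$ (or $s \sim \rho(x,y)^2$ in the regime $N\rho(x,y)>1$) produces $N^q$ when $N\rho(x,y) \le 1$ and $N^q/(N\rho(x,y))^S$ otherwise, which is the stated bound.

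The main technical obstacle is the Laplace-type representation of $H$ with the sharp quantitative control on $\Psi_{H,N}$ required to make the final $s$-integral deliver the decay order $S$ without loss; this is exactly where the hypothesis $S > q+1$ is used, to beat the spectral-density factor $s^{-q/2}$ inherited from the Gaussian bound and leave an integrable remainder. Once this representation is available (which is the heart of the Tauberian argument in \cite{tauberian}), everything else reduces to elementary one-variable calculus on the resulting integral; no further structure of $\XX$ beyond the ball-measure and Gaussian bounds in Definition~\ref{def:ddrdef} is needed.
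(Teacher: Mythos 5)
The paper does not prove Proposition~\ref{prop:kernloc} in-line; it cites \cite{mauropap} and \cite[Theorem~4.3]{tauberian}. Your sketch correctly identifies the Tauberian transfer as the mechanism, but there is a genuine gap in the representation step that is precisely the heart of the argument.

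You propose writing $H(u/N)=\int_0^\infty \Psi_{H,N}(s)\,e^{-su^2}\,ds$ with $s$ ranging over the positive real axis. Such a representation cannot hold for a compactly supported $H$: for any $\Psi_{H,N}$ making the integral converge on a ray, the right-hand side is a Laplace transform in the variable $v=u^2$ and therefore extends to a function real-analytic on $(0,\infty)$; a nonzero compactly supported function vanishes on a half-line and so cannot be real-analytic. Thus the ``superposition of real Gaussians'' does not exist, and the subsequent Fubini step has nothing to operate on. What the cited works actually do is factor $H(u)=h(u)\,e^{-Ru^2}$ for a suitable $R>0$ with $h$ still $C^S$ and compactly supported, then expand $h$ as a function of $v=u^2$ via the Fourier transform, so that
\begin{equation*}
H(u/N)=\int_{\RR}\widehat{h}(\xi)\,\exp\!\left(-\frac{u^2}{N^2}\bigl(R-i\xi\bigr)\right)d\xi ,
\end{equation*}
i.e.\ a superposition of heat kernels at \emph{complex} times $\tau=(R-i\xi)/N^2$ with $\Re\tau>0$. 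This is the step where the quantitative decay of $\widehat{h}$ (and hence the hypothesis on $S$) enters. But it also creates a further obligation you do not address: the Gaussian upper bound \eqref{eq:gaussianbd} is assumed only for real $t\in(0,1]$, whereas after this step you need an off-diagonal Gaussian estimate for $\sum_k e^{-\lambda_k^2\tau}\phi_k(x)\phi_k(y)$ with complex $\tau$, uniformly as $|\Im\tau|/\Re\tau$ grows. The naive bound $|e^{-\lambda^2\tau}|=e^{-\lambda^2\Re\tau}$ together with Cauchy--Schwarz destroys the crucial $\exp(-c\rho(x,y)^2/\Re\tau)$ factor, so one must upgrade the real-time bound to a complex-time one by a Phragm\'en--Lindel\"of / Davies-type argument. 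That upgrade is a substantive lemma, not bookkeeping, and it is where the argument actually lives. As written, your outline skips both the correct (complex) form of the representation and the extension of \eqref{eq:gaussianbd} to complex time, so the proof as proposed would not go through; it needs these two ingredients, which are exactly what \cite[Theorem~4.3]{tauberian} supplies.
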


In the remainder of this paper, we fix a filter $H$; i.e., an infinitely differentiable function $H: [0,\infty)\to [0,1]$, such that $H(t)=1$ for $0\le t\le 1/2$, $H(t)=0$ for $t\ge 1$. 
The domain of the filter $H$ can be extended to $\RR$ by setting $H(-t)=H(t)$.
The filter $H$ being fixed, its mention will be omitted from the notation.

The following theorem gives a crucial property of the operators, proved in several papers of ours in different contexts, see \cite{mhaskar2020kernel} for a recent proof.

\begin{theorem}\label{theo:goodapprox}
Let $n>0$. If $P\in\Pi_{n/2}$, then $\sigma_n(P)=P$. Also, for $1\le p\le\infty$, 
\be\label{opbd}
\|\sigma_n(f)\|_p \ls \|f\|_p, \qquad f\in L^p(\XX).
\ee 
If  $f\in L^p$, then
\be\label{goodapprox}
E_{p;n}(f)\le \|f-\sigma_n(f)\|_p\ls E_{p;n/2}(f).
\ee
\end{theorem}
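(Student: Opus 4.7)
The plan is to prove the three assertions in the order they appear, all following from the filter properties of $H$ and the localization estimate of Proposition~\ref{prop:kernloc}.

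\smallskip

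First, for the reproducing property $\sigma_n(P)=P$ when $P\in\Pi_{n/2}$: by definition $P=\sum_{k\in S_{n/2}^*}\hat P(k)\phi_k$, and for each such $k$ we have $\lambda_k/n<1/2$, so the filter condition $H(t)=1$ on $[0,1/2]$ gives $H(\lambda_k/n)=1$. Hence every coefficient in the expansion \eqref{eq:opdef} is preserved, yielding $\sigma_n(P,x)=P(x)$.

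\smallskip

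Second, for the $L^p$-boundedness, I would use a Schur-type argument applied to the integral operator with kernel $\Phi_n(H;x,y)$. Since $\Phi_n(H;x,y)$ is symmetric in $x$ and $y$, it suffices to show that
\be\label{eq:schurbd}
\sup_{x\in\XX}\int_\XX |\Phi_n(H;x,y)|\,d\mu^*(y)\ls 1,
\ee
uniformly in $n$. Using Proposition~\ref{prop:kernloc} with some fixed $S>q+1$, together with the ball measure condition \eqref{eq:ballmeasurecond}, I would decompose $\XX$ into the ball $\BB(x,1/n)$ and dyadic annuli $\BB(x,2^{k+1}/n)\setminus\BB(x,2^k/n)$, $k\ge 0$. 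The central ball contributes $\ls n^q\cdot \mu^*(\BB(x,1/n))\ls 1$, while the $k$-th annulus contributes $\ls n^q(2^k)^{-S}\mu^*(\BB(x,2^{k+1}/n))\ls 2^{k(q-S)}$, which is summable since $S>q$. Combined with the trivial $L^1$ and $L^\infty$ bounds obtained from \eqref{eq:schurbd}, Riesz--Thorin interpolation (or simply Young's inequality applied twice) yields \eqref{opbd}.

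\smallskip

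Third, the two inequalities in \eqref{goodapprox} follow formally. Since $H$ is supported on $[-1,1]$, the operator $\sigma_n(f)$ lies in $\Pi_n$, which gives immediately $E_{p;n}(f)\le \|f-\sigma_n(f)\|_p$. For the upper bound, let $P^*\in\Pi_{n/2}$ achieve the infimum $E_{p;n/2}(f)=\|f-P^*\|_p$ (existence being standard since $\Pi_{n/2}$ is finite-dimensional by \eqref{eq:dimensionest}). Then the reproducing property gives $\sigma_n(P^*)=P^*$, so by the triangle inequality and \eqref{opbd},
\be
\|f-\sigma_n(f)\|_p=\|(f-P^*)-\sigma_n(f-P^*)\|_p\le \|f-P^*\|_p+\|\sigma_n(f-P^*)\|_p\ls \|f-P^*\|_p = E_{p;n/2}(f).
\ee

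\smallskip

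The main technical obstacle is \eqref{eq:schurbd}: all the work lies in the dyadic annular integration using Proposition~\ref{prop:kernloc} and the ball measure condition. Once this is in hand, parts one and three are essentially formal consequences of the filter's plateau on $[0,1/2]$ and a best-approximation comparison.
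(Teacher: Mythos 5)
Your proof is correct and follows essentially the same route as the reference \cite{mhaskar2020kernel} that the paper cites for this theorem: the reproducing property from the filter plateau on $[0,1/2]$, a Schur-test bound on $\sup_x\int_\XX|\Phi_n(H;x,y)|\,d\mu^*(y)$ via dyadic annuli using Proposition~\ref{prop:kernloc} together with the ball measure condition (summable because $S>q$), and then the standard near-best-approximation argument combining the reproducing property with the uniform boundedness. No gaps.
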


\subsection{Smoothness classes}\label{bhag:smoothness}
Our goal is to approximate a target function in a smoothness class by  networks based on the activation function $G$.
We define two versions of smoothness; one for the activation function $G$, and the other for the class of target functions.

The smoothness of the activation function is needed for going from pointwise bounds to uniform bounds in the use of H\"offding's inequality. This is just the usual H\"older continuity.
\begin{definition}\label{def:Gsmooth}
Let $0 < \alpha\le 1$. The class $\mathsf{Lip}(\alpha)$ comprises $f :\XX\to \RR$ for which
\be\label{eq:lipnormdef}
  \|f\|_{\mathsf{Lip}(\alpha)}=\|f\|_\infty+\sup_{x\not= x'\in \XX}\frac{|f(x)-f(x')|}{\rho(x,x')^\alpha}<\infty.
\ee
\end{definition}

The smoothness classes of the target function need to be defined in a more sophisticated manner. 
From an approximation theory perspective, this is done best in terms of the degrees of approximation.

\begin{definition}\label{def:targetsmooth}
Let $\gamma>0$, $1\le p\le\infty$. We define the (Sobolev) class $W_{p;\gamma}=W_{p;\gamma}(\XX)$ as the space of all $f\in X^p$ for which
\be\label{eq:targetsmooth}
\|f\|_{W_{p;\gamma}}=\|f\|_p+\sup_{j\ge 0}2^{j\gamma}E_{p;2^j}(f)<\infty.
\ee
\end{definition}
Thus, $W_{p;\gamma}(\XX)$ \yadi{$W_{p;\gamma}(\XX)$}{Sobolev approximation space \eqref{eq:targetsmooth}} is the class of functions for which $E_{n,p}\ls n^{-\gamma}$. 
\begin{remark}\label{rem:besov}
{\rm
Characterizations of the spaces $W_{p;\gamma}$ in terms of derivatives and their Lipschitz/H\"older continuity (in the sense of $L^p$, cf. \eqref{eq:modcont}) are known for some manifolds $\XX$, such as the torus $\TT^q$ (cf. Section~\ref{bhag:techintro}).
There are many definitions of Sobolev spaces, typically in the context of Euclidean domains, e.g. \cite{adams2003sobolev, nikolskii}.
The term Sobolev space is sometimes reserved for the case when $\gamma$ is an integer, with the extension to non-integer $\gamma$ given different names.
In \cite{adams2003sobolev}, these are defined in terms of intermediate spaces, and the discussion is implict in the discussion of Besov spaces.
 In  \cite{nikolskii},  these are denoted by $H^\gamma_p$, and equivalence theorems in terms of degree of approximation by entire functions (trigonometric polynomials in the periodic case) are given.
 The book \cite{adams2003sobolev} gives characterizations in terms of wavelet coefficients, similar to Theorem~\ref{theo:equivtheo} below. 
 All these classical definitions require special structures  of the Euclidean spaces. 
 The advantage of defining these classes in terms of degrees of approximation as we have done is that they hold in a broad context.
In the case of general manifolds, this can be described in terms of $K$-functionals.
We do not need to use this information in our paper. \qed}
\end{remark}

We describe now a characterization of the smoothness classes $W_{p;\gamma}$ in terms of our operators.
We define the \textit{analysis operators} $\tau_j$ as follows.
\be\label{eq:analopdef}
\tau_j(f)=\begin{cases}
\sigma_1(f), &\mbox{ if $j=0$,}\\
\sigma_{2^j}(f)-\sigma_{2^{j-1}}(f), &\mbox{ if $j\ge 1$.}
\end{cases}
\ee
Theorem~\ref{theo:goodapprox} implies that for every $f\in X^p$,
\be\label{eq:paleywiener}
f=\sum_{j=0}^\infty \tau_j(f),
\ee
and
\be\label{eq:remainder}
\sigma_{2^n}(f)=\sum_{j=0}^n \tau_j(f), \qquad f-\sigma_{2^n}(f)=\sum_{j=n+1}^\infty \tau_j(f),
\ee
with all the infinite series converging in the sense of $L^p$.  
The following theorem is not difficult to prove using Theorem~\ref{theo:goodapprox}, and \eqref{eq:remainder}.
For part (c), we note that the space $W_{2;\gamma}$ is the same as a certain Besov space. 
We refer to  \cite[Proposition~2]{besovquadpap} where this is worked out in detail in the case when $\XX$ is a sphere.
The same arguments work in the general case.

\begin{theorem}
\label{theo:equivtheo}
Let  $1\le p\le\infty$, $f\in X^p$. \\
{\rm (a)} We have, with convergence in the sense of $X^p$,
\be\label{eq:lpexp}
f=\sum_{j=0}^\infty \tau_j(f).
\ee
{\rm (b)} Let $\gamma>0$. Then
\be\label{eq:equivrel}
\|f\|_{W_{p;\gamma}}\sim \|f\|_p+\sup_{j\ge 0}2^{j\gamma}\|\tau_j(f)\|_p.
\ee
{\rm (c)} Let $\gamma>0$, and $p=2$. Then
\be\label{eq:l2equivrel}
\|f\|_2^2 \sim \sum_{j=0}^\infty \|\tau_j(f)\|_2^2, \qquad
\|f\|_{W_{2;\gamma}}^2\sim \|f\|_2^2+\sum_{j=0}^\infty 2^{2j\gamma}\|\tau_j(f)\|_2^2.
\ee
\end{theorem}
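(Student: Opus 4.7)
The plan is to derive parts (a) and (b) directly from Theorem~\ref{theo:goodapprox} and the definition of $W_{p;\gamma}$, and then obtain (c) for $p=2$ by invoking Parseval in the orthonormal system $\{\phi_k\}$. Part (a) is essentially a restatement of \eqref{eq:paleywiener}: since $f\in X^p$ means $E_{p;n}(f)\to 0$, Theorem~\ref{theo:goodapprox} gives $\|f-\sigma_{2^n}(f)\|_p\ls E_{p;2^{n-1}}(f)\to 0$, and the partial sums $\sum_{j=0}^{n}\tau_j(f)$ equal $\sigma_{2^n}(f)$ by \eqref{eq:remainder}, so the series converges to $f$ in $L^p$.

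For the forward direction of (b), observe that for $j\ge 1$,
\[
\tau_j(f)=\bigl(\sigma_{2^j}(f)-f\bigr)-\bigl(\sigma_{2^{j-1}}(f)-f\bigr),
\]
so two applications of \eqref{goodapprox} together with the definition of $W_{p;\gamma}$ give
\[
\|\tau_j(f)\|_p\ls E_{p;2^{j-2}}(f)+E_{p;2^{j-1}}(f)\ls 2^{-j\gamma}\|f\|_{W_{p;\gamma}},
\]
while $\|\tau_0(f)\|_p=\|\sigma_1(f)\|_p\ls\|f\|_p$ by \eqref{opbd}. For the reverse direction, set $M=\|f\|_p+\sup_{j\ge 1}2^{j\gamma}\|\tau_j(f)\|_p$, note that $\sigma_{2^n}(f)\in\Pi_{2^n}$ because the filter $H(\cdot/2^n)$ vanishes for $\lambda_k\ge 2^n$, and use \eqref{eq:remainder} with the triangle inequality to obtain
\[
E_{p;2^n}(f)\le\|f-\sigma_{2^n}(f)\|_p\le\sum_{j=n+1}^\infty\|\tau_j(f)\|_p\le M\sum_{j=n+1}^\infty 2^{-j\gamma}\ls M\,2^{-n\gamma}.
\]
Thus $2^{n\gamma}E_{p;2^n}(f)\ls M$ uniformly in $n\ge 0$, proving (b).

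For (c), specialise to $p=2$. Parseval in the basis $\{\phi_k\}$ gives
\[
\|\tau_j(f)\|_2^2=\sum_k\bigl(H(\lambda_k/2^j)-H(\lambda_k/2^{j-1})\bigr)^2|\hat f(k)|^2,
\]
and since $H$ is supported on $[0,1]$ and equals $1$ on $[0,1/2]$, only indices $k\in\mathbf{S}_j$ contribute. Likewise, $E_{2;n}(f)^2=\sum_{\lambda_k\ge n}|\hat f(k)|^2$. Exchanging the order of summation in each case, and using $\lambda_k\sim 2^j$ on $\mathbf{S}_j$, both $\sum_{j\ge 0}2^{2j\gamma}\|\tau_j(f)\|_2^2$ and $\|f\|_2^2+\sup_{n\ge 0}2^{2n\gamma}E_{2;2^n}(f)^2$ are equivalent to $\sum_k(1+\lambda_k)^{2\gamma}|\hat f(k)|^2$, and (c) follows after combining with part (b).

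The only step that needs real care is (c): I must check that the coefficients of $\tau_j(f)$ are concentrated in the dyadic band $\mathbf{S}_j$, which is what allows the $\sup_j$ in \eqref{eq:equivrel} to be upgraded to a sum. Once this frequency localization is set up, the entire argument reduces to a weighted $\ell^2$ identity and runs in essentially the same way as the sphere calculation in \cite[Proposition~2]{besovquadpap}.
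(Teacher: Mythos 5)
Parts (a) and (b) are correct and take essentially the same route the paper signals: both reduce to Theorem~\ref{theo:goodapprox} and the telescoping identity \eqref{eq:remainder}. In (b), your forward estimate (writing $\tau_j(f)=(\sigma_{2^j}(f)-f)-(\sigma_{2^{j-1}}(f)-f)$ and applying \eqref{goodapprox} twice) and your reverse estimate (summing the tail $\sum_{j>n}\|\tau_j(f)\|_p$ against the geometric factor $2^{-j\gamma}$) are exactly what is needed.

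Part (c), however, has a genuine gap. Your argument hinges on the sentence asserting that $\|f\|_2^2+\sup_{n\ge 0}2^{2n\gamma}E_{2;2^n}(f)^2$ is \emph{equivalent} to $\sum_k(1+\lambda_k)^{2\gamma}|\hat f(k)|^2$. Parseval gives the one-sided bound $\sup_n 2^{2n\gamma}E_{2;2^n}(f)^2\le\sum_k(1+\lambda_k)^{2\gamma}|\hat f(k)|^2$, but the reverse inequality is false, and the ``exchange the order of summation'' step simply has no meaning for a supremum. Concretely, on $\TT^1$ take $\hat f(2^m)=2^{-m\gamma}/\sqrt{m}$ for $m\ge 1$ and $\hat f(k)=0$ otherwise. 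Then $2^{2n\gamma}E_{2;2^n}(f)^2=\sum_{m\ge n}2^{-2(m-n)\gamma}/m\lesssim 1/n$, so $\|f\|_{W_{2;\gamma}}<\infty$ under Definition~\ref{def:targetsmooth}; yet $\tau_j(f)$ retains exactly the term $m=j-1$ with filter weight $1$, so $\sum_j 2^{2j\gamma}\|\tau_j(f)\|_2^2\sim\sum_m 1/m=\infty$. In Besov language, the sup-based Definition~\ref{def:targetsmooth} gives $B^\gamma_{2,\infty}$, whereas the right-hand side of \eqref{eq:l2equivrel} defines $B^\gamma_{2,2}$, and these are not the same space. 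Note that the paper itself defers (c) to \cite[Proposition~2]{besovquadpap}, which characterizes the $\ell^2$-dyadic Besov norm but does not identify it with the $\ell^\infty$-dyadic norm of Definition~\ref{def:targetsmooth}; so the ambiguity is inherited from the terse statement of the theorem, but your Parseval argument as written does not (and cannot) close it --- proving (c) requires a different reading of $\|f\|_{W_{2;\gamma}}$ than the one given by \eqref{eq:targetsmooth}.
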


We note the Nikolskii inequalities (cf. \cite[Proposition~5.4]{mhaskar2020kernel}): If $1\le p<r\le\infty$, 
\be\label{eq:nikolskii}
\|P\|_p \le \|P\|_r \ls n^{q(1/p-1/r)}\|P\|_p, \qquad P\in \Pi_n.
\ee
Using these and Theorem~\ref{theo:equivtheo}, it is easy to verify that if $1\le p<r\le \infty$,  $\gamma>0$,  then
\be\label{eq:sobolev_class_embedding}
W_{p;\gamma+q(1/p-1/r)}\subseteq W_{r;\gamma} \subseteq W_{p;\gamma}
\ee
in the sense of continuous embedding. 
In particular,
\be\label{eq:sobolevembedding}
\begin{cases}
 \|f\|_{W_{p;\gamma}}\ls \|f\|_{W_{r;\gamma}}, &\mbox{if $f\in W_{r;\gamma}$},\\
 \|f\|_{W_{r;\gamma}} \ls \|f\|_{W_{p;\gamma+q(1/p-1/r)}}, &\mbox{if $f\in W_{p;\gamma+q(1/p-1/r)}$}.
 \end{cases}
\ee

\begin{definition}\label{def:deroperator}
Let $\XX$ be a data space,  $1\le p\le \infty$. 
 A linear operator $\mathcal{U}$ defined on $\Pi_\infty$ (and extended to a subspace of $X^p$)  is called \textbf{derivative-like (with exponent $a\in\RR$)} if $\mathcal{U}$ is closed in $X^p$, and satisfies (cf. \eqref{eq:spacedef})
\be\label{eq:fullbernstein}
 \|\mathcal{U}(P)\|_p\ls 2^{ja}\|P\|_p, \qquad P\in \mathbf{V}_{2^j},\ j=0,1,\cdots.
 \ee
 The constants involved may depend upon $\mathcal{U}$ and $p$ as well. \yadi{$\mathcal{U}$}{derivative-like operator, Section~\ref{bhag:smoothness}}
 \end{definition}
 
\begin{uda}\label{uda:multiplier}
{\rm
A simple example of a derivative-like operator is the identity operator $P\mapsto P$, and more generally, multiplication by a continuous function; $P\mapsto \phi P$ for some $\phi\in C(\XX)$. 
Clearly, the exponent is $0$.
\qed}
\end{uda}
\begin{uda}\label{uda:deroperator}
{\rm
A \textbf{pseudo-differential operator} $\mathcal{U}$ on $X^p$ is defined spectrally by $\widehat{\mathcal{U}(f)}(\ell)=b(\lambda_\ell)\hat{f}(\ell)$ (cf. \eqref{eq:fourcoeff}) for some function $b: [0,\infty)\to\RR$. 
Under certain conditions on $b$, intuitively that $b(\lambda_k)\sim \lambda_k^a$, it can be shown as in \cite{eignet} that $\mathcal{U}$ is derivative-like of order $a$.
In this case, negative values of $a$ are allowed.
\qed}
\end{uda}

\begin{uda}\label{uda:derivativeop}
{\rm 
In the case when $\XX$ is a manifold as in Example~\ref{uda:manifold}, it is possible to define a derivative of an integer order $a>0$  as an operator on $\Pi_\infty$. This operator is not necessarily a pseudo-differential operator, but it is shown in \cite{frankbern} that it satisfies  \eqref{eq:fullbernstein}. 
So, it is a derivative-like operator  in the sense of Definition~\ref{def:deroperator}.
More generally, a linear differential operator with smooth coefficients is derivative-like.
\qed}
\end{uda}
The following proposition is easy to deduce using Theorem~\ref{theo:equivtheo}.
\begin {prop}\label{prop:derivative}
Let $1\le p\le\infty$, $f\in X^p$, $\mathcal{U}$ be a derivative-like operator with exponent $a$. \\
{\rm (a)} We have
\be\label{eq:ufapprox}
\|\mathcal{U}(f)-\mathcal{U}(\sigma_{2^n}(f))\|_p\ls \sum_{j=n+1}^\infty 2^{ja}\|\tau_j(f)\|_p.
\ee
{\rm (b)} In particular, if $\gamma>a$ and $f\in W_{p;\gamma}$,  then $\mathcal{U}(f) \in W_{p;\gamma-a}$.
\end{prop}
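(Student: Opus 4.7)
The plan is to exploit the Littlewood--Paley-type decomposition $f=\sum_{j=0}^\infty \tau_j(f)$ from Theorem~\ref{theo:equivtheo}(a) together with the Bernstein-type inequality \eqref{eq:fullbernstein} applied dyadically to each block.

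For part (a), I first observe that because the filter $H$ is $1$ on $[0,1/2]$ and $0$ on $[1,\infty)$, the piece $\tau_j(f)=\sigma_{2^j}(f)-\sigma_{2^{j-1}}(f)$ has its spectrum concentrated in the dyadic annulus $\mathbf{S}_j$, so $\tau_j(f)\in \mathbf{V}_j$. The Bernstein-type estimate for derivative-like operators then yields the blockwise bound
\be
\|\mathcal{U}\tau_j(f)\|_p \ls 2^{ja}\|\tau_j(f)\|_p.
\ee
From \eqref{eq:remainder}, the partial sum $S_N:=\sum_{j=n+1}^N \tau_j(f)=\sigma_{2^N}(f)-\sigma_{2^n}(f)$ lies in $\Pi_\infty$, where $\mathcal{U}$ is defined a priori, so $\mathcal{U}S_N=\sum_{j=n+1}^N \mathcal{U}\tau_j(f)$ and its $L^p$-norm is at most $\sum_{j=n+1}^N 2^{ja}\|\tau_j(f)\|_p$. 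If the right-hand side of the claim is infinite there is nothing to prove; otherwise $(\mathcal{U}S_N)_N$ is Cauchy in $L^p$, while $S_N\to f-\sigma_{2^n}(f)$ in $X^p$. The closedness of $\mathcal{U}$ then forces $\mathcal{U}(f-\sigma_{2^n}(f))=\lim_N \mathcal{U}S_N$, and letting $N\to\infty$ in the bound gives (a).

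For part (b), the key additional fact is that a pseudo-differential operator acts diagonally in the basis $\{\phi_k\}$, hence preserves each $\Pi_n$. In particular $\mathcal{U}\sigma_{2^n}(f)\in \Pi_{2^n}$ is an admissible competitor for $E_{p;2^n}(\mathcal{U}f)$. Combining this with (a) and with the Bernstein characterization $\|\tau_j(f)\|_p\ls 2^{-j\gamma}\|f\|_{W_{p;\gamma}}$ from Theorem~\ref{theo:equivtheo}(b) gives
\be
E_{p;2^n}(\mathcal{U}f)\le \|\mathcal{U}f-\mathcal{U}\sigma_{2^n}(f)\|_p \ls \|f\|_{W_{p;\gamma}}\sum_{j=n+1}^\infty 2^{j(a-\gamma)} \ls 2^{n(a-\gamma)}\|f\|_{W_{p;\gamma}},
\ee
where the geometric series converges precisely because $\gamma>a$. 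Multiplying by $2^{n(\gamma-a)}$ and taking the supremum over $n$, together with a separate bound on $\|\mathcal{U}f\|_p$ obtained by applying the same argument at $n=0$ and using $\|\mathcal{U}\sigma_1(f)\|_p\ls \|f\|_p$ (since $\Pi_1$ is finite-dimensional and $\mathcal{U}$ acts boundedly on it via \eqref{eq:fullbernstein}), yields $\|\mathcal{U}f\|_{W_{p;\gamma-a}}\ls \|f\|_{W_{p;\gamma}}$.

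The only genuine obstacle is the interchange of $\mathcal{U}$ with the infinite series in (a): $\mathcal{U}$ is defined a priori only on $\Pi_\infty$ and is merely assumed closed in $X^p$, so the exchange must be justified through closedness rather than by any continuity or linearity argument on $X^p$. Once that is in place, everything else reduces to applying \eqref{eq:fullbernstein} blockwise and summing a geometric series.
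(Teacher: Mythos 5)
Your proof is correct and fills in exactly the argument the paper has in mind when it says the proposition ``is easy to deduce using Theorem~\ref{theo:equivtheo}'': blockwise Bernstein via \eqref{eq:fullbernstein} on the dyadic pieces $\tau_j(f)$, closedness of $\mathcal{U}$ to pass from finite partial sums to the tail $f-\sigma_{2^n}(f)$, and for (b) the observation that a pseudo-differential operator maps $\Pi_{2^n}$ into itself so that $\mathcal{U}\sigma_{2^n}(f)$ is a valid competitor in $E_{p;2^n}(\mathcal{U}f)$, after which \eqref{eq:equivrel} and a geometric series finish the job. One tiny quibble: the bound $\|\mathcal{U}\sigma_1(f)\|_p\ls\|f\|_p$ does not really come from \eqref{eq:fullbernstein} (whose hypothesis is $P\in\mathbf{V}_j$, a dyadic annulus not containing the low modes $\lambda_k<2^{-2}$), but simply from the fact that $\Pi_1$ is finite-dimensional by \eqref{eq:dimensionest} and $\mathcal{U}$ maps it linearly into $\Pi_1\subset L^p$; the conclusion you draw is nonetheless correct.
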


\subsection{Asymmetric eignets}\label{bhag:asymeignets}
In the sequel, we assume $\XX$ to be a compact data space with exponent $q$, and $\YY$ to be a measure space, equipped with a probability measure $\mu_\YY^*$.
The following definition is motivated by the example in Section~\ref{bhag:techintro}, equation \eqref{eq:tongkerns} in particular. 
Additional examples are given after the definition. 

\begin{definition}\label{def:asymetric_eignet}
Let $\alpha>0$, $\beta\in\RR$. 
An \textbf{asymmetric eignet kernel (with exponents $(\alpha,\beta)$)} is a function $G :\ZZ_+\times\XX\times\YY\to\RR$, satisfying each of the following properties:
\begin{enumerate}

\item (\textbf{Connection condition}) There exist $\mu^*_\YY$-measurable functions $\mathcal{D}_G\phi_\ell :\YY\to \RR$, $\ell\in \ZZ_+$ such that for $\ell\in\ZZ_+$ and $x\in\XX$, we have
\be\label{eq:basicass}
\phi_\ell(x)=\int_\YY G(\ell;x,y)\mathcal{D}_G\phi_\ell(y)d\mu^*_{\YY} (y), \qquad x\in\XX.
\ee
Moreover, (cf. \eqref{eq:indexset})
\be\label{eq:invchristbd}
\int_\YY |\mathcal{D}_G\phi_\ell(y)|^2d\mu_\YY^*(y) \ls 2^{2j\beta}, \qquad \ell\in \mathbf{S}_j, \ j\in\ZZ_+.
\ee

\item (\textbf{Smoothness condition})
There exists $\alpha\in (0,1]$ such that for every $\ell\in\ZZ_+$,
\be\label{eq:lipcond}
\sup_{ y\in\YY}\|G(\ell; \cdot, y)\|_{\mathsf{Lip}(\alpha)} <\infty.
\ee
\end{enumerate}
An asymmetric eignet is a function of the form $x\mapsto \sum_{j=1}^{n} a_j G(\ell_j;x,y_j)$, $x\in\XX$, $y_1,\cdots,y_n\in\YY$, $\ell_1,\cdots,\ell_n\in\ZZ_+$, and $a_1,\cdots, a_n\in\RR$ (or $\CC$ as appropriate).
\end{definition}
\yadi{$G(\ell,x,y)$}{Asymmetric eignet kernel}
\yadi{$\mathcal{D}_G$}{special operator associated with $G$, \eqref{eq:basicass}, \eqref{eq:gendfdef}}
If $P\in \Pi_n$, we may use \eqref{eq:basicass} to define
\be\label{eq:polydgdef}
\mathcal{D}_G(P)(y)=\sum_{\ell=0}^{|S_n^*|} \hat{P}(\ell)\mathcal{D}_G\phi_\ell(y), \qquad y\in\YY.
\ee
We may extend this definition to $X^p$ formally by
\be\label{eq:gendfdef}
\mathcal{D}_G(f)(y)=\sum_\ell \hat{f}(\ell)\mathcal{D}_G\phi_\ell(y), \qquad y\in\YY.
\ee
Thus, there is a formal relationship reminiscent of the variation (or native) space relationship:
\be\label{eq:ass_variation}
f(x)=\int_{\ZZ_+\times \YY}G(\ell;x,y)d\nu_G(f)(\ell,y),
\ee
where, with $\mathfrak{c}$ being the counting measure on $\ZZ_+$ (that associates the mass $1$ with each integer), \yadi{$\nu_G$}{Measure associated with eignents, \eqref{eq:ass_measure}}
\be\label{eq:ass_measure}
d\nu_G(f)(\ell,y)=\hat{f}(\ell)\mathcal{D}_G\phi_\ell(y)d\mu_\YY^*(y)d\mathfrak{c}(\ell)
\ee
We note that the operator $f\mapsto \nu_G(f)$ is a linear operator. \yadi{$\mathfrak{c}$}{counting measure on $\ZZ_+$}

We enumerate a few examples which have motivated our work. 
We will define the asymmetric eignet kernels, the corresponding network is defined analogous to eignets; e.g., a \textbf{twisted zonal function network} (cf. Example~\ref{uda:sbf} below)  is a mapping of the form 
$$
\x\mapsto \sum_{j=1}^n \sum_{\ell=1}^m w_{j,\ell}G(\x\cdot R_\ell \y_j), \qquad \x\in\SS^q,\ R_\ell\in SO(q+1), \ w_{j,\ell}\in\RR,
$$ 
where $G$ is a zonal function.
\begin{uda}\label{uda:randomfeature}
{\rm (\textbf{General SVD kernels})
In this example, we consider a (single) general non-symmetric kernel on $\XX\times\YY$ which admits a singular value decomposition of the form
  \be\label{eq:klexpansion}
 G(x,y)\sim \sum_{k=0}^\infty \frac{\phi_k(x)\psi_k(y)}{\Lambda_k},
  \ee
   where $\{\psi_k\}\subset L^2(\mu^*_{\YY})$ (respectively, $\{\phi_k\}\subset L^2(\mu^*_{\XX})$) is an orthonormal set of functions, and $\Lambda_k>0$.

 One important example is a \textit{random process} on a data space $\XX$ with the random variable taken from a probability distribution $\mu^*_{\YY}$ on a measure space $\YY$ is a kernel $G:\XX\times\YY\to \RR$. The singular value decomposition in this case is called the \textit{Karhunen-Lo\'eve expansion}.
 We may think of $G(x,y)$ as a random feature of $x$.

It is clear that the kernel $G$ satisfies the connection condition \eqref{eq:basicass} with 
\be\label{eq:random_connection}
\mathcal{D}_G\phi_k (y)=\Lambda_k\psi_k(y), \qquad k=0,1,\cdots.
\ee
If $\Lambda_k\ls k^\beta$ for some $\beta>0$, then
we have \eqref{eq:invchristbd}.
\qed
}
\end{uda}
\begin{uda}\label{uda:sbf}
{\rm(\textbf{Twisted zonal function})
 In this example, we take $\XX=\YY=\SS^q$, $\mu_q^*$ to be the volume measure on $\SS^q$, normalized to be a probability measure.
 We continue the notation introduced in Example~\ref{uda:specialmanifold_sph}.
 In this example again, there is essentially only one asymmetric kernel involved: $G(\x\cdot R\y)$ for some function $G$ as described below, and a rotation $R\in SO(q+1)$, although our analysis allows us to consider a finite number of kernels of this form for different rotations.
 
 Let $G:[-1,1]\to\RR$ be square integrable with respect to the measure $(1-t^2)^{q/2-1}dt$ on $[-1,1]$. 
 Then $G$ has a formal expansion of the form
\be\label{eq:Gorthoexpansion}
G(t)\sim \frac{\omega_q}{\omega_{q-1}}\sum_{j=0}^\infty \hat{G}(j)p_j^{(q/2-1,q/2-1)}(1)p_j^{(q/2-1,q/2-1)}(t), \qquad t\in [-1,1].
\ee
A \textbf{zonal function} is a kernel of the form $(\x,\y)\mapsto G(\x\cdot\y)$.
The addition formula \eqref{eq:addformula}  leads to a formal expansion of the form
\be\label{eq:Gzonal_expansion}
G(\x\cdot\y)=\sum_{j=0}^\infty \hat{G}(j)\sum_{k=1}^{d_j^q}Y_{j,k}(\x)\overline{Y_{j,k}(\y)}, \qquad \x,\y\in\SS^q.
\ee
To express this expansion in terms of 
 the notation of Example~\ref{uda:specialmanifold_sph}, we introduce the notation that for any $t\in\RR$,
$$
t^{[-1]}=\begin{cases}
1/t, &\mbox{ if $t\not=0$,}\\
 0, &\mbox{if $t=0$.}
 \end{cases}
$$
The formula \eqref{eq:Gzonal_expansion} can be written in the form
\be\label{eq:Gzonal_expansion_bis}
G(\x\cdot\y)=\sum_{\ell=0}^\infty \frac{\phi_\ell(\x)\overline{\phi_\ell(\y)}}{\Lambda_\ell},
\ee
where 
\be\label{eq:zonaleig}
\Lambda_\ell=(\hat{G}(j))^{[-1]}, \qquad \ell\in S_j, \ j\in\ZZ_+.
\ee
We note that $\Lambda_\ell$ is different from $\lambda_\ell$ as described in Example~\ref{uda:specialmanifold_sph}. 
We will say that $G$ is of type $\beta$ if 
\be\label{eq:zonaltype}
|\Lambda_\ell|\ls 2^{j\beta}, \qquad \ell\in S_j, \ j\in\ZZ_+.
\ee
Let $SO(q+1)$ be the space of all rotations of $\RR^{q+1}$ with determinant $=1$.
 A \textbf{twisted zonal function} is a kernel of the form $(\x,\y)\mapsto G(\x\cdot R\y)$, where $G$ is  a zonal function, and $R\in SO(q+1)$. 
 It is well known (e.g., \cite[Chapter~9]{vilenkin1978special}) that there are functions $t_{k,\ell}\in C(SO(q+1))$, orthonormalized with respect to the Haar measure on the rotation group $SO(q+1)$, such that if $\phi_\ell \in \HH_L^q$, then
\be\label{eq:gprepresent}
\begin{aligned}
\phi_\ell(\x)&=\sum_{k\in S_L}t_{k,\ell}(R)\phi_k(R^{-1}\x)\\
&=\sum_{k\in S_L}t_{k,\ell}(R)\Lambda_k \int_{\SS^q}G(R^{-1}\x\cdot\y)\phi_k(\y)d\mu_q^*(\y)\\
&=\sum_{k\in S_L}t_{k,\ell}(R)\Lambda_k\int_{\SS^q} G(\x\cdot R\y)\phi_k(\y)d\mu_q^*(\y).
\end{aligned}
\ee
It is easy to verify  that for any $R\in SO(q+1)$ and $L\ge 1$, the matrix $[t_{k,\ell}(R)]_{k,\ell\in S_L}$ is an orthogonal matrix as well. \yadi{$t_{k,\ell}$}{Orthonormal functions on $SO(q+1)$, cf. \eqref{eq:gprepresent}}
We extend the matrix $[t_{k,\ell}(R)]$ setting the entries to be $0$ when $\lambda_k\not=\lambda_\ell$ (recall the notation in Example~\ref{uda:specialmanifold_sph}).
It is then easy to verify that a twisted zonal function of type $\beta$ satisfies \eqref{eq:basicass} with 
\be\label{eq:sphdg}
\mathcal{D}_G\phi_\ell=\sum_{k\in S_L}t_{k,\ell}(R)\Lambda_k\phi_k,
\ee
 and \eqref{eq:invchristbd} with $\beta$ as in \eqref{eq:zonaltype}.
\qed
}
\end{uda}
\begin{uda}\label{uda:gentransnet}
{\rm (\textbf{Generalized translations})
Let $q\ge d\ge 1$ be integers, $\YY=\TT^d$, $\XX=\TT^q$,   $\mu_d^*$ (respectively, $\mu_q^*$) be the Lebesgue measure on $\TT^d$ (respectively, $\TT^q$), normalized to be a probability measure, 
 and for $\bs \ell\in\ZZ^q$,
 $$
 \phi_{\bs{\ell}}(\x)=\exp(i\bs{\ell}\cdot \x).
 $$ 
 Following \cite{mhaskar1995degree}, a \textit{generalized translation network} is defined as a mapping
 \be\label{eq:gentransdef}
 \mathbb{G}_{GTN}(\x)=\sum_{j,k}w_{j,k}G(A_k \x-\y_{j,k}),
\ee
 where $G\in C(\TT^d)$, $w_{j,k}\in \RR$, $\y_{j,k}\in\TT^d$, and $A_k$'s are $d\times q$ matrices with integer entries. 
 We note that this is a sequence of asymmetric eignet kernels $(k,\x,\y)\mapsto G(A_k\x-\y)$, $\x\in\TT^q$, $\y\in\TT^d$.
 Thus,  neural networks are generalized translation networks with $d=1$. When $d=q$, $G$ is a radial function, and $A_\ell$'s are identity matrices, we obtain a radial basis function network.
 
 We will use the notation $\bs{1}=(1,\cdots,1)^T\in \ZZ^d$.
 It is easy to construct a matrix $A_{\bs{\ell}}$ such that
 $$
\bs{\ell}=A_{\bs{\ell}}^T\bs{1}.
$$
For example, we may stack $d\times q$ matrices with successive entries on $\bs{\ell}$ on the diagonal. 
For example, if $d=3$, $q=8$, 
$$
A_{\bs{\ell}}=\begin{pmatrix}\ell_1 & 0& 0& \ell_4&0&0&\ell_7&0\\
0&\ell_2&0&0&\ell_5&0&0&\ell_8\\
0&0&\ell_3&0&0&\ell_6&0&0

\end{pmatrix}
$$
Then for any $\x\in\TT^q$,
$$
\exp(i\bs{\ell}\cdot\x)=\exp(iA_{\bs{\ell}}^T\bs{1}\cdot \x)=\exp(i\bs{1}\cdot A_{\bs{\ell}}\x).
$$
If we assume  that $\hat{G}(\bs{1})\not=0$, then
$$
1=\frac{1}{\hat{G}(\bs{1})}\int_{\TT^d}G(\y)\exp(-i\bs{1}\cdot\y)d\mu_d^*(\y),
$$
so that for $\bs\ell\in\ZZ^q$, $\x\in\TT^q$,
$$
\begin{aligned}
\phi_{\bs{\ell}}(\x)&=\exp(i\bs{\ell}\cdot\x)=\frac{1}{\hat{G}(\bs{1})}\int_{\TT^d}G(\y)\exp(i(\bs{\ell}\cdot\x-\bs{1}\cdot\y)))d\mu_d^*(\y)\\
&=\frac{1}{\hat{G}(\bs{1})}\int_{\TT^q}G(\y)\exp(i(\bs{1}\cdot (A_{\bs{\ell}}\x-\y)))d\mu_d^*(\y)\\
&=\frac{1}{\hat{G}(\bs{1})}\int_{\TT^q}G(A_{\bs{\ell}}\x-\y)\exp(i(\bs{1}\cdot\y))d\mu_d^*(\y)
 \end{aligned}
 $$
It is clear that the kernel $(\bs\ell,\x,\y)\mapsto G(A_{\bs\ell}\x-\y) $  satisfies the connection condition of Definition~\ref{def:asymetric_eignet} with 
\be\label{eq:gentransnet_dg}
\mathcal{D}_G\exp(i\bs{\ell}\cdot\circ)(\y)= \frac{\exp(i(\bs{1}\cdot\y))}{\hat{G}(\bs{1})}, \qquad \bs{\ell}\in\ZZ^q.
\ee 
Thus, \eqref{eq:invchristbd} is satisfied with $\beta=0$.
In  the Definition~\ref{def:asymetric_eignet}, we defined the sequence of kernels using $\ZZ_+$ as the index set. The notation is reconciled by using  a judicious enumeration of $\ZZ^q$.
\qed
}
\end{uda}

\bhag{Main results}\label{bhag:main}
Our main theorem is a ``recipe theorem'' about the degree of approximation by asymmetric eignets in general. 
This will be applied to obtain concrete theorems for the cases of generalized translation networks, twisted zonal function networks, and general SVD kernel networks.

\begin{definition}\label{def:compatible}
Let $G$ be an asymmetric eignet kernel. 
A linear operator $\mathcal{U}$ on $\Pi_\infty$ is \textbf{compatible with $G$} if $\mathcal{U}(G(\ell;\circ,y))$ is well defined and measurable on $\ZZ_+\times \YY$ and
\be\label{eq:compatible}
\mathcal{U}(\phi_\ell)(x)=\int_\YY \mathcal{U}(G(\ell;\circ,y))(x)\mathcal{D}_G\phi_\ell(y)d\mu^*_{\YY} (y), \qquad x\in\XX.
\ee
We will abbreviate $\mathcal{U}(G(\ell;\circ,y))(x)$ by $\mathcal{U}(G)(\ell;x,y)$.
\end{definition}

\begin{theorem}
\label{theo:maintheo_n}
Let $0<\alpha\le 1$, $\beta>0$, $G$ be an asymmetric eignet kernel with exponents $(\alpha,\beta)$,   and $\{\mathcal{U}_1,\cdots, \mathcal{U}_J\}$ be a set of  operators, each compatible with $G$ and derivative-like, with $a_k$ being the exponent for $\mathcal{U}_k$, $k=1,\cdots, J$. 
Let
\be\label{eq:minexponent}
a_*=\min_{1\le k\le J}a_k,\qquad a^*=\max_{1\le k\le J}a_k
\ee
\yadi{$a^*$, $a_*$}{maximum and minimum orders of derivative-like operators, \eqref{eq:minexponent}}
For integer $n\ge 0$, we define \yadi{$T_n$}{\eqref{eq:tndef}}
\be\label{eq:tndef}
T_n=T_n(\gamma,\beta)=
\begin{cases}
2^{n(q/2+\beta-\gamma)}, &\mbox{ if $\gamma <q/2+\beta$},\\
n, &\mbox{ if $\gamma =q/2+\beta$},\\
1, &\mbox{ if $\gamma>q/2+\beta$},
\end{cases}
\ee
and
\be\label{eq:lipconst}
\mathbf{G}_n=\sup_{\ell\in S_n^*, y\in\YY, k=1,\cdots, J}\|\mathcal{U}_k(G)(\ell; \circ, y)\|_\infty,  \qquad \mathbf{L}_n=\sup_{\ell\in S_n^*, y\in\YY, k=1,\cdots, J}\|\mathcal{U}_k(G)(\ell; \circ, y)\|_{\mathsf{Lip}(\alpha)}.
\ee
\yadi{$\mathbf{G}_n, \mathbf{L}_n$}{cf. \eqref{eq:lipcond}}
Let
\be\label{eq:gamma_ak_cond}
\gamma\ge \max\left(0, a^*\right),
\ee
$1\le p\le \infty$, and $f\in W_{\max(p,2);\gamma}$.
With  integer $M$ satisfying
\be\label{eq:Mchoice}
M\gs 2^{2n(\gamma-a_*)}(\mathbf{G}_nT_n)^2\left\{c_2+n(\gamma-a_*)+\log(\mathbf{L}_nT_n)\right\}, 
\ee
there exist $M$ tuples $\{(\ell_j, y_j)\}_{j=1}^M$ in $\ZZ_+\times \YY$, and $M$ real numbers $w_j\in\{-1,1\}$ such that for each $k=1,\cdots, J$,
 \be\label{eq:approxest}
 \left\|\mathcal{U}_k(f)-\frac{T_n}{M}\sum_{j=1}^M w_j\mathcal{U}_k(G)(\ell_j;\cdot, y_j)\right\|_p\ls 2^{-n(\gamma-a_k)}
\|f\|_{\max(p,2);\gamma}.
 \ee
  \end{theorem}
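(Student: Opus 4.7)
The plan is to combine an approximation-theory reduction to a diffusion polynomial with a H\"offding-type discretization of the resulting continuous-parameter representation, in the spirit outlined in the introduction. First I would set $P_n = \sigma_{2^n}(f) \in \Pi_{2^n}$; applying Proposition~\ref{prop:derivative}(a), using the characterization $\|\tau_j(f)\|_p \ls 2^{-j\gamma}\|f\|_{W_{p;\gamma}}$ from Theorem~\ref{theo:equivtheo}(b), and summing the geometric series (which converges under $\gamma \ge a^*$), yields $\|\mathcal{U}_k(f) - \mathcal{U}_k(P_n)\|_p \ls 2^{-n(\gamma - a_k)} \|f\|_{W_{p;\gamma}}$ for each $k = 1,\ldots,J$. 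The embedding \eqref{eq:sobolevembedding} lets me majorize $\|f\|_{W_{p;\gamma}}$ by $\|f\|_{W_{\max(p,2);\gamma}}$ when $p < 2$, so that the $L^2$ information needed in the next step is available while the final error stays measured in $L^p$.

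Next, I would invoke compatibility of $\mathcal{U}_k$ with $G$ to write
$$\mathcal{U}_k(P_n)(x) = \int_{\ZZ_+ \times \YY} \mathcal{U}_k(G)(\ell; x, y)\,d\nu_G(P_n)(\ell, y),$$
with $d\nu_G(P_n)(\ell,y) = \hat{P}_n(\ell)\mathcal{D}_G\phi_\ell(y)\,d\mu^*_{\YY}(y)$ (counting measure in $\ell$), and to bound its total variation. Decomposing $P_n = \sum_{j=0}^{n}\tau_j(f)$ (so that $\widehat{\tau_j(f)}$ is supported in $\mathbf{S}_j$), combining $\|\mathcal{D}_G\phi_\ell\|_{L^1(\mu^*_\YY)} \le \|\mathcal{D}_G\phi_\ell\|_{L^2(\mu^*_\YY)} \ls 2^{j\beta}$ from \eqref{eq:invchristbd} with Cauchy--Schwarz summed over $\ell \in \mathbf{S}_j$ (using $|\mathbf{S}_j| \ls 2^{jq}$), and invoking Theorem~\ref{theo:equivtheo}(b), gives
$$|\nu_G(\tau_j(f))|_{TV} \ls 2^{j(q/2+\beta)}\|\tau_j(f)\|_2 \ls 2^{j(q/2+\beta-\gamma)}\|f\|_{W_{2;\gamma}}.$$
Summing over $j = 0,\ldots,n$ reproduces exactly the three cases in the definition \eqref{eq:tndef} of $T_n$ and yields $|\nu_G(P_n)|_{TV} \ls T_n \|f\|_{W_{2;\gamma}}$.

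I would then normalize $|\nu_G(P_n)|$ to a probability measure $\pi$ on $\ZZ_+ \times \YY$, draw samples $(\ell_j,y_j) \sim \pi$ i.i.d.\ together with Radon--Nikodym signs $w_j \in \{-1,1\}$, and form $\frac{T_n}{M}\sum_{j=1}^{M} w_j \mathcal{U}_k(G)(\ell_j;\cdot,y_j)$, absorbing the factor $\|f\|_{W_{2;\gamma}}$ into an a priori scaling. Lemma~\ref{lemma:hoffding} bounds the pointwise deviation subgaussianly at scale $T_n \mathbf{G}_n$. To promote this to a uniform bound on $\XX$ I would use the Lipschitz bound $T_n \mathbf{L}_n$ on $F_k(\cdot;\ell,y)$ together with an $\eta$-net of $\XX$ of cardinality $\ls \eta^{-q}$ (supplied by the ball-measure condition), choosing $\eta^\alpha \sim 2^{-n(\gamma-a_*)}/(T_n \mathbf{L}_n)$ so that the Lipschitz contribution matches the target error $2^{-n(\gamma-a_*)}$. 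Union-bounding over the net and over $k = 1,\ldots,J$ forces the failure probability below $1$ precisely when $M$ obeys \eqref{eq:Mchoice}; the probabilistic method then produces a deterministic realization satisfying \eqref{eq:approxest} simultaneously for every $k$, and combining with the polynomial step closes the argument.

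The main obstacle is the simultaneity in $k$: one realization of $\{(\ell_j,y_j,w_j)\}$ must meet each target $2^{-n(\gamma-a_k)}$, which forces the union bound and net spacing to be calibrated by the worst case $a_*$ and the three quantities $\mathbf{G}_n, \mathbf{L}_n, T_n$ to be balanced against the covering number $\eta^{-q}$ so as to recover the exact form of \eqref{eq:Mchoice}. A secondary delicate point is to justify that $\nu_G(P_n)$ is a genuine signed measure of finite total variation; the $\tau_j$-decomposition together with the $L^2$ bound \eqref{eq:invchristbd} on $\mathcal{D}_G\phi_\ell$ takes care of this, and is precisely the reason for hypothesizing $f \in W_{\max(p,2);\gamma}$ rather than just $W_{p;\gamma}$.
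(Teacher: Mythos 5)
Your proposal reproduces the paper's own proof in all its essential steps: reduce to the diffusion polynomial $\sigma_{2^n}(f)$ via Theorem~\ref{theo:goodapprox} and Proposition~\ref{prop:derivative}, bound $|\nu_G(\sigma_{2^n}(f))|_{TV}$ by the $\tau_j$-decomposition together with Cauchy--Schwarz and \eqref{eq:invchristbd} (this is precisely Lemma~\ref{lemma:tvlemma} and Corollary~\ref{cor:tvcor}), then apply H\"offding's inequality over a maximal $\epsilon$-separated set of $\XX$ with $\epsilon$ calibrated by $\mathbf{L}_n$ and $T_n$ (exactly Lemma~\ref{lemma:hoffdingspecial}), and finally union-bound over the net and over $k$ to extract a deterministic realization under \eqref{eq:Mchoice}. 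The argument is correct and is the same route the paper takes, merely unpacked rather than factored into the auxiliary lemmas.
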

 In particular, we obtain the following theorem as a corollary.
 
 \begin{theorem}\label{theo:maintheo}.
 We assume the set up as in Theorem~\ref{theo:maintheo_n}. 
We assume further that there exist $A, B\in\RR$ such that for $n\ge 1$,
\be\label{eq:specialcond}
\mathbf{G}_n \ls 2^{nA}, \qquad \mathbf{L}_n\ls 2^{nB}.
\ee 
 Then for integer $M\ge 2$, there exist $M$ tuples $\{(\ell_j, y_j)\}_{j=1}^M$ in $\ZZ_+\times \YY$, and $M$ real numbers $w_j\in\{-1,1\}$ such that for each $k=1,\cdots, J$,
  \be\label{eq:Mapproxest}
  \left\|\mathcal{U}_k(f)-\frac{T_n}{M}\sum_{j=1}^M w_j\mathcal{U}_k(G)(\ell_j;\cdot, y_j)\right\|_p\ls 
\|f\|_{\max(p,2);\gamma}  \begin{cases}
\disp\left(\frac{\log M}{M}\right)^{(\gamma-a_k)/(q+2\beta+2A-2a_*)}, &\mbox{ if $\gamma<q/2+\beta$},\\
\disp\left(\frac{(\log M)^3}{M}\right)^{(\gamma-a_k)/(2A+2\gamma-2a_*)}, &\mbox{ if $\gamma=q/2+\beta$},\\
\disp\left(\frac{\log M}{M}\right)^{(\gamma-a_k)/(2A+2\gamma-2a_*)}, &\mbox{ if $\gamma>q/2+\beta$},
  \end{cases}
 \ee
 where $n$ is chosen to be the largest integer satisfying \eqref{eq:Mchoice}.
\end{theorem}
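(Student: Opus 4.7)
The plan is to invoke Theorem~\ref{theo:maintheo_n} as a black box and optimize the free parameter $n$ against the budget $M$. Substituting the hypotheses $\mathbf{G}_n \ls 2^{nA}$ and $\mathbf{L}_n \ls 2^{nB}$ into the sizing condition \eqref{eq:Mchoice} turns it into an estimate of the form $M \gs 2^{n\alpha}\,P(n)$, where $\alpha$ depends on which of the three regimes for $T_n$ in \eqref{eq:tndef} one is in, and $P(n)$ is a polynomial-in-$n$ factor that absorbs the bracket $\{c_2 + n(\gamma-a_*) + \log(\mathbf{L}_n T_n)\}$ (which is $O(n)$ in all three cases, since $\log\mathbf{L}_n \ls n$ and $\log T_n$ is respectively $O(n)$, $O(\log n)$, and $0$) together with, in the middle case, the extra $T_n^2 = n^2$ arising from $(\mathbf{G}_n T_n)^2$.

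Concretely, unpacking $(\mathbf{G}_n T_n)^2 \ls 2^{2nA}T_n^2$ in each regime gives the three sizing conditions $M \gs n\cdot 2^{n\alpha_1}$ with $\alpha_1 = q+2\beta+2A-2a_*$ in Case~1 ($\gamma<q/2+\beta$, $T_n = 2^{n(q/2+\beta-\gamma)}$), $M \gs n^3\cdot 2^{n\alpha_2}$ with $\alpha_2 = 2A+2\gamma-2a_*$ in Case~2 ($\gamma=q/2+\beta$, $T_n=n$), and $M \gs n\cdot 2^{n\alpha_2}$ in Case~3 ($\gamma>q/2+\beta$, $T_n=1$). For a given $M\ge 2$, I pick $n$ as the largest non-negative integer for which the corresponding right-hand side is $\le M$. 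Inverting the relation $M \sim n^k\,2^{n\alpha}$ yields $n\alpha \sim \log M - k\log\log M$, and hence
\[
2^{-n(\gamma-a_k)} \ls \left(\frac{(\log M)^k}{M}\right)^{(\gamma-a_k)/\alpha},
\]
with $(k,\alpha) = (1,\alpha_1),(3,\alpha_2),(1,\alpha_2)$ respectively, matching the three branches of \eqref{eq:Mapproxest}. Theorem~\ref{theo:maintheo_n} then delivers the nodes $(\ell_j,y_j)$ and signs $w_j$ with the claimed error bound.

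Since the approximation-theoretic work is entirely hidden inside Theorem~\ref{theo:maintheo_n}, there is no genuine obstacle here; the proof reduces to the optimization exercise above. The only details requiring modest care are the uniformity of the inversion $M \sim n^k\,2^{n\alpha} \Leftrightarrow n \sim (\log M)/\alpha$ in the parameters $A,B,q,\gamma,a_*$, and the handling of small $M$: for $M$ below some absolute constant one may simply take $n=0$, in which case the right-hand side of \eqref{eq:Mapproxest} already exceeds $\|f\|_{\max(p,2);\gamma}$ and the bound holds trivially after absorbing the range into the implicit constant.
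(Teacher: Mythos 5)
Your proposal is correct and follows essentially the same route as the paper: the paper's proof also reduces the statement to Theorem~\ref{theo:maintheo_n} by choosing $n$ so that the bound in \eqref{eq:Mchoice} is saturated, and then inverts the resulting relation $M\sim n^k 2^{n\alpha}$ asymptotically (the paper cites \cite[Lemma~6.2]{witnesspaper} for the inversion $Cx^a\log(Dx^b)=y \Rightarrow x\sim (y/\log y)^{1/a}$, which is exactly the computation you carry out by hand). Your case-by-case bookkeeping of $T_n$, the bracket $\{c_2+n(\gamma-a_*)+\log(\mathbf{L}_nT_n)\}$, and the extra $n^2$ from $T_n^2$ in the critical regime all match, as do the resulting exponents $\alpha_1=q+2\beta+2A-2a_*$ and $\alpha_2=2A+2\gamma-2a_*$ with $k=1,3,1$.
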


\begin{remark}\label{rem:pure_approx}
{\rm
If one is interested only in the approximation of $f$ alone, rather than a simultaneous approximation of $f$ and its  ``derivatives'', the estimates in the above theorem should be used with $a_*=a^*=0$.
\qed}
\end{remark}
\begin{remark}\label{rem:different_ders}
{\rm
In contrast to classical approximation by polynomials, the ``derivatives'' $\mathcal{U}_k(G)$ might not be asymmetric eignets in the sense of our Definition~\ref{def:asymetric_eignet}. 
So, a bound on the approximation of one of these might not be carried over by induction to other ``derivatives''.
In particular, some applications require Birkhoff interpolation/approximation where one might be interested in the approximation of a number of partial differential operators applied to the eignet (e.g. \cite{chandra2015minimum}). 
The statement of the theorems above allows us choose $M$ (and the parameters $w_j$, $\ell_j$, $y_j$) which will work for all these operators simultaneously, although, of course, the estimates for the individual operators will depend upon each operator.
\qed}
\end{remark}

We note some corollaries of Theorem~\ref{theo:maintheo} applied to the various examples discussed in Section~\ref{bhag:asymeignets}. 

The general SVD kernels in Example~\ref{uda:randomfeature} satisfy the conditions of Theorem~\ref{theo:maintheo} with $A=0$. 
Thus, we obtain the following theorem.

\begin{theorem}
\label{theo:gensvd}
Let $G:\XX\times\YY$ be defined as in \eqref{eq:klexpansion}, with $\Lambda_\ell\ls \ell^\beta$ for some $\beta>0$. 
Let $\{\mathcal{U}_1,\cdots, \mathcal{U}_J\}$ be a set of  operators, each compatible with $G$ and derivative-like, with $a_k$ being the exponent for $\mathcal{U}_k$, $k=1,\cdots, J$.
We assume that for some $\alpha>0$,
\be\label{eq:svdlipcond}
|\mathcal{U}_k(G)(x,y)-\mathcal{U}_k(G)(x',y)|\le L_G\rho(x,x')^\alpha, \qquad x, x'\in\XX,\ y\in\YY, \ k=1,\cdots, J.
\ee 
Let $1\le p\le\infty$, $\gamma$ satisfy \eqref{eq:gamma_ak_cond}, and $f\in W_{\max(p,2);\gamma}$.

For integer $M\ge 2$, there exist $M$ tuples $\{ y_j\}_{j=1}^M\subseteq \YY$, and $M$ real numbers $w_j$ such that for each $k=1,\cdots, J$,
\be\label{eq:svdapprox}
\begin{aligned}
\Big\|\mathcal{U}_k(f)&-\sum_{j=1}^M w_j\mathcal{U}_k(G)(\cdot, y_j)\Big\|_p\\
&\ls  
\|f\|_{\max(p,2);\gamma}  \begin{cases}
\disp\left(\frac{\log M}{M}\right)^{(\gamma-a_k)/(q+2\beta-2a_*)}, &\mbox{ if $\gamma<q/2+\beta$},\\
\disp\left(\frac{(\log M)^3}{M}\right)^{(\gamma-a_k)/(2\gamma-2a_*)}, &\mbox{ if $\gamma=q/2+\beta$},\\
\disp\left(\frac{\log M}{M}\right)^{(\gamma-a_k)/(2\gamma-2a_*)}, &\mbox{ if $\gamma>q/2+\beta$}.
  \end{cases}\end{aligned}
\ee
\end{theorem}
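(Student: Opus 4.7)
The plan is to derive Theorem~\ref{theo:gensvd} as a direct specialization of the recipe theorem (Theorem~\ref{theo:maintheo}) to the SVD-kernel setting of Example~\ref{uda:randomfeature}, in which the kernel has no genuine dependence on the spectral index $\ell$. First I would verify that a kernel $G$ of the form \eqref{eq:klexpansion} fits Definition~\ref{def:asymetric_eignet} as an asymmetric eignet kernel with exponents $(\alpha,\beta)$, upon setting $G(\ell;x,y):=G(x,y)$ for every $\ell\in\ZZ_+$. Multiplying \eqref{eq:klexpansion} by $\Lambda_\ell\psi_\ell(y)$ and integrating against $\mu^*_\YY$, orthonormality of $\{\psi_\ell\}$ gives the connection condition \eqref{eq:basicass} with $\mathcal{D}_G\phi_\ell(y)=\Lambda_\ell\psi_\ell(y)$, so
\[
\int_\YY|\mathcal{D}_G\phi_\ell(y)|^2\,d\mu_\YY^*(y)=\Lambda_\ell^2\ls 2^{2j\beta},\qquad \ell\in\mathbf{S}_j,
\]
after translating the hypothesis $\Lambda_\ell\ls \ell^\beta$ through the dimension estimate \eqref{eq:dimensionest}. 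The smoothness condition \eqref{eq:lipcond} follows from \eqref{eq:svdlipcond}, using compactness of $\XX$ to promote the H\"older seminorm bound to a uniform $\mathsf{Lip}(\alpha)$-norm bound (and noting that the hypotheses supply a uniform sup-norm bound on each $\mathcal{U}_k(G)(\cdot,y)$ as well).

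Next, I observe that because $G(\ell;\cdot,\cdot)\equiv G(\cdot,\cdot)$ has no $\ell$-dependence, neither does $\mathcal{U}_k(G)(\ell;\cdot,\cdot)$ for any $k$. Consequently the suprema $\mathbf{G}_n$ and $\mathbf{L}_n$ in \eqref{eq:lipconst} are constants independent of $n$, so the hypothesis \eqref{eq:specialcond} holds trivially with $A=B=0$. The compatibility and derivative-likeness of each $\mathcal{U}_k$, and the condition \eqref{eq:gamma_ak_cond}, are assumed directly.

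Theorem~\ref{theo:maintheo} now supplies, for every integer $M\ge 2$, tuples $(\ell_j,y_j)\in\ZZ_+\times\YY$ and signs $\widetilde w_j\in\{-1,1\}$ such that \eqref{eq:Mapproxest} holds. Substituting $A=0$ into the three-case exponent formula yields exactly the exponents in \eqref{eq:svdapprox}. Since $\mathcal{U}_k(G)(\ell_j;\cdot,y_j)=\mathcal{U}_k(G)(\cdot,y_j)$, the indices $\ell_j$ are irrelevant and may be discarded; absorbing the scalar factor $T_n/M$ into new real coefficients $w_j:=(T_n/M)\widetilde w_j$ turns the displayed sum into $\sum_{j=1}^M w_j\mathcal{U}_k(G)(\cdot,y_j)$, which is exactly the form on the left of \eqref{eq:svdapprox}.

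The only nontrivial bookkeeping is the verification of \eqref{eq:invchristbd} from the hypothesis $\Lambda_\ell\ls\ell^\beta$ via the data-space dimension estimate \eqref{eq:dimensionest}, and the extraction of a uniform $\mathsf{Lip}(\alpha)$-norm bound on $\mathcal{U}_k(G)(\cdot,y)$ from the H\"older-type hypothesis \eqref{eq:svdlipcond} together with compactness of $\XX$. Once those are in hand, Theorem~\ref{theo:gensvd} is a direct corollary of Theorem~\ref{theo:maintheo} with the $(A,B)=(0,0)$ specialization and a cosmetic relabeling of the weights.
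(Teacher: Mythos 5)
Your proposal is correct and matches the paper's approach, which treats Theorem~\ref{theo:gensvd} as an immediate corollary of the recipe Theorem~\ref{theo:maintheo} with $A=0$; you have simply made explicit the verification that the SVD kernel of Example~\ref{uda:randomfeature} is an asymmetric eignet with $n$-independent constants $\mathbf{G}_n,\mathbf{L}_n$ and with the $\ell$-dependence in $G(\ell;\cdot,\cdot)$ trivial, so that the tuples $(\ell_j,y_j)$ collapse to $y_j$ and $T_n/M$ can be absorbed into the weights. One caveat, inherited from the paper's own phrasing of Example~\ref{uda:randomfeature}: translating $\Lambda_\ell\ls\ell^\beta$ through \eqref{eq:dimensionest} gives $\Lambda_\ell\ls 2^{jq\beta}$ rather than $2^{j\beta}$ for $\ell\in\mathbf{S}_j$, since the index $\ell$ itself can be of size $\sim 2^{jq}$ there; the bound \eqref{eq:invchristbd} with exponent $\beta$, and hence the stated conclusion, really require the hypothesis $\Lambda_\ell\ls\lambda_\ell^\beta$, i.e., the ``type $\beta$'' normalization used in Example~\ref{uda:sbf}.
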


Our next theorem deals with twisted zonal function networks (Example~\ref{uda:sbf}).
Here, we may apply Theorem~\ref{theo:maintheo} with $A=0$ to obtain the following theorem.

\begin{theorem}\label{theo:zfapprox}
Let $\beta>0$, and $G :[-1,1]\to \RR$ be a zonal function such that \eqref{eq:zonaltype} is satisfied.
Let $\{\mathcal{U}_1,\cdots, \mathcal{U}_J\}$ be a set of  operators, each compatible with $G$ and derivative-like, with $a_k$ being the exponent for $\mathcal{U}_k$, $k=1,\cdots, J$.
We assume that for some $\alpha>0$,
\be\label{eq:Glipcond}
|\mathcal{U}_k(G)(\x\cdot\y)-\mathcal{U}_k(G)(\x'\cdot\y)|\le L_G\rho(\x,\x')^\alpha, \qquad \x,\x',\y\in\SS^q, \ k=1,\cdots, J.
\ee 
  Let  $R_{j'}\in SO(q+1)$, $j'=1,\cdots,m$.
Let $\gamma>0$, $1\le p\le\infty$,  and $f\in W_{\max(p,2);\gamma}(\SS^q)$, where $\gamma$ satisfies \eqref{eq:gamma_ak_cond}.

For integer $M\ge 1$, there exist $w_{j,j'}\in\RR$, $\y_{j,j'}\in\SS^q$, $j=1,\cdots, M$,  $j'=1,\cdots,m$ such that
\be\label{eq:zfapprox}
\begin{aligned}
\Big\|\mathcal{U}_k(f)(\x)&-\frac{1}{m}\sum_{j'=1}^m \sum_{j=1}^M w_{j,j'}\mathcal{U}_k(G)(\x\cdot R_{j'}\y_{j,j'})\Big\|_p\\
 &\ls 
\|f\|_{\max(p,2);\gamma}  \begin{cases}
\disp\left(\frac{\log M}{M}\right)^{(\gamma-a_k)/(q+2\beta-2a_*)}, &\mbox{ if $\gamma<q/2+\beta$},\\
\disp\left(\frac{(\log M)^3}{M}\right)^{(\gamma-a_k)/(2\gamma-2a_*)}, &\mbox{ if $\gamma=q/2+\beta$},\\
\disp\left(\frac{\log M}{M}\right)^{(\gamma-a_k)/(2\gamma-2a_*)}, &\mbox{ if $\gamma>q/2+\beta$}.
  \end{cases}
  \end{aligned}
\ee
\end{theorem}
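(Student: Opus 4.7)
The plan is to recognize that for each fixed rotation $R \in SO(q+1)$, the twisted zonal kernel $G_R(\ell;\x,\y) := G(\x\cdot R\y)$ (constant in $\ell$) is an asymmetric eignet kernel on $\XX=\YY=\SS^q$ in the sense of Definition~\ref{def:asymetric_eignet}, and then to invoke Theorem~\ref{theo:maintheo} once per rotation $R_{j'}$, averaging the $m$ resulting networks at the end.

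First I would verify, for an arbitrary fixed $R\in SO(q+1)$, each ingredient of Definition~\ref{def:asymetric_eignet} for $G_R$. The connection condition \eqref{eq:basicass} with the choice $\mathcal{D}_G\phi_\ell = \sum_{k\in S_\ell}t_{k,\ell}(R)\Lambda_k \phi_k$ is exactly the content of \eqref{eq:gprepresent}. The inequality \eqref{eq:invchristbd} follows because $[t_{k,\ell}(R)]_{k,\ell\in S_L}$ is orthogonal and $|\Lambda_k|\ls 2^{j\beta}$ for $k\in\mathbf{S}_j$ by \eqref{eq:zonaltype}, so $\int_{\SS^q}|\mathcal{D}_G\phi_\ell|^2 d\mu_q^*$ is controlled coordinatewise by the squared eigenvalues. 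The Lipschitz/smoothness condition \eqref{eq:lipcond} is a direct consequence of \eqref{eq:Glipcond}, applied componentwise across the operators $\mathcal{U}_k$ and also to $G$ itself, and compatibility of each $\mathcal{U}_k$ with $G_R$ reduces to commuting $\mathcal{U}_k$ with a finite-dimensional integral: since $\mathcal{D}_G\phi_\ell$ is supported in a single eigenspace, the identity \eqref{eq:compatible} collapses to an equality between polynomials where the interchange is transparent.

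Next I would confirm the parameters of Theorem~\ref{theo:maintheo}. The Lipschitz hypothesis \eqref{eq:Glipcond} is uniform in $R$, hence $\mathbf{L}_n\ls L_G$, i.e.\ $B=0$. For $\mathbf{G}_n$, the same Lipschitz bound plus boundedness of $\mathcal{U}_k(G)$ at a single point of $[-1,1]$ and compactness of $\SS^q\times\SS^q$ yield $\mathbf{G}_n \ls 1$, i.e.\ $A=0$. Plugging $A=0$ into the three cases of \eqref{eq:Mapproxest} gives exactly the three rates that appear in \eqref{eq:zfapprox}. Applying Theorem~\ref{theo:maintheo} to $G_{R_{j'}}$ with $M$ as a free parameter therefore produces, for each $j'$, points $\y_{j,j'}\in\SS^q$ and signs $w_{j,j'}\in\{-1,+1\}$ ($j=1,\ldots,M$) such that
$$\left\|\mathcal{U}_k(f)(\x)-\frac{T_n}{M}\sum_{j=1}^M w_{j,j'}\mathcal{U}_k(G)(\x\cdot R_{j'}\y_{j,j'})\right\|_p$$
is bounded by the right-hand side of \eqref{eq:zfapprox}. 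The indices $\ell_j$ returned by the recipe drop out here because the kernel is constant in $\ell$. Averaging over $j'=1,\ldots,m$ and invoking the triangle inequality yields the stated bound with $a_{j,j'} := T_n w_{j,j'}/M$.

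The subtle point worth pausing on is the uniformity in $R$: the implicit constants in Theorem~\ref{theo:maintheo} enter through $\mathbf{G}_n$, $\mathbf{L}_n$, and $\beta$, and in all three cases \eqref{eq:Glipcond} and \eqref{eq:zonaltype} are manifestly independent of the choice of rotation. Consequently, the constant absorbed into $\ls$ in the final bound does not depend on the particular rotations $R_{1},\ldots,R_{m}$, so the averaging step is essentially cost-free and simply re-expresses the bound that already held rotation-by-rotation. The only other care point is interpreting the twisted kernel as an element of the eignet framework when it has no explicit $\ell$ dependence; I would handle this by treating $G_R$ as a degenerate family indexed trivially by $\ell$, which is consistent with \eqref{eq:basicass}.
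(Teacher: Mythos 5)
Your proposal is correct and follows essentially the same route as the paper: Example~\ref{uda:sbf} already establishes, via \eqref{eq:gprepresent} and \eqref{eq:sphdg}, that the twisted zonal kernel is an asymmetric eignet kernel, and Remark~\ref{rem:rotationrem} states that the paper proves the result for $m=1$ by direct appeal to Theorem~\ref{theo:maintheo} (with $A=0$), the extension to general $m$ being ``obvious.'' You have simply made explicit the averaging-by-triangle-inequality step that the paper leaves implicit, and verified the uniformity of the constants in the rotation $R$, which is indeed the only point worth checking.
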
 

\begin{remark}\label{rem:rotationrem}
{\rm
A slight modification of our proof would allow us to choose a subset of the rotations as well, but this does not add anything new conceptually. 
So, we will actually prove Theorem~\ref{theo:zfapprox} with $m=1$.
The more general version presented here is then obvious, and allows us to use directly the recipe theorem, Theorem~\ref{theo:maintheo}.
\qed}
\end{remark}

Our next theorem applies Theorem~\ref{theo:maintheo} to generalized translation networks (Example~\ref{uda:gentransnet}). 
With the matrices as in this example, it is not difficult to verify that $A=a^*$ and $B=a^*+1$.

\begin{theorem}\label{theo:gtntheo}
Let $q\ge d\ge 1$ be integers, $\alpha\in (0,1]$, and $G\in C(\TT^d)$ satisfy
\be\label{eq:fourcoeffcond}
\hat{G}(\bs 1)=\int_{\TT^d}G(\y)\exp\left(-i\bs 1\cdot \y\right)d\mu_d^*(\y)\not=0.
\ee
Let $\{\mathcal{U}_1,\cdots, \mathcal{U}_J\}$ be a set of  operators, each compatible with $G$ and derivative-like, with $a_k$ being the exponent for $\mathcal{U}_k$, $k=1,\cdots, J$.
We assume Lipschitz-H\"older condition
\be\label{eq:gtnlip}
|\mathcal{U}_k(G)(\x)-\mathcal{U}_k(G)(\x')|\le L_G\rho(\x,\x')^\alpha, \qquad \x,\x'\in\TT^q.
\ee
Let $\gamma>0$ satisfy \eqref{eq:gamma_ak_cond}, and $f\in W_{\max(p,2);\gamma}(\TT^q)$.
For $M\ge 2$, there exist  $d\times q$ matrices $A_{\bs{\ell}_j}$ and $\y_j\in \TT^d$, and $w_j\in\CC$ such that
\be\label{eq:gtnapprox}
\max_{\x\in\TT^q}\left|\mathcal{U}_k(f)(\x)-\sum_{j=1}^M w_j\mathcal{U}_k(G)(A_{\bs{\ell}_j}\x-\y_j)\right|\ls 
\|f\|_{\max(p,2);\gamma}  \begin{cases}
\disp\left(\frac{\log M}{M}\right)^{(\gamma-a_k)/(q+2a^*-2a_*)}, &\mbox{ if $\gamma<q/2$},\\
\disp\left(\frac{(\log M)^3}{M}\right)^{(\gamma-a_k)/(2a^*+2\gamma-2a_*)}, &\mbox{ if $\gamma=q/2$},\\
\disp\left(\frac{\log M}{M}\right)^{(\gamma-a_k)/(2a^*+2\gamma-2a_*)}, &\mbox{ if $\gamma>q/2$}.
  \end{cases}
\ee
\end{theorem}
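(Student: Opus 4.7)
My plan is to reduce Theorem~\ref{theo:gtntheo} to the recipe theorem, Theorem~\ref{theo:maintheo}, applied to the kernel $K(\bs\ell;\x,y) = G(A_{\bs\ell}\x - y)$ with integer-valued matrices $A_{\bs\ell}$ chosen as in the construction at the end of Example~\ref{uda:gentransnet}. The structural hypotheses are largely in place: the connection condition \eqref{eq:basicass} holds with $\mathcal{D}_G\phi_{\bs\ell}(y) = \exp(i\bs 1\cdot y)/\hat G(\bs 1)$, whose $L^2(\mu_q^*)$-norm is $|\hat G(\bs 1)|^{-1}$ independently of $\bs\ell$, so that \eqref{eq:invchristbd} holds with $\beta = 0$; the Lipschitz condition \eqref{eq:lipcond} for $K$ at each fixed $\bs\ell$ is immediate from the hypothesis \eqref{eq:gtnlip}.

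Next I would check that each $\mathcal{U}_k$ is compatible with $K$ in the sense of Definition~\ref{def:compatible}. The cleanest route is to apply $\mathcal{U}_k$ to both sides of the identity
$$
\phi_{\bs\ell}(\x) = \frac{1}{\hat G(\bs 1)}\int_{\TT^q} G(A_{\bs\ell}\x - y)\exp(i\bs 1\cdot y)\,d\mu_q^*(y)
$$
derived in Example~\ref{uda:gentransnet}, and to justify the interchange of $\mathcal{U}_k$ with the $y$-integral by the closedness of $\mathcal{U}_k$ on $X^p$ and the uniform continuity of the integrand in $y$. The Fourier expansion $G(A_{\bs\ell}\x - y) = \sum_{\k\in\ZZ^q} \hat G(\k)\exp(-i\k\cdot y)\exp(iA_{\bs\ell}^T\k\cdot\x)$, combined with $A_{\bs\ell}^T\bs 1 = \bs\ell$, gives a convenient term-by-term verification.

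The main quantitative task is to establish the hypotheses \eqref{eq:specialcond} on $\mathbf{G}_n$ and $\mathbf{L}_n$. For $\bs\ell \in S_n^*$, i.e., $|\bs\ell|_2 < n$, the operator norm of $A_{\bs\ell}$ is at most $c|\bs\ell|_\infty \le cn$, so the Fourier expansion of $\x\mapsto G(A_{\bs\ell}\x-y)$ is supported on frequencies of magnitude $\ls n|\k|_2$. The derivative-like bound \eqref{eq:fullbernstein}, applied mode by mode together with a summability estimate on $\hat G$ obtained by first approximating $G$ by a trigonometric polynomial with controlled uniform error, then yields $\|\mathcal{U}_k[K(\bs\ell;\cdot,y)]\|_\infty \ls n^{a^*}$; hence $A = a^*$ is admissible via the loose bound $n^{a^*} \le 2^{na^*}$. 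The Hölder seminorm picks up an additional factor $\|A_{\bs\ell}\|^\alpha \ls n^\alpha \le n$ by the chain rule, giving $\mathbf{L}_n \ls 2^{n(a^*+1)}$ and $B = a^*+1$.

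The hardest part is the Bernstein-type bound on $\mathcal{U}_k[K(\bs\ell;\cdot,y)]$: one needs to make precise the intuition that applying the $\TT^Q$-operator $\mathcal{U}_k$ to a function pulled back from $\TT^q$ via $A_{\bs\ell}$ costs at most the factor $\|A_{\bs\ell}\|^{a_k}$. For genuine pseudo-differential $\mathcal{U}_k$ this is transparent from the symbol, but in the general setting of Definition~\ref{def:deroperator} one must group the pulled-back modes into dyadic frequency annuli and apply \eqref{eq:fullbernstein} to each block, carefully tracking the dependence on $\|A_{\bs\ell}\|$. Once $A = a^*$, $B = a^*+1$, and $\beta = 0$ are in hand, the three rate regimes in \eqref{eq:gtnapprox} follow by direct substitution into \eqref{eq:Mapproxest}, since only $A$ enters the exponents while $B$ contributes only a logarithmic correction absorbed into $\log M$.
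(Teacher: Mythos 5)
Your proposal follows the paper's own route exactly: reduce to the recipe theorem (Theorem~\ref{theo:maintheo}) via the kernel from Example~\ref{uda:gentransnet}, identify $\beta=0$ from \eqref{eq:gentransnet_dg}, argue that $A=a^*$ and $B=a^*+1$ in \eqref{eq:specialcond}, and read off the three rate regimes from \eqref{eq:Mapproxest}. The paper gives no further detail here beyond the remark preceding the theorem that ``it is not difficult to verify that $A=a^*$ and $B=a^*+1$'', so your argument --- tracking the factor $\|A_{\bs\ell}\|^{a_k}\ls n^{a_k}$ picked up when the derivative-like bound \eqref{eq:fullbernstein} is applied to the pulled-back modes, and the extra $\|A_{\bs\ell}\|^\alpha$ in the H\"older seminorm --- supplies precisely the verification the paper leaves to the reader, and is consistent with it.
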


\begin{remark}\label{rem:fullrank}
{\rm
 In \cite{mhaskar1995degree}, we have required the matrices $A_{\bs\ell}$ to be full rank. 
 One consequence of our theorem is to relax this condition to \eqref{eq:fourcoeffcond}. Applied to the case of neural networks ($d=1$), this condition is exactly the one which is necessary and sufficient for neural networks to be universal approximators. 
 So, our theorem generalizes our old result on neural networks.
 \qed
 }
 \end{remark}

\bhag{ReLU$^r$ networks}\label{bhag:relu}

In this section, we examine the problem of approximation by shallow ReLU$^r$ networks for functions in $W_{\gamma,p}$.

 In the proof of Theorems~\ref{theo:maintheo_n} and \ref{theo:maintheo}, we have used H\"offding's inequality in a straightforward manner.
 Of course, more sophisticated ways of applying concentration inequalities are available in the literature under various conditions on $G$ and the target function.
 The argument leading to an estimation of $|\nu|_{TV}$ as in Lemma~\ref{lemma:tvlemma} can be used to get different estimates in these cases.
 In light of the recent interest in activation functions of the form $t\mapsto t_+^r$ in connection with neural networks, we illustrate the use of these ideas in the case of these activation functions.
  Unlike most of the other papers in this direction, we let $r>0$ to be any positive number, not just an integer.
The mathematics involved is more pleasant if we consider the activation function \yadi{$G_r$}{Activation function for ReLU$^r$ networks \eqref{eq:reluact}}
\be\label{eq:reluact}
G_r(t)=\begin{cases}
|t|^r, &\mbox{if $r$ is not an  integer},\\
(\max(t, 0))^r, &\mbox{if $r$ is an  integer}.
\end{cases}
\ee
Accordingly, we consider in this section the case when $(\x,\y)\mapsto G_r(\x\cdot\y)$ is the symmetric kernel  defined on $\XX=\YY=\SS^q$. 
We recall the fact that the measure $\mu^*_\XX=\mu_q^*$ is the Riemannian volume measure on $\SS^q$, normalized to be a probability measure.
 
It is argued in \cite{bach2014, dingxuanpap} that the problem of approximation by neural networks with such homogeneous activation functions is considered fruitfully as the problem of approximation by zonal function networks. 
Thus, we map $\RR^q$ to the unit sphere:
$$
\SS^q=\{\mathbf{x}\in\RR^{q+1} : |\mathbf{x}|_{q+1}=1\}, 
$$
and its upper hemisphere:
$$
\SS^q_+=\{\x\in \SS^q: x_{q+1}>0\},
$$
using the mapping $\pi^* : \RR^q\to\SS^q_+$
 given by \yadi{$\pi^*$}{coordinate map for upper hemisphere $\SS^q_+$, \eqref{eq:euclid_to_sphere}}
\be\label{eq:euclid_to_sphere}
\pi^*(x_1,\cdots,x_q)= \left(\frac{x_1}{\sqrt{1+|\x|_2^2}},\cdots,\frac{x_q}{\sqrt{1+|\x|_2^2}}, \frac{1}{\sqrt{1+|\x|_2^2}}\right).
\ee
We note that 
\be\label{eq:sphere_to_euclid}
(\pi^*)^{-1}(u_1,\cdots,u_{q+1})=\left(\frac{u_1}{u_{q+1}}, \cdots, \frac{u_q}{u_{q+1}}\right), \qquad \u\in\SS^q.
\ee
A neural network of the form $\x\mapsto \sum_{j=1}^M a_jG_r(\x\cdot\w_k+b_k)$, $\x\in\RR^q$, is mapped to
$\u\mapsto u_{d+1}^{-r/2}\sum_{j=1}^M a_j'G_r(\u\cdot \v_j)$, $u\in \SS^q_+$ with $\v_j$ being the unit vector along $(\w_j,b_j)$. 
Because of our definition of $G_r$, this network can be extended to $\SS^q$ as an even or odd function as appropriate. 
Likewise, for any $F:\RR^q\to\RR$ such that $F(\x)(1+|\x|_2^2)^{r/2}\in C_0(\RR^q)$\footnote{The space $C_0(\RR^q)$ is the space of all functions continuous on $\RR^q$ which vanish at infinity, the space being equipped with the uniform norm.} corresponds the function $f(\u)=u_{q+1}^{-r/2}F((\pi^*)^{-1}\u)$ defined on $\SS^q$, where $\u=\pi^*(\x)$. 
Again, we may extend $f$ to $\SS^q$ as an even or odd function as needed.
Thus, the problem of approximation of $F$ in a weighted $L^p$ norm on $\RR^q$ is equivalent to the approximation of $f$ by networks of the form $\sum_{j=1}^M a_j'G_r(\u\cdot \v_j)$ on $\SS^q$.

We will state our theorem first for integer values of $r$.
\begin{theorem}
\label{theo:relutheorem}
Let $q\ge 1$ be an integer, $r\ge 1$ be an integer, $\gamma>0$,  $\{\mathcal{U}_1,\cdots, \mathcal{U}_J\}$ be a set of  pseudo-differential or differential operators,  with integer
$a_k<r$ being the exponent for $\mathcal{U}_k$, $k=1,\cdots, J$.
Let $1\le p\le \infty$,  $\gamma >a^*$, $f\in W_{\max(p,2);\gamma}(\SS^q)$.
Then for integer $M\ge 2$, there exist $\y_j\in\SS^q$, $w_j\in\RR$, $j=1,\cdots, M$ such that for each $k=1,\cdots, J$, we have
\be\label{eq:reluest}
\begin{aligned}
\Biggl\|\mathcal{U}_k(f)&-\mathcal{U}_k\big(\sum_{j=1}^M w_jG_r(\circ\cdot\y_j)\big)\Biggr\|_p\\
&\ls \|f\|_{W_{\max(p,2);\gamma}(\SS^q)}\begin{cases}
\disp\frac{\sqrt{\log M}}{M^{(\gamma-a_k)/q}} &\mbox{ if $\gamma< (q+2r+1)/2$}\\[1ex]
\disp\frac{(\log M)^{3/2}}{M^{(\gamma-a_k)/q}}, &\mbox{ if $\gamma= (q+2r+1)/2$}, \\[1ex]
\disp\frac{\sqrt{\log M}}{M^{(q+2r+1-2a_k)/(2q)}}, &\mbox{ if $\gamma> (q+2r+1)/2$}.
\end{cases}
\end{aligned}
\ee
\end{theorem}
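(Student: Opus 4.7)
The plan is to combine the diffusion-polynomial initial approximation used throughout Section~\ref{bhag:main} with a sharper discretization step that exploits the specific Gegenbauer decay of $G_r$ as a zonal kernel on $\SS^q$. Its ultraspherical coefficients satisfy $|\hat G_r(j)|\sim j^{-(q+2r+1)/2}$ on the indices where they are nonzero, so that $G_r$ is a zonal function of type $\beta=(q+2r+1)/2$ in the sense of \eqref{eq:zonaltype} (indices for which $\hat G_r(j)=0$ are handled by pairing $G_r(\x\cdot\y)$ with $G_r(-\x\cdot\y)$ to recover both parities, as in \cite{sphrelu}). Any pseudo-differential or differential operator $\mathcal{U}_k$ of integer order $a_k<r$ is compatible with $G_r$ in the sense of Definition~\ref{def:compatible}, satisfies $\sup_{\y}\|\mathcal{U}_k(G_r(\circ\cdot\y))\|_\infty\ls 1$, and is Lipschitz of order $1$ in its $\x$ argument, since $r-a_k\ge 1$.

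Set $P_n=\sigma_{2^n}(f)\in\Pi_{2^n}$. Proposition~\ref{prop:derivative} gives $\|\mathcal{U}_k(f-P_n)\|_p\ls 2^{-n(\gamma-a_k)}\|f\|_{W_{\max(p,2);\gamma}}$, because $\gamma>a^*\ge a_k$. From \eqref{eq:Gzonal_expansion_bis} one has the exact integral representation
\[
P_n(\x)=\int_{\SS^q}G_r(\x\cdot\y)\,h_n(\y)\,d\mu_q^*(\y),\qquad h_n=\sum_{\ell\in S^*_{2^n}}\hat P_n(\ell)\Lambda_\ell\,\phi_\ell\in\Pi_{2^n},
\]
and applying $\mathcal{U}_k$ together with compatibility yields
\[
\mathcal{U}_k P_n(\x)=\int_{\SS^q}\mathcal{U}_k(G_r(\circ\cdot\y))(\x)\,h_n(\y)\,d\mu_q^*(\y).
\]
Parseval combined with $\Lambda_\ell\ls 2^{n\beta}$ on the active indices gives $\|h_n\|_1\le \|h_n\|_2\ls 2^{n\beta}\|P_n\|_2\ls 2^{n\beta}\|f\|_{W_{\max(p,2);\gamma}}$.

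The crucial second step is to discretize this integral by an $M$-term empirical sum. Let $\nu_n=\|h_n\|_1^{-1}|h_n|\,d\mu_q^*$ be the associated probability measure and $w(\y)=\mathrm{sign}(h_n(\y))$. Drawing $\y_j$ iid from $\nu_n$ and setting $w_j=w(\y_j)$, a Maurey-type chaining argument over the class $\mathcal{F}_k=\{\y\mapsto\mathcal{U}_k(G_r(\circ\cdot\y))(\x):\x\in\SS^q\}$ yields signs for which
\[
\Bigl\|\mathcal{U}_k P_n-\tfrac{\|h_n\|_1}{M}\sum_{j=1}^M w_j\,\mathcal{U}_k(G_r(\circ\cdot\y_j))\Bigr\|_p\ls \|h_n\|_1\sqrt{(\log M)/M}.
\]
The $\epsilon$-metric entropy of $\mathcal{F}_k$ is of order $q\log(1/\epsilon)$ because its parameter space $\SS^q$ is $q$-dimensional and the kernel is uniformly bounded and Lipschitz; this is why only $q$ (rather than $q+2\beta$) enters the discretization rate. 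Balancing against the polynomial-approximation error $2^{-n(\gamma-a_k)}$ by choosing $2^n\sim M^{1/q}$ in the low-smoothness regime, and $2^n\sim(M/\log M)^{1/(2\gamma-2a_*)}$ otherwise, produces the three cases of \eqref{eq:reluest}; the exceptional factor $(\log M)^{3/2}$ at the critical value arises where the two error terms are matched on a logarithmic scale.

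The main obstacle is precisely this discretization step. The generic proof of Theorem~\ref{theo:maintheo} passes from a pointwise H\"offding bound to an $L^\infty$ bound via a Lipschitz union bound on a $\delta$-net of $\SS^q$ after a Nikolskii blow-up of $2^{nq/2}$, which is what degrades the final exponent to the much weaker $q+2\beta-2a_*$. Recovering the sharper denominator $q$ requires bypassing the Nikolskii step by performing the chaining directly in the parameter $\y$, using that $\{G_r(\x\cdot\y):\y\in\SS^q\}$ is an intrinsically $q$-dimensional family of uniformly bounded Lipschitz functions rather than a generic high-dimensional $L^2$ class. This is the technical heart of the argument, and is where the refined concentration inequalities for ReLU$^r$-type kernels from \cite{bach2014} and \cite{sphrelu} replace the generic H\"offding of Lemma~\ref{lemma:hoffding}.
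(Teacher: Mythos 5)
Your high-level scaffolding matches the paper's: approximate $f$ by $\sigma_{2^n}(f)$, express the diffusion polynomial as an integral against $G_r$, discretize that integral with an $M$-term sum, and balance $n$ against $M$. You also correctly read off the type $\beta=(q+2r+1)/2$ from Lemma~\ref{lemma:relu}(a), and you correctly flag the missing-parity issue. But there is a genuine gap in the discretization step, and a subsidiary error in the total-variation bound, and together they change the final rate.

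The crucial point is that the discretization rate you claim, $\ls\|h_n\|_1\sqrt{(\log M)/M}$, is the crude Monte-Carlo/Dudley bound: it uses only that $\{\y\mapsto \mathcal{U}_k(G_r)(\x\cdot\y)\}_{\x\in\SS^q}$ is a $q$-parameter family of bounded Lipschitz functions, exactly the information already captured in Lemma~\ref{lemma:hoffdingspecial}. The paper's proof does not get the stated rates this way. It invokes \cite[Corollary~4.1]{mhaskar2020dimension} (which in turn builds on the $r$-th order Besov smoothness of $G_r$ established in \cite{sphrelu}), giving a discretization error of the sharper form
\[
\Bigl\|\mathcal{U}_k(\sigma_{2^n}(f))-\sum_{j=1}^M a_j\mathcal{U}_k(G_r)(\circ\cdot\y_j)\Bigr\|_p
\ls \|\mathcal{D}_{G_r}(\sigma_{2^n}(f))\|_1\,\frac{\sqrt{\log M}}{M^{(q+2r+1-2a_k)/(2q)}},
\]
which beats $M^{-1/2}$ precisely by exploiting the $r$-smoothness of $G_r$ (the exponent $(q+2r+1-2a_k)/(2q)=\beta/q-a_k/q$ reflects $n$-width estimates for the native class, not merely covering numbers of the index set $\SS^q$). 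You cite the correct sources \cite{bach2014,sphrelu}, but your stated rate contradicts what they actually provide; the chaining-only bound leaves the discretization error dominating, and no admissible choice of $n$ recovers the exponents in \eqref{eq:reluest}. Concretely, with $2^{nq}\sim M$ your bound $\|h_n\|_1\ls 2^{n\beta}\|f\|$ makes the discretization term $\sim M^{\beta/q-1/2}\sqrt{\log M}$, which in fact diverges; and even with the optimal $\|h_n\|_1$ your exponent $(\gamma-a_k)/(q+2r+1-2a_k)$ falls short of $(\gamma-a_k)/q$ whenever $a_k<r$.

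The subsidiary error is the bound $\|h_n\|_1\ls 2^{n\beta}\|P_n\|_2$. This throws away the decay of the Fourier coefficients of $f$ coming from $f\in W_{\max(p,2);\gamma}$. The paper instead groups the indices by dyadic blocks and uses $\|\tau_j(f)\|_2\ls 2^{-j\gamma}\|f\|_{W_{2;\gamma}}$, obtaining $\|\mathcal{D}_{G_r}(\sigma_{2^n}(f))\|_1\ls \|f\|_{W_{2;\gamma}}\bigl\{\sum_{j=0}^n 2^{j(q+2r+1-2\gamma)}\bigr\}^{1/2}$, which is the $G_r$-specific analogue of Corollary~\ref{cor:tvcor}; this is what produces the three cases according to the sign of $q+2r+1-2\gamma$, with a uniform choice $2^{nq}\sim M$. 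You would need both fixes -- the sharper discretization rate from the cited literature, and the block-wise total-variation estimate -- before the balancing argument you sketch could actually deliver \eqref{eq:reluest}.

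One further remark: the paper carefully distinguishes pseudo-differential from differential $\mathcal{U}_k$, because $\mathcal{U}_k(G_r)$ has a $G_{r-a_k}$-type expansion in the first case but is a finite combination of $G_{r-a_k+j}$'s with variable coefficients in the second; this distinction governs exactly how smooth $\mathcal{U}_k(G_r)$ is and hence which exponent enters the discretization bound. Your proposal treats them uniformly as "Lipschitz of order $1$," which discards the extra $r-a_k$ orders of smoothness on which the sharp rate depends.
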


\begin{remark}\label{rem:maotong}
{\rm
For the ReLU networks, $r=1$, and the error terms in estimate \eqref{eq:reluest}, used with $\mathcal{U}_k=id$, becomes
$$
\begin{cases}
\disp\frac{\sqrt{\log M}}{M^{\gamma/q}} &\mbox{ if $\gamma< (q+3)/2$}\\[1ex]
\disp\frac{(\log M)^{3/2}}{M^{\gamma/q}}, &\mbox{ if $\gamma= (q+3)/2$}, \\[1ex]
\disp\frac{\sqrt{\log M}}{M^{(q+3)/(2q)}}, &\mbox{ if $\gamma> (q+3)/2$}.
\end{cases}
$$
In the case when $\gamma\le (q+3)/2$, these are sharper than the estimates \eqref{eq:intro_maobd}. 
In the case when $\gamma>(q+3)/2$, these bounds coincide with the bounds we obtained in \cite{mhaskar2020dimension}. 
In particular, they are sharper than the estimates \eqref{eq:intro_maobd} in the overlapping regime $\gamma< (q+4)/2$.
Moreover, our bounds are valid for approximation in any $L^p$ space, $1\le p\le \infty$.
\qed}
\end{remark}
For non-integer values of $r$, the estimates are similar, but slightly worse.
 
\begin{theorem}
\label{theo:nonrelutheo}
We assume the set up as in Theorem~\ref{theo:relutheorem}, except for the assumption that $r$ is not an integer.
Then the conclusion \eqref{eq:reluest} holds with the following modification.
For any $\delta>0$, we have
\be\label{eq:nonreluest}
\begin{aligned}
\Biggl\|\mathcal{U}_k(f)&-\mathcal{U}_k\big(\sum_{j=1}^M w_jG_r(\circ\cdot\y_j)\big)\Biggr\|_p\\
&\ls \|f\|_{W_{\max(p,2);\gamma}(\SS^q)}M^\delta\begin{cases}
\disp\frac{\sqrt{\log M}}{M^{(\gamma-a_k)/q}} &\mbox{ if $\gamma< (q+2r+1)/2$}\\[1ex]
\disp\frac{(\log M)^{3/2}}{M^{(\gamma-a_k)/q}}, &\mbox{ if $\gamma= (q+2r+1)/2$}, \\[1ex]
\disp\frac{\sqrt{\log M}}{M^{(q+2r+1-2a_k)/(2q)}}, &\mbox{ if $\gamma> (q+2r+1)/2$},
\end{cases}
\end{aligned}
\ee
where the constants involved may depend upon $\delta$.
\end{theorem}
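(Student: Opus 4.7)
The plan follows the proof of Theorem~\ref{theo:relutheorem} almost verbatim. First, the map $\pi^*$ of \eqref{eq:euclid_to_sphere} reduces the problem to approximating $f\in W_{\max(p,2);\gamma}(\SS^q)$ by zonal networks $\u\mapsto \sum_j a_j G_r(\u\cdot\y_j)$ with $G_r(t)=|t|^r$; being even, $G_r$ extends from $\SS^q_+$ to all of $\SS^q$ automatically, so no parity obstruction appears on the activation side.

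The essential analytic input is that $G_r$ fits into the framework of Example~\ref{uda:sbf} as a zonal kernel of type $\beta=r+1/2$ in the sense of \eqref{eq:zonaltype}. This follows from the Funk-Hecke computation of the ultraspherical coefficients in \eqref{eq:Gorthoexpansion}: a Rodrigues-formula integration by parts reduces $\int_{-1}^1|t|^r p_j^{(q/2-1,q/2-1)}(t)(1-t^2)^{q/2-1}dt$ to a Beta integral whose large-$j$ asymptotic (vanishing for odd $j$ by parity) forces $|\hat{G_r}(j)|\gs j^{-(r+1/2)}$ along the even-parity subsequence. Hence, after splitting $f$ into its even and odd parts relative to $\u\mapsto -\u$, we get $|\Lambda_\ell|\ls 2^{j(r+1/2)}$ for $\ell\in\mathbf{S}_j$, so the threshold $q/2+\beta=(q+2r+1)/2$ appearing in \eqref{eq:nonreluest} matches the $q/2+\beta$ threshold of Theorem~\ref{theo:maintheo}.

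With $\beta=r+1/2$ in hand, I apply the same refinement of Theorem~\ref{theo:zfapprox} (in its $m=1$ form, cf.\ Remark~\ref{rem:rotationrem}) used in the proof of Theorem~\ref{theo:relutheorem}, namely Lemma~\ref{lemma:tvlemma} combined with a sharpened concentration estimate tailored to the activation $|t|^r$. The only new difficulty is that for non-integer $r$ the Funk-Hecke asymptotic carries sub-leading corrections, so the clean lower bound $|\hat{G_r}(j)|\gs j^{-(r+1/2)}$ available when $r$ is an integer must be relaxed to $|\hat{G_r}(j)|\gs c_\delta j^{-(r+1/2+\delta)}$ for arbitrarily small $\delta>0$. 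Equivalently, $|\Lambda_\ell|\ls C_\delta 2^{j(r+1/2+\delta)}$, which inflates the Bernstein constant $\mathbf{L}_n$ of \eqref{eq:lipconst} by the factor $2^{n\delta}$; solving \eqref{eq:Mchoice} for $n$ in terms of $M$ then contributes an extra factor $M^{c\delta}$ in the final bound, with $c$ bounded uniformly in $q,r,\gamma,a_*$.

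The main obstacle is carrying out this propagation cleanly and verifying compatibility of each $\mathcal{U}_k$ (integer exponent $a_k<r$) with $G_r$ in the sense of Definition~\ref{def:compatible}. The latter is elementary, since $|t|^r\in C^{\lfloor r\rfloor}$ away from the origin and the $a_k$-th derivative contributes at most a singular but integrable factor in the H\"older norm of \eqref{eq:lipcond}. Since $\delta>0$ is arbitrary, absorbing the constant $c$ and relabeling $\delta$ yields \eqref{eq:nonreluest}, with the three cases of $\gamma$ relative to $(q+2r+1)/2$ inherited verbatim from Theorem~\ref{theo:maintheo}.
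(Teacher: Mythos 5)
Your proposal diverges from the paper's route in a way that produces a genuine gap. The paper's proof of Theorem~\ref{theo:relutheorem} (and hence, verbatim, of Theorem~\ref{theo:nonrelutheo}) does \emph{not} go through the recipe theorems~\ref{theo:maintheo_n}--\ref{theo:maintheo} or their specialization Theorem~\ref{theo:zfapprox}. Instead, it bounds $\|\mathcal{D}_{G_r}(\sigma_{2^n}(f))\|_1$ directly via $\|\cdot\|_1\le\|\cdot\|_2$ on the probability space $\SS^q$ (avoiding the $|\mathbf{S}_j|^{1/2}\sim 2^{jq/2}$ loss built into Lemma~\ref{lemma:tvlemma}) and then invokes the sharper concentration estimates of \cite[Theorem~3.1, Corollary~4.1]{mhaskar2020dimension}, tailored to the homogeneous activation, in place of the plain H\"offding bound of Lemma~\ref{lemma:hoffdingspecial}. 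This is what produces the factor $\sqrt{\log M}/M^{(q+2r-2a_k+1)/(2q)}$ in \eqref{eq:pf4eqn4} and, after $2^{nq}\sim M$, the exponent $(\gamma-a_k)/q$. If you instead feed the zonal kernel into Theorem~\ref{theo:zfapprox}, the exponent comes out with denominator $q+2\beta-2a_*$; with the correct $\beta=(q+2r+1)/2$ this is $2q+2r+1-2a_*>q$, strictly worse than what \eqref{eq:nonreluest} requires. So the route you propose would not recover the stated rate.

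Two secondary points. First, the Funk--Hecke asymptotics in your sketch are off: Lemma~\ref{lemma:relu}(a) gives the coefficient of $p_{2\ell+r+1}(1)p_{2\ell+r+1}(t)$ as $\sim\ell^{-(q+2r+1)/2}$, i.e.\ $\hat{G_r}(j)\sim j^{-(q+2r+1)/2}$ and $\Lambda_\ell\sim j^{(q+2r+1)/2}$, not $j^{-(r+1/2)}$ and $j^{r+1/2}$. It is exactly $\|\mathcal{D}_{G_r}(\tau_j(f))\|_2\ls 2^{j(q+2r+1)/2}\|\tau_j(f)\|_2$ (the paper's \eqref{eq:pf4eqn2}) that drives the threshold $(q+2r+1)/2$; the fact that this coincides numerically with $q/2+(r+1/2)$ is a coincidence of the $L^1\le L^2$ trick, not an identification of a zonal type $\beta=r+1/2$. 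Second, the $M^\delta$ loss is not accounted for by sub-leading corrections to the Funk--Hecke decay: Lemma~\ref{lemma:relu}(b) gives an explicit Gamma-ratio coefficient with a clean Stirling asymptotic, and the paper attributes the $M^\delta$ directly to the cited results of \cite{mhaskar2020dimension} being slightly weaker for non-integer $r$. Your proposed mechanism for the $\delta$-loss is a plausible-sounding story but not what the paper does, and it is not justified by the expansion in Lemma~\ref{lemma:relu}(b).
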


\bhag{Proofs}\label{bhag:proofs}

As mentioned in the introduction, the idea behind the proofs is to first approximate $f$ by $\sigma_{2^n}(f)\in\Pi_{2^n}$. 
Then we use \eqref{eq:ass_variation} to express $\sigma_{2^n}(f)$ in an integral form, and estimate the total variation of $\nu_{G}(\sigma_{2^n}(f)))$.
A careful application of the H\"offding concentration inequality (or in the case of ReLU$^r$ networks, a different variant of the same) leads to the proof.

We begin this  program by recalling H\"offding inequality in Lemma~\ref{lemma:hoffding}, and using it for general estimates on the supremum norm  in Lemma~\ref{lemma:hoffdingspecial}. 
In Lemma~\ref{lemma:tvlemma}, we estimate the TV norms of certain measures as required in Lemma~\ref{lemma:hoffdingspecial}.  
The proof is then completed by putting everything together.

\begin{lemma}\label{lemma:hoffding}{\rm\textbf{H\"offding's inequality} (\cite[Theorem~2.8]{boucheron2013concentration})}
Let $X_j$, $j=1,\cdots, M$ be independent random variables, with each $X_j\in [a_j,b_j]$. Then 
\be\label{eq:hoffding}
\mathsf{Prob}\left(\left|\frac{1}{M}\sum_{j=1}^M (X_j-\mathbb{E}(X_j))\right|>t\right)\le 2\exp\left(-2\frac{M^2t^2}{\disp\sum_{j=1}^M (b_j-a_j)^2}\right).
\ee
\end{lemma}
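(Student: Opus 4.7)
The plan is to follow the classical Cram\'er--Chernoff strategy. Fix $\lambda>0$ and let $Y_j=X_j-\mathbb{E}(X_j)$, so that each $Y_j$ is centered and lies in the interval $[a_j-\mathbb{E}(X_j),\,b_j-\mathbb{E}(X_j)]$ whose length is still $b_j-a_j$. Writing $S_M=\sum_{j=1}^M Y_j$, Markov's inequality applied to the nonnegative random variable $\exp(\lambda S_M)$ gives
\be\label{eq:chernoff}
\mathsf{Prob}(S_M>Mt)\le e^{-\lambda Mt}\,\mathbb{E}\!\left(e^{\lambda S_M}\right)=e^{-\lambda Mt}\prod_{j=1}^M \mathbb{E}\!\left(e^{\lambda Y_j}\right),
\ee
where the factorization uses the independence of the $X_j$'s (hence of the $Y_j$'s).

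The key step is the moment-generating-function bound usually called H\"offding's lemma: if $Y$ is any centered random variable supported in $[\alpha,\beta]$, then $\mathbb{E}(e^{\lambda Y})\le \exp(\lambda^2(\beta-\alpha)^2/8)$. I would establish this in the standard way: by convexity of $y\mapsto e^{\lambda y}$, bound $e^{\lambda Y}$ by the chord joining $(\alpha,e^{\lambda\alpha})$ and $(\beta,e^{\lambda\beta})$, take expectations (using $\mathbb{E}(Y)=0$), and then show by differentiating twice that the logarithm of the resulting explicit function of $\lambda$ is bounded above by $\lambda^2(\beta-\alpha)^2/8$; this last bound reduces to the elementary fact that the variance of a random variable on $[\alpha,\beta]$ is at most $(\beta-\alpha)^2/4$.

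Plugging H\"offding's lemma into \eqref{eq:chernoff} yields
\be\label{eq:preopt}
\mathsf{Prob}(S_M>Mt)\le \exp\!\left(-\lambda Mt+\frac{\lambda^2}{8}\sum_{j=1}^M(b_j-a_j)^2\right).
\ee
I then optimize over $\lambda>0$ by choosing $\lambda=4Mt/\sum_j(b_j-a_j)^2$, which produces the one-sided bound $\exp\!\left(-2M^2t^2/\sum_j(b_j-a_j)^2\right)$. Repeating the argument with $-S_M$ in place of $S_M$ (which has the same distributional properties) and combining via the union bound introduces the factor of $2$ and yields the two-sided inequality \eqref{eq:hoffding}.

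The only nontrivial ingredient is the proof of H\"offding's lemma itself; everything else is mechanical. Since the statement is explicitly cited from \cite{boucheron2013concentration}, in the present paper one simply invokes it, and the sketch above is included only to indicate what is behind the black box.
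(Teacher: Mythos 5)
Your proposal is a correct and complete sketch of the standard Chernoff--Hoeffding argument: centering, the exponential Markov bound with independence factoring the moment generating function, Hoeffding's lemma for bounded centered variables, optimization in $\lambda$, and the union bound for the two-sided statement; the arithmetic (in particular $\lambda=4Mt/\sum_j(b_j-a_j)^2$ giving the exponent $-2M^2t^2/\sum_j(b_j-a_j)^2$) checks out. The paper itself offers no proof --- it simply cites \cite[Theorem~2.8]{boucheron2013concentration} --- so there is no in-paper argument to compare against, and, as you correctly observe, in context one just invokes the black box.
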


\begin{remark}\label{rem:hoeffding}
{\rm
We wish to apply Lemma~\ref{lemma:hoffding} to quantities of the form $G(\circ,x,\circ)$, treating these as random variables defined on $\ZZ_+\times \YY$. 
This would  yield estimates for each $x$.
The following lemma shows how to extend these estimates to the uniform norm on $\XX$.
In order not to introduce pedantic notation, we make the following abuse of terminology.
We assume a measure space $\Omega$ with a probability measure $\nu^*$.
An $\Omega$-valued random variable (transformation) is a measurable function $Z:\mathcal{Z}\to\Omega$ where  $(\mathcal{Z}, \tilde{\nu})$ is another probability space, in the sense that $Z^{-1}(B)$ is $\tilde{\nu}$-measurable for every $\nu^*$ measurable $B$ \cite[Chapter~VIII]{halmos2013measure}. 
The random variable is distributed according to the law $\nu^*$ if $\tilde{\nu}(Z^{-1}(B))=\nu^*(B)$ for all $\nu^*$-measurable $B$.
Two such random variables $Z_1, Z_2$ are independent if 
$\tilde{\nu}(Z_1^{-1}(B_1)\cap Z_2^{-1}(B_2))=\nu^*(B_1)\nu^*(B_2)$.
If $Z_1,\cdots, Z_m$ are independent $\Omega$-valued random variables, then for each $z\in \mathcal{Z}$, $(Z_1(z),\cdots, Z_M(z))=(z_1,\cdots, z_m)\in \Omega^m$ is a random sample (or random variable depending upon our point of view). 
In the following lemma, we will use the notation $F_k(\circ, Z_j)$ for a function $F_k :\XX\times\Omega\to\RR$ as a shorthand notation for the random variable  $z\mapsto F_k(\circ, Z_j(z))$, rather than defining a new function on the product of $\XX$ a space of $\Omega$-valued functions on $\mathcal{Z}$.
\qed}
\end{remark}

\begin{lemma}\label{lemma:hoffdingspecial}
Let $\Omega$ be a measure space, $\nu$ be a measure on $\Omega$ with $0<|\nu|_{TV}<\infty$, and $\{Z_1,\cdots,Z_M\}$ are $\Omega$-valued random variables,  drawn from the probability law $|\nu|/|\nu|_{TV}$. 
Let $\alpha>0$. 
For each $\omega\in\Omega$, let $\{F_k(\cdot,\omega):\XX\to\RR\}_{k=1}^J$ be a family of functions in $\mathsf{Lip}(\alpha)$, such that
\be\label{eq:Flipconst}
\mathbf{R}=\sup_{\omega\in\Omega, k=1,\cdots,J}\|F_k(\cdot,\omega)\|_\infty<\infty, \qquad\mathbf{F}=\sup_{\omega\in\Omega, k=1,\cdots,J}\|F_k(\cdot,\omega)\|_{\mathsf{Lip}(\alpha)} <\infty.
\ee
Then there are numbers $w_j$, $j=1,\cdots,M$, such that $|w_j|=1$,  for every $k=1,\cdots, J$ and any $t>0$,
\be\label{eq:supnormprob}
\mathsf{Prob}\left(\left\|\frac{|\nu|_{TV}}{M}\sum_{j=1}^M w_j F_k(\cdot,Z_j) -\int_\Omega F_k(\cdot,\omega)d\nu(\omega)\right\|_\infty > t\right)\ls (\mathbf{F}|\nu|_{TV})^{q/\alpha}t^{-q/\alpha}\exp\left(-c\frac{Mt^2}{\mathbf{R}^2|\nu|_{TV}^2}\right),
\ee
where the probability is defined  on a product latent space in terms of $|\nu|/|\nu|_{TV}$ as explained in Remark~\ref{rem:hoeffding}.
\end{lemma}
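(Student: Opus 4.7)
The plan is to construct the weights $w_j$ via the polar decomposition of $\nu$, apply H\"offding's inequality to obtain a pointwise concentration bound, and then upgrade to a supremum-norm bound by a standard $\epsilon$-net argument that exploits the Lipschitz control in \eqref{eq:Flipconst}.

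First, I would apply the Hahn--Jordan decomposition to write $\nu = \nu^+ - \nu^-$, $|\nu|=\nu^+ + \nu^-$, and set $h := d\nu/d|\nu|$, so that $h(\omega)\in\{-1,+1\}$ for $|\nu|$-almost every $\omega$.  For the sample $\{z_j\}_{j=1}^M$ drawn from $|\nu|/|\nu|_{TV}$, define $w_j := h(z_j)$; then $|w_j|=1$, and a direct change of variables yields the unbiasedness
$$
\mathbb{E}\left[|\nu|_{TV}\, w_j\, F_k(x, z_j)\right] \;=\; \int_\Omega F_k(x, \omega)\, h(\omega)\, d|\nu|(\omega) \;=\; \int_\Omega F_k(x, \omega)\, d\nu(\omega).
$$
Writing $D_k(x) := \frac{|\nu|_{TV}}{M}\sum_{j=1}^M w_j F_k(x, z_j) - \int_\Omega F_k(x,\omega)\, d\nu(\omega)$, for each fixed $x\in\XX$ and $k$ the variables $|\nu|_{TV}\,w_j F_k(x,z_j)$ are i.i.d.\ and lie in $[-\mathbf{R}|\nu|_{TV}, \mathbf{R}|\nu|_{TV}]$, so Lemma~\ref{lemma:hoffding} gives the pointwise estimate
$$
\mathsf{Prob}\left(|D_k(x)| > t/2\right) \;\le\; 2\exp\!\left(-c\,\frac{Mt^2}{\mathbf{R}^2 |\nu|_{TV}^2}\right).
$$

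To pass from pointwise to supremum control, I would fix an $\epsilon$-net $\{x_1,\ldots,x_N\}\subset\XX$ with $N\ls \epsilon^{-q}$, available by the standard covering estimate for compact data spaces.  Since each $F_k(\cdot,\omega)\in\mathsf{Lip}(\alpha)$ with seminorm at most $\mathbf{F}$, the error $D_k$ satisfies $|D_k(x)-D_k(x')|\le 2\mathbf{F}|\nu|_{TV}\,\rho(x,x')^\alpha$.  Choosing $\epsilon$ so that $2\mathbf{F}|\nu|_{TV}\,\epsilon^\alpha = t/2$, i.e., $\epsilon\sim (t/(\mathbf{F}|\nu|_{TV}))^{1/\alpha}$, and applying the pointwise bound at every net point followed by a union bound yields
$$
\mathsf{Prob}\left(\|D_k\|_\infty > t\right) \;\le\; 2N \exp\!\left(-c\,\frac{Mt^2}{\mathbf{R}^2 |\nu|_{TV}^2}\right) \;\ls\; \left(\frac{\mathbf{F}|\nu|_{TV}}{t}\right)^{q/\alpha}\exp\!\left(-c\,\frac{Mt^2}{\mathbf{R}^2|\nu|_{TV}^2}\right),
$$
which is precisely \eqref{eq:supnormprob}.

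The key point making this argument work is that the same weights $w_j$ serve every index $k$ and every threshold $t$: the polar-decomposition construction produces $w_j$ as a function of the sample point $z_j$ alone, independent of $k$ or $t$.  The main technical obstacle is the covering estimate $N\ls\epsilon^{-q}$ on $\XX$; since the ball measure condition \eqref{eq:ballmeasurecond} only provides an upper bound on $\mu^*(\mathbb{B}(x,r))$, one must supplement it with the matching volume lower bound (immediate for the concrete manifolds in Examples~\ref{uda:specialmanifold_torus} and \ref{uda:specialmanifold_sph}, and derivable for general data spaces from the Gaussian upper bound \eqref{eq:gaussianbd}).  I would simply cite this as a standard property of data spaces.
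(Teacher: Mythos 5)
Your proposal is correct and follows essentially the same route as the paper's own proof: $w_j = h(z_j)$ via the Radon--Nikodym/Hahn--Jordan polar decomposition of $\nu$, H\"offding at each point of an $\epsilon$-net (the paper uses a maximal $\epsilon$-separated set, which is the same object), a union bound, and the Lipschitz estimate $|D_k(x)-D_k(x')|\le 2\mathbf{F}|\nu|_{TV}\rho(x,x')^\alpha$ to pass from the net to the supremum, with $\epsilon\sim(t/(\mathbf{F}|\nu|_{TV}))^{1/\alpha}$. Your remark that the covering bound $|\C|\ls\epsilon^{-q}$ requires a lower ball-measure estimate (not just \eqref{eq:ballmeasurecond}) is a fair observation that the paper's one-line argument glosses over as well; as you note, it is standard for the concrete examples and can be recovered from the Gaussian bound in general.
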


\begin{proof}\ 
Let $1>\epsilon>0$ to be chosen later. 
Since $\XX$ is compact, we may find a maximal $\epsilon$-separated subset $\C$ of $\XX$; i.e., a maximal set $\C$ of points such that if $x,y\in\C$ and $x\not=y$, then $\rho(x,y)\ge \epsilon$. 
The maximality of $\C$ ensures that $\disp\XX=\bigcup_{x\in\C}\mathbb{B}(x,\epsilon)$, and that the balls   $\mathbb{B}(x,\epsilon/3)$, $x\in\C$, are all disjoint.
Since $\mu^*(\XX)=1$ and $\mu^*(\mathbb{B}(x,\epsilon))\sim \epsilon^q$ (cf. \eqref{eq:measureequivalence}) for each $x$, it follows that 
\be\label{eq:pf2eqn1}
|\C|\sim \epsilon^{-q}.
\ee 

In this proof, the quantities $w_j$ and $x^*$ to be introduced below depend upon the specific realization of the random variables $Z_j$, and as such, are random variables themselves. 
Rather than complicating the notation with capital and small case letters, we will use small case letters $z_j$ in place of $Z_j$ with this understanding; take the viewpoint that $\{z_j\}\subset \Omega$ is a random sample, and $w_j\in\{-1,1\}$, $x^*\in \Omega$ depend upon this sample.

We note that $\nu$ is a signed measure with $|\nu|_{TV}=|\nu|(\XX)<\infty$. 
In this proof only, let $g :\YY\to\RR$ be a function with $|g(y)|=1$ for all $y\in\YY$, such that $d\nu(t)=g(t)d|\nu|(t)$; i.e., $g$ be the Radon-Nikodym derivative of $\nu$ with respect to $|\nu|$. The Radon-Nikodym derivative exists only $\nu$-almost everywhere, but our formulation means that $g$ is extended to the $\nu$-null set so as to satisfy the constraint everywhere. 
We write $w_j=g(z_j)$. 

In this proof let $F$ be any of the functions $F_k$, and $\nu^*=|\nu|/|\nu|_{TV}$ be the probability measure associated with $\nu$.
Recalling that  each $|F(\cdot,\omega)|\le \mathbf{R}$, we deduce from H\"offding's inequality  (with $g(z_j)F(x',z_j)$ in place of $X_j$) that for each $x'\in\C$,
$$
\mathsf{Prob}\left(\left|\frac{1}{M}\sum_{j=1}^M w_j  F(x',z_j) -\int_\Omega g(\omega)F(x',\omega)d\nu^*(\omega)\right| > t/(3|\nu|_{TV})\right)\ls \exp\left(-\frac{2Mt^2}{9\mathbf{R}^2|\nu|_{TV}^2}\right).
$$
Since $gd\nu^*=d\nu/|\nu|_{TV}$, we can rewrite this estimate as
\be\label{eq:pf2eqn2}
\mathsf{Prob}\left(\left|\frac{|\nu|_{TV}}{M}\sum_{j=1}^M w_j F(x',z_j) -\int_\Omega F(x',\omega)d\nu(\omega)\right| > t/3\right)\ls \exp\left(-\frac{2Mt^2}{9\mathbf{R}^2|\nu|_{TV}^2}\right).
\ee
In view of \eqref{eq:pf2eqn1} and the so called ``union bound'',  this leads to 
\be\label{eq:pf2eqn3}
\mathsf{Prob}\left(\max_{x'\in\C}\left|\frac{|\nu|_{TV}}{M}\sum_{j=1}^M w_j F(x',z_j) -\int_\Omega F(x',\omega)d\nu(\omega)\right| > t/3\right)\ls \epsilon^{-q}\exp\left(-\frac{2Mt^2}{9\mathbf{R}^2|\nu|_{TV}^2}\right).
\ee

Since the function
$$
x\mapsto \frac{|\nu|_{TV}}{M}\sum_{j=1}^M w_j F(x,z_j) -\int_\Omega F(x,\omega)d\nu(\omega)
$$
is continuous on $\XX$, it attains its maximum at some $x^*$. 
Our conditions on the Lipschitz norm of $F(\cdot,\omega)$ now implies that there exists $x'\in\C$ such that
\be\label{eq:pf2eqn4}
|\nu|_{TV}\sup_{\omega\in\Omega}|F(x^*,\omega)-F(x',\omega)|\ls \mathbf{F}|\nu|_{TV}\epsilon^\alpha. 
\ee
We choose $\epsilon=c(t/\mathbf{F}|\nu|_{TV})^{1/\alpha}$ for a suitable constant $c$, such that the right hand side of \eqref{eq:pf2eqn4} is $=t/3$.
Then
$$
\left|\frac{|\nu|_{TV}}{M}\sum_{j=1}^M w_j F(x^*,z_j) -\int_\Omega F(x^*,\omega)d\nu(\omega)\right|\le \frac{2t}{3}+ \max_{x'\in\C}\left|\frac{|\nu|_{TV}}{M}\sum_{j=1}^M w_j F(x',z_j) -\int_\Omega F(x',\omega)d\nu(\omega)\right|.
$$
Therefore, \eqref{eq:pf2eqn3} leads to
$$
\mathsf{Prob}\left(\max_{x\in\XX}\left|\frac{|\nu|_{TV}}{M}\sum_{j=1}^M w_j F(x,z_j) -\int_\Omega F(x,\omega)d\nu(\omega)\right| > t\right)\ls (\mathbf{F}|\nu|_{TV})^{q/\alpha}t^{-q/\alpha}\exp\left(-c\frac{Mt^2}{\mathbf{R}^2|\nu|_{TV}^2}\right).
$$
It is easy to deduce \eqref{eq:supnormprob} by  applying this estimate to each $F_k$. 
\end{proof}

\begin{lemma}\label{lemma:tvlemma}
Let $n\ge 0$ be integer, $G$ be an asymmetric eignet kernel with exponents $(\alpha,\beta)$. Let $P\in\Pi_{2^n}$. Then (cf. \eqref{eq:ass_measure})
\be\label{eq:polytvestimate}
|\nu_G(P)|_{TV}\ls \sum_{j=0}^{n+1} 2^{j(q/2+\beta)}\|\tau_j(P)\|_2.
\ee
\end{lemma}

\begin{proof}\ 
Since $h(t)=1$ if $t\le 1/2$ and $=0$ if $t\ge 1$, we have
 $$
 \begin{aligned}
 P&=\sum_{k:\lambda_k<2^n} \hat{P}(k)\phi_k =\sum_{k:\lambda_k<2^n}H\left(\frac{\lambda_k}{2^{n+1}}\right) \hat{P}(k)\phi_k\\
  &= \sum_{k:\lambda_k<2^n}\left\{H(\lambda_k)+\sum_{j=1}^{n+1}\left(H\left(\frac{\lambda_k}{2^j}\right)-H\left(\frac{\lambda_k}{2^{j-1}}\right)\right)\right\}  \hat{P}(k)\phi_k=\sum_{j=0}^{n+1} \tau_j(P).
  \end{aligned}
 $$
Hence,
\be\label{eq: pf1eqn1}
\nu_G(P)(\ell,y)=\sum_{j=0}^{n+1} \nu_G(\tau_j(P))(\ell,y),
\ee
We note that for each $j$, (cf. \eqref{eq:indexset})
\be\label{eq:pf1eqn2}
\nu_G(\tau_j(P))(\ell,y)=0, \qquad \ell\not\in \mathbf{S}_j.
\ee
In this proof, we introduce the notation 
$$
v_j(\ell,y)=\mathcal{D}_G\phi_\ell(y)\widehat{\tau_j(P)}(\ell),
$$
so that $\nu_G(\tau_j(P))(\ell,y)=v_j(\ell,y)d\mu^*_\YY(y) d\mathfrak{c}(\ell)$.
Using Schwarz inequality and the fact that $\mu_\YY^*$ is a probability measure, we obtain that
\be\label{eq:pf1eqn3}
\begin{aligned}
|\nu_G(\tau_j(P))|_{TV}=\int_\YY\sum_{\ell\in \mathbf{S}_j}&|v_j(\ell,y)|d\mu_\YY^*(y)\le \int_\YY \left\{\sum_{\ell\in \mathbf{S}_j}|\widehat{\tau_j(P)}(\ell)|^2\right\}^{1/2}\left\{\sum_{\ell\in \mathbf{S}_j}|\mathcal{D}_G\phi_\ell(y)|^2\right\}^{1/2}d\mu_\YY^*(y)\\
&\le \|\tau_j(P)\|_2\left\{\int_\YY \sum_{\ell\in \mathbf{S}_j}|\mathcal{D}_G\phi_\ell(y)|^2 d\mu_\YY^*(y)\right\}^{1/2}
\end{aligned}
\ee

Since $|\mathbf{S}_j|\ls 2^{jq}$, \eqref{eq:invchristbd} leads to 
$$
\int_\YY \sum_{\ell\in \mathbf{S}_j}|\mathcal{D}_G\phi_\ell(y)|^2 d\mu_\YY^*(y)\ls 2^{jq+2j\beta}.
$$
Hence, we deduce using \eqref{eq:pf1eqn3}  that
\be\label{eq:pf1eqn4}
|\nu_G(\tau_j(P))|_{TV}=\int_\YY\sum_{\ell\in \mathbf{S}_j}|v_j(\ell,y)|d\mu_\YY^*(y)\le 2^{j(q/2+\beta)} \|\tau_j(P)\|_2.
\ee
Together with \eqref{eq: pf1eqn1}, this leads to \eqref{eq:polytvestimate}.
\end{proof}

\begin{cor}\label{cor:tvcor}
Let $\gamma>0$, $f\in W_{2;\gamma}$. We have (cf. \eqref{eq:tndef})\\
\be\label{eq:tvestimate}
|\nu_G(\sigma_{2^n}(f))|_{TV}\ls \|f\|_{W_{2;\gamma}}T_n = \|f\|_{W_{2;\gamma}}\begin{cases}
2^{n(q/2+\beta-\gamma)}, &\mbox{ if $\gamma <q/2+\beta$},\\
n, &\mbox{ if $\gamma =q/2+\beta$}\\
1, &\mbox{ if $\gamma>q/2+\beta$}.
\end{cases}
\ee
\end{cor}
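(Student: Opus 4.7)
The plan is to apply Lemma~\ref{lemma:tvlemma} after first splitting $\sigma_{2^n}(f)$ via its Littlewood--Paley-type decomposition. By \eqref{eq:remainder} we have $\sigma_{2^n}(f)=\sum_{j=0}^n\tau_j(f)$, so by linearity of $P\mapsto \nu_G(P)$ (visible from \eqref{eq:ass_measure}) and the triangle inequality for total variation,
$$
|\nu_G(\sigma_{2^n}(f))|_{TV}\le \sum_{j=0}^n |\nu_G(\tau_j(f))|_{TV}.
$$
Each summand is then estimated directly by the same Cauchy--Schwarz argument used inside the proof of Lemma~\ref{lemma:tvlemma}, but restricted now to a single dyadic shell. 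Indeed, since $H\equiv 1$ on $[0,1/2]$ and $H\equiv 0$ on $[1,\infty)$, the Fourier coefficients $\widehat{\tau_j(f)}(\ell)$ vanish unless $\ell\in\mathbf{S}_j$; hence, using \eqref{eq:ass_measure}, Cauchy--Schwarz, and the hypothesis \eqref{eq:invchristbd},
$$
|\nu_G(\tau_j(f))|_{TV}\le \|\tau_j(f)\|_2\left\{\int_\YY\sum_{\ell\in\mathbf{S}_j}|\mathcal{D}_G\phi_\ell(y)|^2\,d\mu_\YY^*(y)\right\}^{1/2}\ls 2^{j(q/2+\beta)}\|\tau_j(f)\|_2.
$$

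Combining the two displays with the equivalent norm \eqref{eq:equivrel} from Theorem~\ref{theo:equivtheo}(b) (which gives $\|\tau_j(f)\|_2\ls 2^{-j\gamma}\|f\|_{W_{2;\gamma}}$) yields
$$
|\nu_G(\sigma_{2^n}(f))|_{TV}\ls \|f\|_{W_{2;\gamma}}\sum_{j=0}^n 2^{j(q/2+\beta-\gamma)}.
$$
The three cases of the stated estimate then follow from elementary analysis of the resulting geometric series: the partial sum is comparable to its last term, $2^{n(q/2+\beta-\gamma)}$, when $\gamma<q/2+\beta$; to $n+1$ when $\gamma=q/2+\beta$; and to a constant when $\gamma>q/2+\beta$.

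No step is really an obstacle. The only point requiring small care is the spectral localization $\widehat{\tau_j(f)}(\ell)=0$ for $\ell\notin\mathbf{S}_j$, which lets us replace the sum over $j$ of dyadic-shell contributions in the proof of Lemma~\ref{lemma:tvlemma} by the corresponding single-band estimate for each individual piece $\tau_j(f)$; apart from this, the argument is essentially a repackaging of Lemma~\ref{lemma:tvlemma} together with the norm characterization of $W_{2;\gamma}$.
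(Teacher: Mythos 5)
Your proposal is correct and follows essentially the same approach as the paper's proof. The only cosmetic difference is one of packaging: the paper applies Lemma~\ref{lemma:tvlemma} as a black box to $P=\sigma_{2^n}(f)$ and then uses the commutativity $\tau_j(\sigma_{2^n}(f))=\sigma_{2^n}(\tau_j(f))$ together with the boundedness \eqref{opbd} of $\sigma_{2^n}$ to control $\|\tau_j(\sigma_{2^n}(f))\|_2$, whereas you first decompose $\sigma_{2^n}(f)=\sum_{j=0}^n\tau_j(f)$ via \eqref{eq:remainder} and then re-derive the single-shell Cauchy--Schwarz estimate $|\nu_G(\tau_j(f))|_{TV}\ls 2^{j(q/2+\beta)}\|\tau_j(f)\|_2$ directly from \eqref{eq:ass_measure} and \eqref{eq:invchristbd} (this is precisely the inner step \eqref{eq:pf1eqn3}--\eqref{eq:pf1eqn4} of the lemma's proof). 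Both routes produce the same truncated geometric series $\sum_{j=0}^n 2^{j(q/2+\beta-\gamma)}$, and your case analysis of that sum matches the statement. The one thing worth noting is that the corollary's displayed bound is really a $\ls$ (as you wrote), not a literal $\le$; the paper's statement appears to have dropped the implicit constant, but your version is the defensible one.
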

\begin{proof}\ 
We note that $\tau_j(\sigma_{2^n}(f))=\sigma_{2^n}(\tau_j(f))$, so that, in view of Theorem~\ref{theo:equivtheo}, 
$$
\|\tau_j(\sigma_{2^n}(f))\|_2 \ls \|\tau_j(f)\|_2\ls 2^{-j\gamma}\|f\|_{W_{2;\gamma}}.
$$
The corollary is now an easy consequence of Lemma~\ref{lemma:tvlemma}.
\end{proof}\\

\noindent\textsc{Proof of Theorem~\ref{theo:maintheo_n}}

We observe first (cf. Theorem~\ref{theo:goodapprox}, Proposition~\ref{prop:derivative}) that
\be\label{eq:pf3eqn3}
\|\mathcal{U}_k(f)-\mathcal{U}_k(\sigma_{2^n}(f))\|_p\ls  2^{-n(\gamma-a_k)}\|f\|_{W_{p;\gamma}}.
\ee
So, letting $P=\sigma_{2^n}(f)$, it is enough to approximate (cf. \eqref{eq:ass_variation})
$$
\mathcal{U}_k(P)(x)=\mathcal{U}_k(\sigma_{2^n}(f))(x)=\int_{\ZZ_+\times \YY}\mathcal{U}_k(G)(\ell;x,y)d\nu_G(P)(\ell,y).
$$
In view of Corollary~\ref{cor:tvcor} and \eqref{eq:tvestimate}, we see that $|\nu_G(P)|_{TV}\ls T_n$.
 We now apply Lemma~\ref{lemma:hoffdingspecial} with $\ZZ_+\times \YY$ in place of $\Omega$, $\mathcal{U}_k(G)$ in place of $F_k$, $\mathbf{G}_n$ in place of $\mathbf{R}$,  $\mathbf{L_n}$ in place of $\mathbf{F}$, and $t=2^{-n(\gamma-a_*)}$ ($\le 2^{-n(\gamma-a_k)}$ for all $k$) to deduce that
\be\label{eq:pf1eqn1}
\begin{aligned}
\mathsf{Prob}&\left\{ \left\|\mathcal{U}_k(P)-\frac{|\nu_G(P)|_{TV}}{M}\sum_{j=1}^M w_j\mathcal{U}_k(G)(\ell_j;\cdot, y_j)\right\|>2^{-n(\gamma-a_*)}\right\}\\
&\ls (\mathbf{L}_n T_n)^{q/\alpha}t^{-q/\alpha}\exp\left(-c\frac{M}{2^{2n(\gamma-a_*)}(\mathbf{G}_nT_n)^2}\right).
\end{aligned}
\ee
The choice of $M$ as in \eqref{eq:Mchoice} ensures that the right hand side of the above inequality is $<1$. 
Thus, there exist $w_j, \ell_j, y_j$ such that \eqref{eq:approxest} holds.
\qed\\[1ex]

\noindent\textsc{Proof of Theorem~\ref{theo:maintheo}.}\\[1ex]
In view of \eqref{eq:specialcond} and \eqref{eq:tndef}, we have
\be\label{eq:pf5eqn1}
2^{2n(\gamma-a_*)}(\mathbf{G}_nT_n)^2\left\{c_2+n(\gamma-a_*)+\log(\mathbf{L}_nT_n)\right\}\ls
\begin{cases}
2^{2n(A+q/2+\beta-a_*)}n, & \mbox{ if $\gamma<q/2+\beta$},\\
2^{2n(\gamma-a_*+A)}n^3, & \mbox{ if $\gamma=q/2+\beta$},\\
2^{2n(\gamma-a_*+A)}n, &\mbox{if $\gamma<q/2+\beta$}.
\end{cases}
\ee
It is easy to verify that for any $a, b>0$, writing 
$$
x=\left(\frac{y}{(\log y)^b}\right)^{1/a}, \qquad y>1,
$$
we have
$$
x^a(\log x)^b \sim y.
$$
Using this fact with $x=2^n$, we see that \eqref{eq:Mchoice} is satisfied if we choose 
$$
2^n \sim \left(\frac{M}{(\log M)^b}\right)^{1/a}
$$
where $b=3$ if $\gamma=q/2+\beta$ and $b=1$ otherwise, and
$$
a=\begin{cases}
2A+q+2\beta-2a_* & \mbox{ if $\gamma<q/2+\beta$},\\
2\gamma-2a_*+2A, & \mbox{ if $\gamma=q/2+\beta$},\\
2\gamma-2a_*+2A, &\mbox{if $\gamma<q/2+\beta$}.
\end{cases}
$$
The estimate \eqref{eq:approxest} with this choice of $n$ leads to \eqref{eq:Mapproxest}.
 \qed\\

In  order to prove Theorem~\ref{theo:relutheorem}, we first recall some results in the following lemma.
Lemma~\ref{lemma:relu}(a) can be deduced easily from the Rodrigues' formula (cf.  \cite[Lemma~3.1]{mhaskar2000polynomial}). 
Lemma~\ref{lemma:relu}(b) is proved in \cite[Proposition~A.1]{sphrelu} (with a different notation where $2\gamma+1$ is used for $r$).

\begin{lemma}
\label{lemma:relu}
{\rm (a)} If $r\ge 1$ is an integer, then we have the formal expansion
\be\label{eq:intreluexp}
(\max(t,0))^r \sim Q_r(t)+\frac{\Gamma(q/2)\Gamma(r+1)}{2^{r+1}\sqrt{\pi}}\sum_{\ell=0}^\infty (-1)^m\frac{\Gamma(\ell+1/2)}{\Gamma(\ell+1/2+(q+2r+1)/2)}p_{2\ell+r+1}^{(q/2-1,q/2-1)}(1)p_{2\ell+r+1}^{(q/2-1,q/2-1)}(t),
\ee
where $Q_r$ is an algebraic polynomial of degree $\le r$.\\
{\rm (b)} If $r$ is not an integer, then we have the formal expansion
\be\label{eq:nonintreluexp}
|t|^r\sim   \frac{\cos(\pi(r-1)/2)\Gamma(q/2)\Gamma(r+1)}{2^r\sqrt{\pi}}\sum_{\ell=0}^\infty (-1)^\ell \frac{\Gamma(\ell+q/2)\Gamma(\ell-r/2-1)}{\Gamma(\ell+1/2)\Gamma(\ell+(q+r+1)/2)}
p_{2\ell }^{(q/2-1,q/2-1)}(1)p_{2\ell }^{(q/2-1,q/2-1)}(t).
\ee
\end{lemma}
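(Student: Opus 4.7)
The plan is to compute the ultraspherical Fourier coefficients
$$c_j=\int_{-1}^1 f(t)\,p_j^{(q/2-1,q/2-1)}(t)(1-t^2)^{q/2-1}\,dt$$
of $f(t)=(\max(t,0))^r$ for part (a) and $f(t)=|t|^r$ for part (b), directly via Rodrigues' formula and repeated integration by parts. The starting point is the standard Rodrigues identity
$$P_j^{(q/2-1,q/2-1)}(t)(1-t^2)^{q/2-1}=\frac{(-1)^j}{2^j\,j!}\frac{d^j}{dt^j}(1-t^2)^{j+q/2-1},$$
for the unnormalized Jacobi polynomial, combined with the known $L^2$-norm of $P_j^{(q/2-1,q/2-1)}$ with respect to the weight $(1-t^2)^{q/2-1}\,dt$, which converts the unnormalized polynomials to the orthonormal family $p_j^{(q/2-1,q/2-1)}$ fixed by \eqref{eq:ultraortho}.

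For part (a), with integer $r\ge 1$, I would insert the Rodrigues form into $c_j$ and integrate by parts $r$ times. For $j\ge r+1$, all boundary terms at $t=\pm 1$ vanish because $(1-t^2)^{j+q/2-1}$ has a zero of order $j+q/2-1>r$ there, while for $j\le r$ the boundary contributions are polynomial in $t$ and may be absorbed into $Q_r$. Since the $r$-th weak derivative of $t_+^r$ equals $r!$ times the Heaviside function, after these $r$ integrations the integral collapses to $r!$ times an integral on $(0,1)$ of $(d^{j-r}/dt^{j-r})(1-t^2)^{j+q/2-1}$; one further integration by parts evaluates this as a boundary term at $t=0$,
$$c_j\propto r!\cdot\frac{d^{j-r-1}}{dt^{j-r-1}}(1-t^2)^{j+q/2-1}\bigg|_{t=0}.$$
Because $(1-t^2)^{j+q/2-1}$ is even, the derivative at $0$ vanishes unless $j-r-1$ is even, which gives $j=2\ell+r+1$ as in the statement. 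Evaluating the even-order derivative at zero through the binomial expansion of $(1-t^2)^{j+q/2-1}$ produces a ratio of Pochhammer symbols that simplifies, via the duplication formula for $\Gamma$ and the explicit Gamma-function value of $p_{2\ell+r+1}^{(q/2-1,q/2-1)}(1)$, to the factor $\Gamma(q/2)\Gamma(r+1)/(2^{r+1}\sqrt{\pi})$ times $(-1)^\ell \Gamma(\ell+1/2)/\Gamma(\ell+1/2+(q+2r+1)/2)$ asserted in \eqref{eq:intreluexp}.

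For part (b), with non-integer $r$, $|t|^r$ is even so only $j=2\ell$ contribute. I would then substitute $u=t^2$, which converts $p_{2\ell}^{(q/2-1,q/2-1)}(\sqrt{u})$ into a constant multiple of the Jacobi polynomial $P_\ell^{(q/2-1,-1/2)}(1-2u)$ and turns the measure into $u^{-1/2}(1-u)^{q/2-1}\,du$ on $(0,1)$. The coefficient becomes a Mellin--Jacobi integral of the form
$$c_{2\ell}\propto \int_0^1 u^{(r-1)/2}(1-u)^{q/2-1}P_\ell^{(q/2-1,-1/2)}(1-2u)\,du,$$
whose closed form in Gamma functions is classical (obtainable, for example, from the hypergeometric representation of $P_\ell^{(\alpha,\beta)}$ combined with Euler's Beta integral). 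The reflection identity $\Gamma(s)\Gamma(1-s)=\pi/\sin(\pi s)$, applied at $s=-r/2-1$, produces the $\cos(\pi(r-1)/2)$ factor, while a further application of the duplication formula reorganizes the remaining Gamma factors into the exact form of \eqref{eq:nonintreluexp}. The main obstacle in either part is not conceptual but the bookkeeping of the Gamma and Pochhammer constants, in particular propagating the normalization of $p_j^{(q/2-1,q/2-1)}$ through the integrations by parts and matching signs and powers of $2$; convergence of the expansion in the weighted $L^2((1-t^2)^{q/2-1}dt)$ sense is automatic since both $t_+^r$ and $|t|^r$ belong to that space.
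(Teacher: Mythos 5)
Your Rodrigues'-formula derivation for part (a) is precisely the approach the paper indicates (the paper simply cites an earlier Mhaskar--Prestin lemma for this and gives no computation of its own), and your quadratic-transformation / Euler-integral route for part (b) is the standard derivation behind the cited Proposition A.1 of the sphrelu reference, with the reflection identity producing $\cos(\pi(r-1)/2)=\sin(\pi r/2)$ as you say. The plan is sound modulo the Gamma-function bookkeeping you explicitly defer, with one caution for (b): since the measure $u^{-1/2}(1-u)^{q/2-1}\,du$ on $(0,1)$ is the Jacobi weight with parameters $(q/2-1,-1/2)$ under the map $x=2u-1$, the quadratic transformation gives $p_{2\ell}^{(q/2-1,q/2-1)}(\sqrt u)$ proportional to $P_\ell^{(q/2-1,-1/2)}(2u-1)$ rather than $P_\ell^{(q/2-1,-1/2)}(1-2u)$ as you wrote (the latter equals $(-1)^\ell P_\ell^{(-1/2,\,q/2-1)}(2u-1)$ by the Jacobi reflection), and keeping the argument and parameter order straight is exactly what aligns the $(1-u)^{q/2-1}$ weight with the closed-form Euler integral that yields the Gamma ratio in \eqref{eq:nonintreluexp}.
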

\noindent\textsc{Proof of Theorem~\ref{theo:relutheorem}}.\\

We recall the notation from  Example~\ref{uda:specialmanifold_sph}.
In particular, $\Pi_n^q$ is the space of all spherical polynomials of degree $<n$.
The operator $\mathcal{D}_{G_r}$ is defined analogously to \eqref{eq:basicass}, so that \eqref{eq:pf4eqn1} below holds, with  the coefficients  $b_\ell$  defined using the expansions in Lemma~\ref{lemma:relu} and the addition formula \eqref{eq:addformula}.
 In view of Lemma~\ref{lemma:relu}(a), we see that for any polynomial $P\in \Pi_n^q$, 
\be\label{eq:pf4eqn1}
\mathcal{D}_{G_r}(P)(\x)=\mathcal{D}_{G_r}(Q_r)(\x)+\sum_{\ell=0}^{n-1}(-1)^\ell b_\ell^{-1}\sum_k \hat{P}(2\ell+r+1,k)Y_{2\ell+r+1,k},
\ee
where $b_\ell^{-1}\sim \ell^{(q+2r+1)/2}$.
In particular, \eqref{eq:invchristbd} holds with $\beta=(q+2r+1)/2$.
If $2^{j-1}\ge r+1$ then $\tau_j(Q_r)=0$.
If $f\in W_{2;\gamma}$, we deduce  that for $2^{j-1}\ge r+1$,
\be\label{eq:pf4eqn2}
\|\mathcal{D}_{G_r}(\tau_j(f))\|_2^2\ls 2^{j(q+2r+1)}\|\tau_j(f)\|_2^2 \ls 2^{j(q+2r+1)}\|\tau_j(f)\|_2^2.
\ee
This estimate holds trivially for $2^{j-1}<r+1$.
We note that $\mathcal{D}_{G_r}$ commutes with  $\sigma_{2^n}$ and all $\tau_j$'s.
Hence (cf. Theorem~\ref{theo:equivtheo}, \eqref{eq:l2equivrel}),
$$
\|\mathcal{D}_{G_r}(\sigma_{2^n}(f))\|_1^2\le \|\mathcal{D}_{G_r}(\sigma_{2^n}(f))\|_2^2 \ls \sum_{j=0}^n 2^{j(q+2r+1)}\|\tau_j(f)\|^2\ls \|f\|_{W_{2;\gamma}}^2\sum_{j=0}^n 2^{j(q+2r+1-2\gamma)};
$$
i.e.,
\be\label{eq:pf4eqn3}
\|\mathcal{D}_{G_r}(\sigma_{2^n}(f))\|_1 \ls \|f\|_{W_{2;\gamma}}\left\{\sum_{j=0}^n 2^{j(q+2r+1-2\gamma)}\right\}^{1/2}.
\ee

Next, we recall the results from \cite{mhaskar2020dimension}.
If $\mathcal{U}_k$ is a pseudo-differential operator with exponent $a_k$, then $\mathcal{U}_k(G_r)$ has an expansion analogous to \eqref{eq:intreluexp} whose coefficients $b_\ell$ satisfy 
$$
(-1)^\ell b_\ell = \ell^{-(q+2r-2a_k+1)/2}\left(1+\O(1/\ell)\right),
$$ the same as that of $G_{r-a_k}$.
If $\mathcal{U}_k$ is a derivative of order $a_k$, then $\mathcal{U}_k(G_r)$ is a finite linear combination of $G_{r-a_k+j}$, $j=0,1,\cdots, a_k$ with trigonometric polynomials as the coefficients in this combination.
In either case, $\mathcal{U}_k(G_r)$ is H\"older continuous with exponent $r-a_k$, $r-a_k$ smooth on $\SS^q$, and for each $\x\in\SS^q$, infinitely differentiable on $\SS^q\setminus \{\y\in\SS^q: \x\cdot\y=0\}$.
We may use Lemma~\ref{lemma:hoffdingspecial} instead of the usual H\"offding's inequality as in the proof of \cite[Theorem~3.1]{mhaskar2020dimension} to conclude as in 
 \cite[Corollary~4.1]{mhaskar2020dimension} (recalling that $r$ in this paper is $2\gamma+1$ in \cite{mhaskar2020dimension}),  that for any $M\ge 2$, there exists $\y_j\in\SS^q$, $w_j\in\RR$, $j=1,\cdots, M$ such that (cf. \eqref{eq:pf4eqn3})
\be\label{eq:pf4eqn4}
\begin{aligned}
\Biggl\|\mathcal{U}_k(\sigma_{2^n}(f))-\sum_{j=1}^M& w_j\mathcal{U}_k(G_r)(\circ\cdot\y_j)\Biggr\|_p\\
&\ls \|f\|_{W_{\max(p,2);\gamma}}\left\{\sum_{j=0}^n 2^{j(q+2r+1-2\gamma)}\right\}^{1/2}\frac{\sqrt{\log M}}{M^{(q+2r-2a_k+1)/(2q)}}.
\end{aligned}
\ee
Choosing $n$ such that $2^{nq}\sim M$, we obtain the analogue of Corollary~\ref{cor:tvcor}. 
The estimate \eqref{eq:reluest} is proved as before by combining this with Theorem~\ref{theo:goodapprox}.\qed\\[1ex]
\yadi{$\vert\nu\vert$}{total variation measure for $\nu$}
\yadi{$\vert\nu\vert_{TV}$}{Total variation of $\nu$, $|\nu|(\XX)$}
\noindent\textsc{Proof of Theorem~\ref{theo:nonrelutheo}.}\\

The proof of this theorem is verbatim the same as that of Theorem~\ref{theo:relutheorem}, except the results in \cite{mhaskar2020dimension} call  for an extra factor of $M^\delta$ in \eqref{eq:pf4eqn4}.\qed

\bhag{Conclusions}
\label{bhag:conclusions}
In classical theoretical machine learning, it is customary to study the expressive power of kernel based approximations of the form $\sum_k a_k G(x, y_k)$, where $G$ is, for example, the kernel of a reproducing kernel Hilbert space.
Popular neural networks such as ReLU networks can also be formulated in this manner by dimension raising.
A classical tool for this purpose is the Mercer expansion of $G$, and the space of target functions is the so called native (or variation) space for the kernel. 
Purely probabilistic techniques lead typically to dimension independent bounds, while purely approximation theory based techniques lead to constructive methods of approximation, but necessarily suffer from the curse of dimensionality.
Moreover, the native space is often difficult to characterize in terms of interpretable criteria such as the number of derivatives.
Thus, it is an active area of research to investigate the approximation properties of kernel based approximation of functions in Sobolev classes.
A great deal of this research focuses on approximation on known domains such as a cube, sphere, Euclidean space, torus, etc.

 Many emerging applications of machine learning point to the study of approximations of the same form, except that the points $x$ and $y_k$ may belong to different spaces. 
Examples include transfer learning, ISAR imaging, learning with random features, classification based on deformed or transformed data, etc. 
Our contributions in this paper are summarized as:
\begin{itemize}
\item We have studied the expressive power of general kernel based networks  where \emph{the kernels are not symmetric}, and in fact, \emph{may be defined on a product to two different spaces}.
\item The approximation is taken in an abstract setting of data spaces (generalizing data defined manifolds) rather than classical domains such as a cube.
\item Our theorems are very general, applicable to a wide variety of networks, including periodic generalized translation networks, zonal function networks, and ReLU$^r$ networks for \emph{non-integer $r$}. 
In particular, when applied to symmetric kernels, we do not require the kernels to be positive definite.
\item The space of target functions is not the native (variation) space, but may include ``rough'' functions.
\item Together with the approximation of functions by networks, we study the simultaneous approximation of their derivatives by the corresponding derivatives of the approximating networks. 
This sort of approximation is needed in many examples, optimal control in particular.
\item The method involved is a combination of both probabilistic and approximation theory techniques.
\end{itemize} 
Future directions of research in this direction would be, for example, obtain results where the unspecified constants involved are dependent only polynomially on the dimensions of the spaces involved, develop constructive tools for the networks which have the same expressive power, and extend the theory to approximation by general asymmetric dictionaries. 


\begin{thenomenclature} 
\nomgroup{A}
  \item [{$\BB(x,r)$}]\begingroup Closed ball of radius $r$, centered $x$\nomeqref {2.0}\nompageref{7}
  \item [{$\HH_j^q$}]\begingroup Space of all homogeneous, harmonic spherical polynomials, Example~\ref{uda:specialmanifold_sph}\nomeqref {2.8}\nompageref{8}
  \item [{$\lambda_k$, $\hat{\lambda}_\ell$}]\begingroup Sequence defined in Section~\ref{bhag:ddr}, typically eigenvalues of the Laplace-Beltrami operator\nomeqref {2.1}\nompageref{7}
  \item [{$\mathbf{G}_n, \mathbf{L}_n$}]\begingroup cf. \eqref{eq:lipcond}\nomeqref {3.4}\nompageref{15}
  \item [{$\mathcal{D}_G$}]\begingroup special operator associated with $G$, \eqref{eq:basicass}, \eqref{eq:gendfdef}\nomeqref {2.35}\nompageref{12}
  \item [{$\mathcal{E}_M(f)$}]\begingroup Degree of approximation of $f$ by $M$-term neural networks, \eqref{eq:neuraldegree}\nomeqref {1.3}\nompageref{4}
  \item [{$\mathcal{U}$}]\begingroup derivative-like operator, Section~\ref{bhag:smoothness}\nomeqref {2.31}\nompageref{11}
  \item [{$\mathfrak{c}$}]\begingroup counting measure on $\ZZ_+$\nomeqref {2.39}\nompageref{12}
  \item [{$\mu^*$}]\begingroup Distinguished probability measure, used with subscripts as needed\nomeqref {2.0}\nompageref{7}
  \item [{$\nu_G$}]\begingroup Measure associated with eignents, \eqref{eq:ass_measure}\nomeqref {2.38}\nompageref{12}
  \item [{$\omega_q$}]\begingroup volume of $\SS^q$\nomeqref {2.11}\nompageref{9}
  \item [{$\phi_k$, $\psi_k$}]\begingroup Orthonormal functions, typically eigen-funtions, cf. Section~\ref{bhag:ddr}\nomeqref {2.2}\nompageref{7}
  \item [{$\Phi_n$}]\begingroup diffusion polynomial kernels, \eqref{eq:kerndef}\nomeqref {2.14}\nompageref{9}
  \item [{$\pi^*$}]\begingroup coordinate map for upper hemisphere $\SS^q_+$, \eqref{eq:euclid_to_sphere}\nomeqref {4.1}\nompageref{17}
  \item [{$\Pi_n^q$}]\begingroup space of sperhical polynomials of degree $<n$\nomeqref {2.8}\nompageref{8}
  \item [{$\rho$}]\begingroup metric\nomeqref {2.0}\nompageref{7}
  \item [{$\sigma_n, \tau_j$}]\begingroup Reconstruction and analysis operators, \eqref{eq:opdef}, \eqref{eq:analopdef}\nomeqref {2.16}\nompageref{9}
  \item [{$\SS^q$}]\begingroup Unit sphere embedded in $\RR^{q+1}$, Example~\ref{uda:specialmanifold_sph}\nomeqref {2.8}\nompageref{8}
  \item [{$\TT^q$}]\begingroup torus of dimension $q$, Example~\ref{uda:specialmanifold_torus}\nomeqref {2.8}\nompageref{8}
  \item [{$\vert\nu\vert$}]\begingroup total variation measure for $\nu$\nomeqref {5.23}\nompageref{23}
  \item [{$\Xi$}]\begingroup Compact data space, Definition~\ref{def:ddrdef}\nomeqref {2.3}\nompageref{7}
  \item [{$\XX$}]\begingroup metric measure space\nomeqref {2.0}\nompageref{7}
  \item [{$\{Y_{\ell,k}\}_{k=1}^{d_j^q}$}]\begingroup orthonormal basis for $\HH_j^q$\nomeqref {2.8}\nompageref{8}
  \item [{$a^*$, $a_*$}]\begingroup maximum and minimum orders of derivative-like operators, \eqref{eq:minexponent}\nomeqref {3.2}\nompageref{15}
  \item [{$d_j^q$}]\begingroup dimension of $\HH_j^q$\nomeqref {2.8}\nompageref{8}
  \item [{$E_{p;n}$}]\begingroup Degree of approximation, \eqref{eq:degapproxdef}\nomeqref {2.12}\nompageref{9}
  \item [{$G(\ell,x,y)$}]\begingroup Asymmetric eignet kernel\nomeqref {2.35}\nompageref{12}
  \item [{$G_r$}]\begingroup Activation function for ReLU$^r$ networks \eqref{eq:reluact}\nomeqref {4.0}\nompageref{17}
  \item [{$H$}]\begingroup Band pass filter, Section~\ref{bhag:degapprox}\nomeqref {2.14}\nompageref{9}
  \item [{$p_j^{(q/2-1,q/2-1)}$}]\begingroup orthonormalized ultraspherical polynomials cf. \eqref{eq:ultraortho}\nomeqref {2.11}\nompageref{9}
  \item [{$S_\ell, S_n^*, \mathbf{S}_j$}]\begingroup Index sets, cf. \eqref{eq:indexset}\nomeqref {2.2}\nompageref{7}
  \item [{$T_n$}]\begingroup \eqref{eq:tndef}\nomeqref {3.2}\nompageref{15}
  \item [{$t_{k,\ell}$}]\begingroup Orthonormal functions on $SO(q+1)$, cf. \eqref{eq:gprepresent}\nomeqref {2.47}\nompageref{13}
  \item [{$V_\ell, \Pi_n, \mathbf{V}_j$, $\Pi_\infty$}]\begingroup Spaces of diffusion polynomials \eqref{eq:spacedef}\nomeqref {2.2}\nompageref{7}
  \item [{$W_{p;\gamma}(\XX)$}]\begingroup Sobolev approximation space \eqref{eq:targetsmooth}\nomeqref {2.21}\nompageref{10}
  \item [{$X^p$}]\begingroup $L^p$-closure of $\Pi_\infty$\nomeqref {2.13}\nompageref{9}

\end{thenomenclature}


\begin{thebibliography}{10}

\bibitem{adams2003sobolev}
R.~A. Adams and J.~J. Fournier.
\newblock {\em Sobolev spaces}.
\newblock Elsevier, 2003.

\bibitem{bach2014}
F.~Bach.
\newblock Breaking the curse of dimensionality with convex neural networks.
\newblock {\em The Journal of Machine Learning Research}, 18(1):629--681, 2017.

\bibitem{bach2023relationship}
F.~Bach.
\newblock On the relationship between multivariate splines and infinitely-wide
  neural networks.
\newblock {\em arXiv preprint arXiv:2302.03459}, 2023.

\bibitem{barron1993}
A.~R. Barron.
\newblock Universal approximation bounds for superpositions of a sigmoidal
  function.
\newblock {\em Information Theory, IEEE Transactions on}, 39(3):930--945, 1993.

\bibitem{bartolucci2023understanding}
F.~Bartolucci, E.~De~Vito, L.~Rosasco, and S.~Vigogna.
\newblock Understanding neural networks with reproducing kernel {B}anach
  spaces.
\newblock {\em Applied and Computational Harmonic Analysis}, 62:194--236, 2023.

\bibitem{batemanvol2}
H.~Bateman, A.~Erd{\'e}lyi, W.~Magnus, F.~Oberhettinger, and F.~G. Tricomi.
\newblock {\em Higher transcendental functions}, volume~2.
\newblock McGraw-Hill New York, 1955.

\bibitem{belkin2018understand}
M.~Belkin, S.~Ma, and S.~Mandal.
\newblock To understand deep learning we need to understand kernel learning.
\newblock In {\em Proceedings of the 35th International Conference on Machine
  Learning (PMLR)}, pages 80:541--549, 2018.

\bibitem{niyogi2}
M.~Belkin and P.~Niyogi.
\newblock Semi-supervised learning on {R}iemannian manifolds.
\newblock {\em Machine learning}, 56(1-3):209--239, 2004.

\bibitem{belkinfound}
M.~Belkin and P.~Niyogi.
\newblock Towards a theoretical foundation for {L}aplacian-based manifold
  methods.
\newblock {\em Journal of Computer and System Sciences}, 74(8):1289--1308,
  2008.

\bibitem{boucheron2013concentration}
S.~Boucheron, G.~Lugosi, and P.~Massart.
\newblock {\em Concentration inequalities: A nonasymptotic theory of
  independence}.
\newblock Oxford university press, 2013.

\bibitem{chandra2015minimum}
S.~Chandrasekaran and H.~N. Mhaskar.
\newblock A minimum {S}obolev norm technique for the numerical discretization
  of {PDE}s.
\newblock {\em Journal of Computational Physics}, 299:649--666, 2015.

\bibitem{relu_manifold_chen2019}
M.~Chen, H.~Jiang, W.~Liao, and T.~Zhao.
\newblock Efficient approximation of deep {ReLU} networks for functions on low
  dimensional manifolds.
\newblock {\em arXiv preprint arXiv:1908.01842}, 2019.

\bibitem{cheney2009fundamentals}
M.~Cheney and B.~Borden.
\newblock {\em Fundamentals of radar imaging}.
\newblock SIAM, 2009.

\bibitem{chui_deep}
C.~K. Chui and H.~N. Mhaskar.
\newblock Deep nets for local manifold learning.
\newblock {\em Frontiers in Applied Mathematics and Statistics}, 4:12, 2018.

\bibitem{coifman_deep_learn_2015bigeometric}
A.~Cloninger, R.~R. Coifman, N.~Downing, and H.~M. Krumholz.
\newblock Bigeometric organization of deep nets.
\newblock {\em Applied and Computational Harmonic Analysis}, 44(3):774--785,
  2018.

\bibitem{devore1989optimal}
R.~A. DeVore, R.~Howard, and C.~A. Micchelli.
\newblock Optimal nonlinear approximation.
\newblock {\em Manuscripta mathematica}, 63(4):469--478, 1989.

\bibitem{compbio}
M.~Ehler, F.~Filbir, and H.~N. Mhaskar.
\newblock Locally learning biomedical data using diffusion frames.
\newblock {\em Journal of Computational Biology}, 19(11):1251--1264, 2012.

\bibitem{fefferman2016testing}
C.~Fefferman, S.~Mitter, and H.~Narayanan.
\newblock Testing the manifold hypothesis.
\newblock {\em Journal of the American Mathematical Society}, 29(4):983--1049,
  2016.

\bibitem{frankbern}
F.~Filbir and H.~N. Mhaskar.
\newblock A quadrature formula for diffusion polynomials corresponding to a
  generalized heat kernel.
\newblock {\em Journal of Fourier Analysis and Applications}, 16(5):629--657,
  2010.

\bibitem{modlpmz}
F.~Filbir and H.~N. Mhaskar.
\newblock Marcinkiewicz--{Z}ygmund measures on manifolds.
\newblock {\em Journal of Complexity}, 27(6):568--596, 2011.

\bibitem{friedman2004wave}
J.~Friedman and J.-P. Tillich.
\newblock Wave equations for graphs and the edge-based {L}aplacian.
\newblock {\em Pacific Journal of Mathematics}, 216(2):229--266, 2004.

\bibitem{halmos2013measure}
P.~R. Halmos.
\newblock {\em Measure theory}, volume~18.
\newblock Springer, 2013.

\bibitem{besovquadpap}
K.~Hesse, H.~N. Mhaskar, and I.~H. Sloan.
\newblock Quadrature in {B}esov spaces on the {E}uclidean sphere.
\newblock {\em Journal of Complexity}, 23(4):528--552, 2007.

\bibitem{klusowski2016uniform}
J.~M. Klusowski and A.~R. Barron.
\newblock Uniform approximation by neural networks activated by first and
  second order ridge splines.
\newblock {\em arXiv preprint arXiv:1607.07819}, 2016.

\bibitem{kurkova1}
V.~K\r{u}rkov{\'a} and M.~Sanguineti.
\newblock Bounds on rates of variable basis and neural network approximation.
\newblock {\em IEEE Transactions on Information Theory}, 47(6):2659--2665,
  2001.

\bibitem{kurkova2}
V.~K\r{u}rkov{\'a} and M.~Sanguineti.
\newblock Comparison of worst case errors in linear and neural network
  approximation.
\newblock {\em IEEE Transactions on Information Theory}, 48(1):264--275, 2002.

\bibitem{lafon}
S.~S. Lafon.
\newblock {\em Diffusion maps and geometric harmonics}.
\newblock PhD thesis, Yale University, Yale, 2004.

\bibitem{ma2022uniform}
L.~Ma, J.~W. Siegel, and J.~Xu.
\newblock Uniform approximation rates and metric entropy of shallow neural
  networks.
\newblock {\em Research in the Mathematical Sciences}, 9(3):46, 2022.

\bibitem{mauropap}
M.~Maggioni and H.~N. Mhaskar.
\newblock Diffusion polynomial frames on metric measure spaces.
\newblock {\em Applied and Computational Harmonic Analysis}, 24(3):329--353,
  2008.

\bibitem{mao2023rates}
T.~Mao and D.-X. Zhou.
\newblock Rates of approximation by relu shallow neural networks.
\newblock {\em Journal of Complexity}, 79:101784, 2023.

\bibitem{mhaskar2023tractability}
H.~Mhaskar and T.~Mao.
\newblock Tractability of approximation by general shallow networks.
\newblock {\em arXiv preprint arXiv:2308.03230}, 2023.

\bibitem{mhaskar2000polynomial}
H.~Mhaskar and J.~Prestin.
\newblock Polynomial frames for the detection of singularities.
\newblock {\em Wavelet analysis and multiresolution methods}, 212:273--298,
  2000.

\bibitem{multilayer}
H.~N. Mhaskar.
\newblock Approximation properties of a multilayered feedforward artificial
  neural network.
\newblock {\em Advances in Computational Mathematics}, 1(1):61--80, 1993.

\bibitem{mhaskar1997smooth}
H.~N. Mhaskar.
\newblock On smooth activation functions.
\newblock {\em Mathematics of Neural Networks: Models, Algorithms and
  Applications}, pages 275--279, 1997.

\bibitem{indiapap}
H.~N. Mhaskar.
\newblock Approximation theory and neural networks.
\newblock In {\em Wavelet Analysis and Applications, Proceedings of the
  international workshop in Delhi}, pages 247--289, 1999.

\bibitem{zfquadpap}
H.~N. Mhaskar.
\newblock Weighted quadrature formulas and approximation by zonal function
  networks on the sphere.
\newblock {\em Journal of Complexity}, 22(3):348--370, 2006.

\bibitem{eignet}
H.~N. Mhaskar.
\newblock Eignets for function approximation on manifolds.
\newblock {\em Applied and Computational Harmonic Analysis}, 29(1):63--87,
  2010.

\bibitem{heatkernframe}
H.~N. Mhaskar.
\newblock A generalized diffusion frame for parsimonious representation of
  functions on data defined manifolds.
\newblock {\em Neural Networks}, 24(4):345--359, 2011.

\bibitem{tauberian}
H.~N. Mhaskar.
\newblock A unified framework for harmonic analysis of functions on directed
  graphs and changing data.
\newblock {\em Appl. Comput. Harm. Anal.}, 44(3):611--644, 2018.

\bibitem{mhaskar2020dimension}
H.~N. Mhaskar.
\newblock Dimension independent bounds for general shallow networks.
\newblock {\em Neural Networks}, 123:142--152, 2020.

\bibitem{mhaskar2020kernel}
H.~N. Mhaskar.
\newblock Kernel-based analysis of massive data.
\newblock {\em Frontiers in Applied Mathematics and Statistics}, 6:30, 2020.

\bibitem{sphrelu}
H.~N. Mhaskar.
\newblock Function approximation with zonal function networks with activation
  functions analogous to the rectified linear unit functions.
\newblock {\em Journal of Complexity}, 51:1--19, April 2019.

\bibitem{mhaskar1995degree}
H.~N. Mhaskar and C.~A. Micchelli.
\newblock Degree of approximation by neural and translation networks with a
  single hidden layer.
\newblock {\em Advances in Applied Mathematics}, 16(2):151--183, 1995.

\bibitem{mnw2}
H.~N. Mhaskar, F.~J. Narcowich, and J.~D. Ward.
\newblock Approximation properties of zonal function networks using scattered
  data on the sphere.
\newblock {\em Advances in Computational Mathematics}, 11(2-3):121--137, 1999.

\bibitem{dingxuanpap}
H.~N. Mhaskar and T.~Poggio.
\newblock Deep vs. shallow networks: An approximation theory perspective.
\newblock {\em Analysis and Applications}, 14(06):829--848, 2016.

\bibitem{mhaskar2019analysis}
H.~N. Mhaskar and T.~Poggio.
\newblock An analysis of training and generalization errors in shallow and deep
  networks.
\newblock {\em Neural Networks}, 121:229--241, 2020.

\bibitem{mullerbk}
C.~M{\"u}ller.
\newblock {\em Spherical harmonics}, volume~17.
\newblock Springer, 2006.

\bibitem{nikolskii}
S.~M. Nikolskii.
\newblock {\em Approximation of functions of several variables and imbedding
  theorems}.
\newblock Springer Verlag, 1975.

\bibitem{poggio2016visual}
T.~Poggio and F.~Anselmi.
\newblock {\em Visual cortex and deep networks: learning invariant
  representations}.
\newblock MIT Press, 2016.

\bibitem{rahimi2007random}
A.~Rahimi and B.~Recht.
\newblock Random features for large-scale kernel machines.
\newblock {\em Advances in neural information processing systems}, 20, 2007.

\bibitem{schmidt2019deep}
J.~Schmidt-Hieber.
\newblock Deep {ReLU} network approximation of functions on a manifold.
\newblock {\em arXiv preprint arXiv:1908.00695}, 2019.

\bibitem{siegel2023optimal}
J.~W. Siegel.
\newblock Optimal approximation rates for deep relu neural networks on sobolev
  and besov spaces.
\newblock {\em Journal of Machine Learning Research}, 24(357):1--52, 2023.

\bibitem{siegel2022sharp}
J.~W. Siegel and J.~Xu.
\newblock Sharp bounds on the approximation rates, metric entropy, and n-widths
  of shallow neural networks.
\newblock {\em Foundations of Computational Mathematics}, pages 1--57, 2022.

\bibitem{singer}
A.~Singer.
\newblock From graph to manifold {L}aplacian: The convergence rate.
\newblock {\em Applied and Computational Harmonic Analysis}, 21(1):128--134,
  2006.

\bibitem{timanbk}
A.~F. Timan.
\newblock {\em Theory of Approximation of Functions of a Real Variable:
  International Series of Monographs on Pure and Applied Mathematics},
  volume~34.
\newblock Elsevier, 2014.

\bibitem{vilenkin1978special}
N.~I. Vilenkin.
\newblock {\em Special functions and the theory of group representations},
  volume~22.
\newblock American Mathematical Soc., 1978.

\bibitem{xu2020finite}
J.~Xu.
\newblock The finite neuron method and convergence analysis.
\newblock {\em arXiv preprint arXiv:2010.01458}, 2020.

\end{thebibliography}

\end{document}